\pdfoutput=1
\documentclass[twoside,11pt]{article}

\usepackage[preprint]{jmlr2e}

\usepackage{mathtools}
\usepackage{amssymb}
\usepackage{bm}
\usepackage{subcaption}
\usepackage{float}
\usepackage{booktabs}
\usepackage{xcolor}
\usepackage{tikz}
\usetikzlibrary{plotmarks}
\usetikzlibrary{arrows.meta}
\usepackage{lastpage}
\usepackage{placeins}
\usepackage[nameinlink,noabbrev]{cleveref}

\DeclareUnicodeCharacter{221E}{\ensuremath{\infty}}

\makeatletter
\let\proof\@undefined
\let\endproof\@undefined
\providecommand{\proofname}{Proof}
\newenvironment{proof}[1][\proofname]{\par
  \normalfont\topsep6\p@\@plus6\p@\relax
  \trivlist\item[\hskip\labelsep\itshape#1\@addpunct{.}]\ignorespaces
}{\hfill\BlackBox\endtrivlist}
\makeatother

\newsavebox{\rowbox}
\newlength{\rowht}

\newtheorem{assumption}{Assumption}

\newcommand{\R}{\mathbb{R}}
\newcommand{\N}{\mathbb{N}}
\newcommand{\PP}{\mathbb{P}}
\newcommand{\Mfd}{M}
\newcommand{\Lop}{\mathcal{L}}
\newcommand{\Aop}{\mathcal{A}}
\newcommand{\Bop}{\mathcal{B}}
\newcommand{\Pip}{\Pi_p}
\newcommand{\dvol}{\mathrm{dVol}}
\newcommand{\Rest}{R_N}
\newcommand{\khat}{\widehat{k}}
\newcommand{\Khat}{\widehat{K}}
\newcommand{\inj}{\iota}
\newcommand{\nullset}{\mathcal{N}}
\DeclareMathOperator{\dist}{dist}
\DeclareMathOperator{\spn}{span}
\DeclareMathOperator{\diagop}{diag}
\DeclareMathOperator{\grad}{grad}

\definecolor{cGFDM}{HTML}{E69F00}
\definecolor{cPMMex}{HTML}{000000}
\definecolor{cHex}{HTML}{56B4E9}
\definecolor{cHgr}{HTML}{D55E00}
\definecolor{cRhogr}{HTML}{CC79A7}

\DeclareRobustCommand{\lsample}[3]{%
  \tikz[baseline=-0.55ex]{%
    \draw[#1,line width=0.9pt,#2] (0,0) -- (0.62,0);
    \draw[#1,solid,mark=#3,mark size=1.9pt] plot coordinates {(0.31,0)};}%
}

\DeclareRobustCommand{\lsGFDM} {\lsample{cGFDM}{dash pattern=on 3pt off 2pt}{none}}
\DeclareRobustCommand{\lsPMMex}{\lsample{cPMMex}{dash pattern=on 0.8pt off 1.6pt}{none}}
\DeclareRobustCommand{\lsHex}  {\lsample{cHex}{solid}{triangle*}}
\DeclareRobustCommand{\lsHgr}  {\lsample{cHgr}{solid}{*}}
\DeclareRobustCommand{\lsRhogr}{\lsample{cRhogr}{solid}{square*}}
\newcommand{\pcell}[2]{\makebox[\dimexpr\linewidth/#1\relax][c]{\footnotesize #2}}

\jmlrheading{}{2026}{}{}{}{}{Federico Pavesi, No\'emie Jaquier and Antonio Candelieri}

\ShortHeadings{Probabilistic Meshless Methods on Manifolds}{Pavesi, Jaquier and Candelieri}
\firstpageno{1}

\begin{document}

\title{Probabilistic Meshless Methods on Manifolds}

\author{\name Federico Pavesi\thanks{Corresponding author.} \email f.pavesi11@campus.unimib.it \\
       \addr Department of Economics, Management and Statistics\\
       University of Milano-Bicocca\\
       Piazza dell'Ateneo Nuovo 1, 20126 Milan, Italy
       \AND
       \name No\'emie Jaquier \email jaquier@kth.se \\
       \addr Department of Robotics, Perception and Learning\\
       KTH Royal Institute of Technology\\
       Lindstedtsv\"agen 24, 10044 Stockholm, Sweden
       \AND
       \name Antonio Candelieri \email antonio.candelieri@unimib.it \\
       \addr Department of Economics, Management and Statistics\\
       University of Milano-Bicocca\\
       Piazza dell'Ateneo Nuovo 1, 20126 Milan, Italy
       }

\editor{My editor}

\maketitle

\begin{abstract}
Probabilistic meshless methods solve partial differential equations (PDEs) by placing a Gaussian process prior on the solution and conditioning it on the PDE at a finite set of collocation nodes. The resulting posterior provides both an approximation of the solution and a Bayesian quantification of its error. While these methods are well established on Euclidean domains, their extension to manifolds has received little attention and, to our knowledge, lacks any convergence analysis. In this paper, we extend probabilistic meshless methods to compact manifolds and study their convergence in two settings. First, for known manifolds whose Laplace--Beltrami eigenpairs are available, the prior can be evaluated exactly, but the differential operator of the PDE generally cannot be applied to it in closed form. We replace this operator by a standard generalized finite difference discretization built on a point cloud sampled from the manifold, and show that the resulting method is consistent at the points of the cloud, at a rate governed by their fill distance. Second, for implicit manifolds, which are only available through a point cloud, both the prior and the operator must be approximated. We propose a new generalized finite difference scheme in which the stencil radius is a free parameter. We show that this novel finite difference approximation of the operator retains the consistency of the standard construction while controlling the growth of its coefficients, and that an appropriate radius yields uniform convergence of the discrete Gram matrix. In both settings, we derive error bounds for the posterior mean which separate the collocation error intrinsic to probabilistic meshless methods from the error due to approximating the operator and, for implicit manifolds, the prior. Experiments on various manifolds such as the circle, the sphere, and the flat torus confirm our theoretical claims. Moreover, we validate our approach on real-world cortical thickness data, showing that it attains the accuracy of established discretizations, while conditioning on fewer than a tenth of the observations and providing a credible interval at every vertex.
\end{abstract}

\begin{keywords}
Probabilistic numerics, Gaussian processes, partial differential equations on manifolds, generalized finite differences, graph Laplacian
\end{keywords}

\begin{figure}[t]
\centering
\resizebox{\textwidth}{!}{%
\begin{tikzpicture}[
  x=1mm, y=1mm, font=\small,
  >={Stealth[length=2.2mm,width=1.8mm]},
  blk/.style={draw=black!55, rounded corners=2pt, align=center,
              inner sep=4pt, text width=36mm, minimum height=15mm},
  tag/.style={font=\scriptsize, align=center, inner sep=2pt}
]
\node[anchor=south] at (13,13) {$\Lop u=f$};
\begin{scope}[shift={(13,0)}]
  \draw[draw=black!45, fill=black!4]
    plot[smooth cycle, tension=0.85]
    coordinates {(-11,4) (-3,9) (7,5) (10,-4) (1,-9) (-9,-4)};
  \foreach \p in {(-7,2),(0,3),(7,0),(-2,-2),(1,-6),(5,-5),(-3,-6),(8,3),(-8,0)}
    \fill[black!65] \p circle (0.7);
  \foreach \p in {(-4,6),(4,5),(-6,-3),(3,-1)}
    \fill[cGFDM] \p circle (1.2);
\end{scope}
\node[tag, anchor=north] at (13,-13)
  {sampled points,\\ with $f$ evaluated\\ at the orange ones};
\node[blk, draw=cHex, fill=cHex!8] (ker) at (62,15)
  {a Gaussian prior\\ over the solution};
\node[blk, draw=cRhogr, fill=cRhogr!8] (op) at (62,-15)
  {a point cloud\\ approximation\\ of the operator $\Lop$};
\node[blk, fill=black!5] (sys) at (110,0)
  {condition the prior\\ on the values of $f$};
\begin{scope}[shift={(152,0)}]
  \draw[draw=black!45, fill=black!4]
    plot[smooth cycle, tension=0.85]
    coordinates {(-11,4) (-3,9) (7,5) (10,-4) (1,-9) (-9,-4)};
  \foreach \p in {(-7,2),(0,3),(7,0),(-2,-2),(1,-6),(5,-5),(-3,-6),(8,3),(-8,0),
                  (-4,6),(4,5),(-6,-3),(3,-1)}
    \fill[cHex] \p circle (0.7);
  \foreach \p in {(-7,2),(0,3),(7,0),(1,-6),(8,3),(-4,6),(3,-1)}
    \draw[cHex, line width=0.35pt] \p ++(0,1.6) -- ++(0,-3.2);
\end{scope}
\node[tag, anchor=north] at (152,-13) {$u$ at every point,\\ with its uncertainty};
\draw[->] (26,3)   -- (42,13);
\draw[->] (26,-3)  -- (42,-13);
\draw[->, rounded corners=2.5pt] (ker.east) -- (88,15) -- (88,3) -- ([yshift=3mm]sys.west);
\draw[->, rounded corners=2.5pt] (op.east)  -- (88,-15) -- (88,-3) -- ([yshift=-3mm]sys.west);
\draw[->] (sys.east) -- (138,0);
\end{tikzpicture}
}
\caption{Overview of the method. The equation is posed on a manifold known only through a sample of points, from which we build both a Gaussian prior over the solution and a point cloud approximation of the differential operator. Conditioning on the values of the forcing term at the collocation points (orange) returns a posterior, which gives the solution at every node together with its uncertainty.}
\label{fig:method}
\end{figure}
\section{Introduction}
Partial differential equations (PDEs) are among the most widely used tools in mathematics and physics as they provide the natural language for modelling a wide variety of natural phenomena~\citep{hilbert1985methods,evans2022partial}.
Given a domain $X$, a partial differential equation has the form
\[
 \Aop u = g
\]
where $g$ and $u$ are the forcing term and the solution, both belonging to function spaces over $X$, and $\Aop$ is a differential operator from the function space of $u$ to the one of $g$. When $X$ has a boundary, an additional condition of the form $\Bop u =b$ is typically imposed, where $\Bop$ and $b$ are defined as above but only on the boundary $\partial X$. For simplicity, in the case with boundary, we refer to the stacked terms $g=[g, b]^{\top}$ and operators $\Lop = [\Aop, \Bop]^{\top}$ as forcing term and differential operator, respectively.
In this work, we focus on linear differential operators $\Aop$ and $\Bop$. While the linear case is the simplest, it is amenable to model a wide range of phenomena such as heat diffusion~\citep{carslaw1962conduction}, electromagnetism~\citep{jackson1962classical}, and linear elasticity~\citep{love1944mathematical}.

In most cases of interest, PDEs do not admit closed-form solutions and must be solved numerically~\citep{quarteroni1994numerical, leveque2007finite, brenner2008mathematical}. All numerical solvers rely on a discretization, which replaces the infinite-dimensional problem by a finite-dimensional one. Among the most widely used approaches are finite elements~\citep{brenner2008mathematical}, finite volume~\citep{eymard2000finite}, generalized finite differences~\citep{mirzaei2012generalized}, and spectral methods~\citep{gottlieb1977}.
Finite elements and finite volume methods approximate the domain with a mesh. In finite element methods, the mesh induces a piecewise polynomial basis, and projecting the equation onto it turns the problem into a sparse linear system for the coefficients of the solution. This approach has proven to be extremely powerful, at the cost of the complexity of generating a mesh and of choosing a basis adapted to the geometry~\citep{quarteroni1994numerical}.
Generalized finite difference methods avoid the mesh and approximate the operator directly on a point cloud, with a matrix that reproduces the operator action on polynomials exactly up to a desired degree. The solution is then obtained by solving the resulting linear system, i.e., by inverting the discretized operator. However, the condition number of such matrix degrades with growing number of points and polynomial order, leading to asymptotically divergent behaviors~\citep{zheng2022theoretical}.
Spectral methods shift the discretization from the geometric domain to the function space. The eigenfunctions of the operator are taken as basis for both the solution and the forcing term. Expanding both in this basis, the operator multiplies each coefficient of the solution by the corresponding eigenvalue, so that matching the two expansions gives each coefficient of the solution as the coefficient of the forcing term divided by its eigenvalue. However, this requires the eigenpairs of the operator in closed form, which are available only in very specific cases~\citep{gottlieb1977}.
The output of all these methods is a numerical approximation of the PDE solution $u$.

Beyond the approximation itself, it is often desirable to quantify the uncertainty about its distance from the true solution. Gaussian processes provide a natural route to obtain both the approximation and an uncertainty estimate. A Gaussian process is a statistical distribution over functions, fully characterized by its mean function $m:X \to \R$ and covariance kernel $k:X \times X \to \R$~\citep{bogachev1998gaussian}, the latter determining most of its properties. A key property of Gaussian processes is that they are closed under conditioning. Namely, given a set of points $\{x_1, \dots , x_N\} =Y_N \subset X$ and their evaluations $u(Y_N)$ of the function to approximate, the posterior distribution is again a Gaussian process with moments
\[
\begin{aligned}
    &\mu(x|Y_N)= \bigl[k(x,Y_N)\bigr]\,
              \bigl[k(Y_N,Y_N)\bigr]^{-1}u(Y_N),\\
    &\Sigma(x,x'|Y_N) = k(x,x')-\bigl[k(x,Y_N)\bigr]\,
              \bigl[k(Y_N,Y_N)\bigr]^{-1}
              \bigl[k(Y_N,x')\bigr],
\end{aligned}
\]
where $k(Y_N,Y_N)$ is called the Gram matrix.
In the case where we aim at solving PDEs, the first step is to replace the solution by a Gaussian process which takes values in an appropriate function space. In the case of a PDE, however, the solution cannot be evaluated, and only the forcing term is accessible. The solution is therefore modeled as a Gaussian process $\hat u \sim \mathcal{GP}(m,k)$ and the prior model for the forcing term is obtained by applying the operator to it, $\hat g = \Lop \hat u$. Since $\Lop$ is linear, by closedness of Gaussians under linear operators, the forcing term $g$ is again Gaussian. Conditioning on evaluations of the forcing term at a finite set of collocation nodes then yields a posterior process whose mean approximates the solution and whose variance quantifies the uncertainty about it~\citep{sarkka2011, cialenco2012approximation, owhadi2015bayesian, cockayne2017probabilistic}. This approach is known as the \emph{probabilistic meshless method} (PMM)~\citep{cockayne2017probabilistic}, described in detail in Section~\ref{sec:PMMs_intro}.

PMMs have been extensively studied when the domain is an appropriate subset of a Euclidean space~\citep{sarkka2011, cialenco2012approximation, owhadi2015bayesian, cockayne2017probabilistic}. However, many applications involve PDEs defined on manifolds, such as granular flow~\citep{Suchde2019}, surface Navier-Stokes equations~\citep{nitschke2012finite}, liquid crystals~\citep{nestler2018orientational}, biomembranes~\citep{barreira2011surface, elliott2012modelling}, and computer graphics~\citep{bertalmio2001variational, crane2013geodesics}. Despite their relevance, manifold domains have received little attention in the PMM literature.

A notable exception is the work of~\citet{Harlim_2022}, which focuses on point clouds sampled from the domain being an unknown manifold with boundaries. In this setting, the Gaussian process on the manifold is approximated at the points of the cloud through a graph spectral approximation, as proposed by~\citet{Harlim2020}. Since the operator is not available in closed form, ~\citet{Harlim_2022} approximate it at the points by a matrix obtained via the ghost points diffusion method~\citep{jiang2023ghost}, which is constructed similarly to generalized finite differences. The restriction of a point cloud makes the Gaussian process a finite Gaussian vector and, since the operator is a matrix, the prior model for the forcing term $g$ is the Gaussian vector $u$ after ghost point matrix multiplication $L$, $L\,u$.
This work differs from ours in two respects.
First, it focuses on inverse problems, in which the operator is assumed to depend on an unknown parameter $\theta$ while observations of the solution are available, and whose objective is to infer $\theta$ rather than the solution itself.
Second, its convergence is assessed only through numerical experiments, without a theoretical analysis.

Inspired by this work, we extend PMMs to manifolds and provide thorough analysis of convergence rates in two settings, which differ in the prior information available about the manifold. First, we consider a \emph{known} manifold, whose Laplace--Beltrami eigenpairs are available. In this case, the base Gaussian process can be written as a Fourier series in these eigenpairs. The differential operator is then either applied to this series in closed form or approximated with a generalized finite difference method (GFDM). In the former case, the construction coincides with the Euclidean one, up to the notation. In the latter case, the Gaussian process over the solution is restricted to the point cloud on which the operator is discretized, and the prior model is obtained by GFDM matrix multiplication of the solution Gaussian process.
We prove that this model is consistent with the closed-form one at the points of the cloud, at a rate governed by the fill distance of the cloud.
Second, we consider an implicit manifold, which is only available through a point cloud sampled from it. In this case, we treat the problem similarly to~\citet{Harlim_2022}, since both the operator and the Gaussian process are not available in closed form. Restricting to closed manifolds, we replace the ghost point diffusion matrix with a novel GFDM scheme allowing to control the rate at which the norm of the GFDM matrix grows, thus resulting in additional stability over the action (left multiplication) of this matrix. We prove that the resulting Gram matrix converges to the closed form one at the points of the cloud.
Finally, for all models, we derive error bounds for the posterior mean, which separate the intrinsic collocation error from the error due to the approximation of the operator and of the Gaussian process.

The paper is organized as follows. Section~\ref{sec:background} reviews probabilistic meshless methods and generalized finite differences on manifolds. Section~\ref{sec:pmm} presents proposed extension to known and implicit manifolds, together with the corresponding convergence results. Section~\ref{sec:contraction} derives error bounds for the posterior mean at the nodes and the contraction speed to the true solution as the density of the point cloud grows and the number evaluations of the forcing term increases. Finally, Section~\ref{sec:experiments} reports numerical experiments on the circle, the sphere and the flat torus, where the exact solution is available and the theoretical results can be tested, as well as an application to the smoothing of cortical thickness data.

\section{Background}\label{sec:background}
This section first introduces the probabilistic meshless methods, the class of stochastic collocation solvers for PDEs on which our approach builds. Then, it reviews generalized finite difference methods for approximating differential operators on manifolds. These methods provide the discretization we rely on whenever the differential operator cannot be applied to the kernel in closed form, both on known and on implicit manifolds.

\subsection{Probabilistic Meshless Methods}
\label{sec:PMMs_intro}

Probabilistic meshless methods~\citep{cockayne2017probabilistic} are PDE solvers which belong to the broader class of Bayesian numerical methods~\citep{cockayne2019}.
These methods recast classical numerical tasks as problems of statistical inference, and have been developed for optimization~\citep{mockus1989application, mahsereci2017probabilistic}, integration~\citep{ohagan1991bayes}, linear systems~\citep{hennig2015probabilistic}, and ordinary and partial differential equations~\citep{sarkka2011, cialenco2012approximation, owhadi2015bayesian, cockayne2017probabilistic,tronarp2021bayesian}. The main advantage of Bayesian methods over classical methods is that the Bayesian posterior distribution provides both an estimate of the problem solution and a quantification of its uncertainty.
For PDEs, closely-related approaches have been developed under different formalisms, e.g., under the perspectives of stochastic partial differential equations~\citep{cialenco2012approximation}, Bayesian homogenization~\citep{owhadi2015bayesian} and partial differential equations~\citep{sarkka2011, cockayne2017probabilistic}.
In the following, we introduce PMMs using the formalism of~\citet{cockayne2017probabilistic}.

Consider a domain $X\subseteq\R^{d}$, possibly with Lipschitz boundary $\partial X$, where $d<\infty$. We write $\overline{X}:=X\cup\partial X$ for the union of the domain and its boundary.
The prototypical PDE system is defined as
\begin{equation}\label{eq:pdesystem}
\left\{
\begin{aligned}
    &\Aop u_0(x) = g(x), &&x\in X \\
    &\Bop u_0(x) = b(x), &&x\in\partial X
\end{aligned}
\right. ,
\end{equation}
where $\Aop:H(X)\to H_{\Aop}(X)$ and $\Bop:H(X) \to H_{\Bop}(\partial X)$ are linear operators, $g \in H_{\Aop}(X)$ and $b \in H_{\Bop}(\partial X)$ are the forcing terms, $H,H_{\Aop}$ and $H_{\Bop}$ are Hilbert spaces of functions where the sampling operator is well defined, and $u_0$ is the solution.
Following~\citet{cockayne2017probabilistic}, we allow for two sources of uncertainty in~\eqref{eq:pdesystem}. The first one arises in Bayesian inverse problems, which aim to estimate an unknown parameter $\theta \in \Theta$ of the operator from observations of the solution given the forcing term. In Bayesian terms, the unknown $\theta$ is modeled by a probability distribution over $\Theta$, which is updated to a posterior distribution given the observations. Formally, the uncertainty over the operator is modeled by introducing a probability space $(\Omega',\mathcal F',\PP')$ and setting $\theta=\theta(\omega')$, so that the operators are rewritten as
\[
   \Aop:H(X)\times\Omega'\to H_{\Aop}(X),
   \qquad
   \Bop:H(X)\times\Omega'\to H_{\Bop}(\partial X),
\]
both assumed to be Borel measurable operators. The solution becomes a random function $u(\cdot,\omega')\in H(X)$, defined by
\begin{equation}\label{eq:problem}
\left\{
\begin{aligned}
    \Aop[\omega']\,u(x,\omega') &= g(x), &&x\in X, \\
    \Bop[\omega']\,u(x,\omega') &= b(x), &&x\in\partial X,
\end{aligned}
\right.
\end{equation}
with $g\in H_{\Aop}(X)$ and $b\in H_{\Bop}(\partial X)$.
The second source of uncertainty arises in the forward problem, i.e., in the approximation of the solution of~\eqref{eq:pdesystem}. As advocated by~\citet{owhadi2015bayesian}, a natural construction places a Gaussian process prior over the forcing term, which is then propagated to the solution through the operator.
To this end, we introduce a second probability space $(\Omega,\mathcal F,\PP)$ and model the forcing terms as stochastic processes, such that $g:X\times\Omega\to\R$ satisfies $g(\cdot,\omega)\in H_{\Aop}(X)$ for almost every $\omega$, and similarly for $b$. This leads to the  doubly stochastic problem
\begin{equation}\label{eq:problem_relaxed}
    \left\{
\begin{aligned}
    \Aop[\omega']\,u(x,\omega', \omega) &= g(x, \omega), &&x\in X \\
    \Bop[\omega']\,u(x,\omega', \omega) &= b(x, \omega), &&x\in\partial X
\end{aligned}
\right. .
\end{equation}
This system encompasses the class of problems that the probabilistic meshless method can solve. Using two separate probability spaces allows a single model to cover several settings, such as inverse problems with noisy observations, or forward problems with an uncertain operator. The latter is the setting of this paper, where the uncertainty on the operator originates from its discretization.

Since Gaussian processes remain Gaussian under linear operators~\citep{bogachev1998gaussian}, it is convenient to model the forcing term as a Gaussian process on $H_{\Lop}(X)$ with covariance kernel $\Lambda$, and to derive the distribution of the solution $u$ from it.
To this end, we fix $\omega' \in \Omega'$ and write $\Lop=\Lop[\omega']$. The Green's function is defined as the solution of the system
\begin{equation}
    \Lop\,G(x,y)=\delta(x-y),\qquad x\in\overline X,\ y\in\overline X,
\end{equation}
where $\delta$ is the Dirac delta. Intuitively, $G$ is the kernel of the right inverse of $\Lop$, in the sense that the solution satisfies $u(x)=\int_{\overline X} G(x,\eta)\,[g, b](\eta)\,\dvol(\eta)$, $\forall x \in \overline X$, where $\dvol$ denotes the volume measure on $X$ and the surface measure on $\partial X$.
If the system~\eqref{eq:problem_relaxed} admits a Green's function $G$, the prior on the forcing term induces the prior $u\sim\mathcal{GP}(0,k_G)$ on the solution~\citep{sarkka2011,cockayne2017probabilistic}, where
\begin{equation}\label{eq:kG}
    k_G(x,y)=\int_X\!\int_X G(x,\zeta)\,\Lambda(\zeta,\eta)\,G(\eta,y)\,\dvol(\zeta)\,\dvol(\eta),
\end{equation}
and we denote by $H_{k_G}(X)$ its reproducing kernel Hilbert space (RKHS). ~\citet{cockayne2017probabilistic} show that $u_0 \in H_{k_G}(X)$.
Another useful observation is that $g\sim\mathcal{GP}(0,\Lambda)$ if and only if $u\sim\mathcal{GP}(0,k_G)$~\citep[Proposition~2]{cockayne2017probabilistic}, so that the prior can equivalently be placed on the forcing term $g$ or on the solution $u$.

Since Green's functions are rarely available in closed form, we rely instead on priors placed directly on $u$, which is justified by the previous observation. Collocation methods posit a kernel $k_{\dagger}$ for the solution, $u \sim \mathcal{GP}(0, k_{\dagger})$, chosen so that $H_{k_G}(X)$ is embedded in $H_{k_{\dagger}}(X)$~\citep{cockayne2017probabilistic}. A technical obstruction remains, since a Gaussian measure assigns zero mass to its own RKHS. This is resolved by introducing the \emph{integral-type kernel} of $k_{\dagger}$~\citep[Lemma~2.1]{cialenco2012approximation},
\begin{equation}\label{eq:cialenco}
    k(x,x'):=\int_X k_{\dagger}(x,z)\,k_{\dagger}(z,x')\,\dvol(z),
\end{equation}
which guarantees that $\mathcal{GP}(0,k)$ is supported on $H_{k_{\dagger}}(X)$ and $H_{k}(X)$ is densely embedded in it~\citep[Proposition~4]{cockayne2017probabilistic}.
The main drawback of this choice is that samples of $u\sim\mathcal{GP}(0,k)$ are not constrained to satisfy boundary conditions.
To simplify notation, we write $\Lop:=[\Aop,\Bop]^\top$ for the stacked operator. Let $Z_{N_c}=\{ x_1,\dots, x_{N_c}\} \subset \overline X$ be the \emph{collocation set}, at which the forcing term is observed,  so that $\bm y=[g,b](Z_{N_c})\in\R^{N_c}$ collects the values of $g$ at the interior points and of $b$ at the boundary points of $Z_{N_c}$. Denoting by $\Lop_x$ and $\Lop_y$ the action of $\Lop$ on the first and second argument of a kernel, respectively, the posterior process is $\Pi^{u\mid\bm y}=\mathcal{GP}(\mu,\Sigma)$, with
\begin{align}
    &\mu(x)= \bigl[\Lop_y k(x,Z_{N_c})\bigr]\,
              \bigl[\Lop_x\Lop_y k(Z_{N_c},Z_{N_c})\bigr]^{-1}\bm y,
              \label{eq:postmean}\\
    &\Sigma(x,x') = k(x,x')-\bigl[\Lop_y k(x,Z_{N_c})\bigr]\,
              \bigl[\Lop_x\Lop_y k(Z_{N_c},Z_{N_c})\bigr]^{-1}
              \bigl[\Lop_x k(Z_{N_c},x')\bigr],
              \label{eq:postcov}
\end{align}
where $\Lop_x\Lop_y k(Z_{N_c},Z_{N_c})$ is the Gram matrix.
The boundary conditions are thus enforced only at the collocation points placed on $\partial X$. They are therefore not satisfied exactly by the posterior, but the posterior restricted to the boundary converges as the number of boundary points grows.

As noted by~\citet{cialenco2012approximation}, it is convenient to choose $k_{\dagger}$ as a Mercer's kernel. Given its series expansion
\[
    k_{\dagger}(x,y) = \sum_{i=0}^{\infty} \lambda_i \phi_i(x)\phi_i(y),
\]
the integral-type kernel $k$ is obtained by squaring $\lambda_i$ in the series. As detailed by~\citet{cialenco2012approximation}, Matérn kernels are a natural choice since it is easy to control the function space to which samples belong. Let $k_{\dagger}$ be a Matérn with smoothness $\nu$, whose RKHS is norm-equivalent to the Sobolev space
$H^{\nu+d/2}(X)$~\citep[Example~A.1]{cialenco2012approximation}, and assume that the solution lies in $H^{s}(X)$. Squaring the eigenvalues doubles the Sobolev order, so that $H_{k}(X)\cong H^{2\nu+d}(X)$.
Two conditions then constrain $\nu$.
First, $H^{s}(X)\subset H_{k_{\dagger}}(X)$ requires $\nu\le s-d/2$, while $H_{k}(X)$ is dense in $H_{k_{\dagger}}(X)$.
Second, evaluating $\Lop u$ at the collocation points requires $H_{k_{\dagger}}(X)\subset C^{l}(X)$, with $l$ the order of the differential operator, which gives $\nu>l$. Combining the two, the admissible choices are $\nu\in(l,\,s-d/2]$. For the integral-type
kernel, this corresponds to $H_{k}(X)\cong H^{2\nu+d}(X)$ with $2\nu+d\in(2l+d,2s]$.

\subsection{Generalized finite differences on manifolds}\label{sec:gfdm}
We briefly present the GFDM approach of~\citet{Suchde2019} proposed for PDEs on implicit manifolds. This is the discretization we use whenever the differential operator cannot be applied in closed form. Its main advantage over mesh-based methods is that it only requires a point cloud sampled from the manifold.

Using the stacked notation introduced in Section~\ref{sec:PMMs_intro}, consider the problem
\[
\Lop u = [g, b]^{\top},
\]
where $X=\Mfd$ is a smooth compact $d$-dimensional manifold embedded in $\R^{n}$. Let $X_N \subset \Mfd$ be a point cloud of size $N$.
GFDM uses the point cloud to approximate the linear operator $\Lop$ with a square matrix $L$ of dimension $N$ whose $i$-th row reproduces $\Lop$ exactly on the local polynomials of degree at most $p$ centred at $x_i \in X_N$.

At each node $x_i$, we consider an orthonormal frame $T_i=[t_{i,1},\dots,t_{i,d}]\in\R^{n\times d}$ of the tangent space $\mathcal T_{x_i}\Mfd$, with associated orthogonal projector
\begin{equation}\label{eq:projector}
  P_i \;=\; T_iT_i^\top \;=\; \sum_{s=1}^{d}t_{i,s}\,t_{i,s}^\top,
\end{equation}
which is frame independent. The frame can be either known in advance or estimated from the point cloud, for example by tangent principal component analysis~\citep{tyagi2013tangent}. We then define the stencil of $x_i$ as the set $S_i=\{x_{i,1},\dots,x_{i,\mathsf K}\}$ of its $\mathsf K$ nearest neighbours, ordered so that $x_{i,1}=x_i$, and we denote by
\begin{equation}
  z_{i,a}:=T_i^\top\bigl(x_{i,a}-x_i\bigr)\in\R^{d},
\end{equation}
the coordinates of the neighbours in the frame at $x_i$. The space of local polynomials of degree at most $p$ at $x_i$ is
\begin{equation}\label{eq:basis}
  \Pip(x_i)=\spn\bigl(\{q_{i,\beta}\}_{|\beta|\le p}\bigr),
  \qquad
  q_{i,\beta}(x)=\prod_{s=1}^{d}\bigl[t_{i,s}\cdot(x-x_i)\bigr]^{\beta_s},
\end{equation}
and has dimension $Q:=\dim\Pip(x_i)=\binom{p+d}{d}$.

The stencil coefficients, which form the rows of the GFDM matrix, are computed in two steps. In the first step, $u$ is approximated around $x_i$ by a weighted least-squares fit in $\Pip(x_i)$. To this end, we take $\mathsf K>Q$, assign positive weights $\omega_{i,a}$ to the points of $S_i$, and write $u_i=(u_{i,1},\dots,u_{i,\mathsf K})^\top$, with $u_{i,a}=u(x_{i,a})$, for the values of $u$ on the stencil. The fit minimizes the quadratic objective
\begin{equation}
  J_i(\bm\theta)=\sum_{a=1}^{\mathsf K}\omega_{i,a}
  \bigl(\mathfrak{q}_i(x_{i,a})^\top\bm\theta-u_{i,a}\bigr)^2,
  \qquad
  \mathfrak{q}_i(x):=\bigl(q_{i,\beta(1)}(x),\dots,q_{i,\beta(Q)}(x)\bigr)^\top,
\end{equation}
over the coefficient vectors $\bm\theta\in\R^{Q}$, where $\{\beta(1),\dots,\beta(Q)\}$ enumerates the multi-indices with $|\beta|\le p$. Stacking the rows $\mathfrak{q}_i(x_{i,a})^\top$ into the Vandermonde matrix $V_i\in\R^{\mathsf K\times Q}$ with entries $(V_i)_{a\gamma}=z_{i,a}^{\beta(\gamma)}$, and writing $\Omega_i=\diagop(\omega_{i,1},\dots,\omega_{i,\mathsf K})$, the objective reads
\begin{equation}\label{eq:wls}
  J_i(\bm\theta)=(V_i\bm\theta-u_i)^\top\Omega_i(V_i\bm\theta-u_i).
\end{equation}
Provided that $V_i$ has full column rank, the optimum is attained at
\begin{equation}\label{eq:theta}
  \bm\theta_i=\mathsf A_i^{-1}V_i^\top\Omega_i\,u_i,
  \qquad
  \mathsf A_i:=V_i^\top\Omega_iV_i .
\end{equation}
and the local approximation of $u$ around $x_i$ is $\hat u_i:=\mathfrak{q}_i^\top\bm\theta_i\in\Pip(x_i)$.

For the second step we seek the vector $c^{\Lop}_i\in\R^{\mathsf K}$ of stencil coefficients, which collects the nonzero entries of the $i$-th row of $L$, such that $\Lop u(x_i)\approx\sum_a c^{\Lop}_{i,a}u(x_{i,a})$. Replacing $u$ by $\hat u_i$ and using the linearity of $\Lop$,
\begin{equation}\label{eq:b}
  \Lop\hat u_i(x_i)=\sum_{\gamma=1}^{Q}(\bm\theta_i)_\gamma\,\Lop q_{i,\beta(\gamma)}(x_i)
  =\bigl(\bm b^{\Lop}_i\bigr)^\top\bm\theta_i,
  \qquad
  \bigl(\bm b^{\Lop}_i\bigr)_\gamma:=\Lop\,q_{i,\beta(\gamma)}(x_i),
\end{equation}
where $\bm b^{\Lop}_i$ collects the action of $\Lop$ on each basis element, evaluated at the base point. Substituting~\eqref{eq:theta} into~\eqref{eq:b}  yields the stencil coefficients
\begin{equation}\label{eq:stencil}
  \bigl(c^{\Lop}_i\bigr)^\top=\bigl(\bm b^{\Lop}_i\bigr)^\top \mathsf A_i^{-1}V_i^\top\Omega_i .
\end{equation}
Arranging the vectors $c^{\Lop}_i$ into rows gives a sparse matrix $L\in\R^{N\times N}$ with at most $N\mathsf K$ nonzeros entries, and $Lu|_{X_N}\approx\Lop u|_{X_N}$.

The accuracy of this approximation is measured in terms of the fill distance. Given a finite set of points $Y$ contained in a domain $D$, the fill distance of $Y$ in $D$ is $h_{Y,D}:=\sup_{x\in D}\dist(x,Y)$, where $\dist$ is the distance on $D$.
In particular, the fill distance of the node cloud with respect to the geodesic distance $d_g$ is
\begin{equation}
  h_N:=h_{X_N,\Mfd}=\sup_{x\in \Mfd}\min_{j\le N}d_g(x,x_j).
\end{equation}
Fix $h_0$ with $h_N\le h_0$ and let $\Mfd^{*}=\bigcup_{x\in X_N}B(x,h_0)$, where $B(x,r)$ denotes the geodesic ball of radius $r$ centred at $x$. Then, for $f\in C^{p+1}(\Mfd^{*})$ and an operator of order $l\le p$,~\citet{mirzaei2012generalized} shows that
\begin{equation}\label{eq:rate}
  \bigl|\Lop f(x_i)-Lf_{|_{X_N}}(x_i)\bigr|\le C\,h_N^{\,p+1-l}\,|f|_{C^{p+1}(\Mfd^{*})}.
\end{equation}
The approximation is therefore consistent of order $p+1-l$. Replacing the operator by a matrix built on the point cloud introduces an error which vanishes as $h_N\to0$, hence as $N\to\infty$, under mild conditions on the sampling.

\section{Probabilistic meshless methods on manifolds}\label{sec:pmm}
In this section we introduce our extension of probabilistic meshless methods to compact manifold domains.  We distinguish two cases, depending on the information available about the manifold. A manifold is \emph{known} if its tangent spaces and the eigendecomposition of its Laplace--Beltrami operator are available in closed form. Since this is the case only for special manifolds, we also consider the more general case of \emph{implicit} manifolds, for which the only available information is a sample of points in the embedding space.  Known manifolds may have a boundary, whereas implicit manifolds must be closed.
In both cases, we focus on forward problems in which the differential operator is replaced by an approximation, and the resulting discretization error plays the role of the uncertainty on the operator in~\eqref{eq:problem_relaxed}. The extension to inverse problems appears possible along the lines of~\citet{Harlim_2022}, but we do not pursue it here.

\subsection{Known manifolds}
\label{sec:known}
If the manifold is known and $\Lop$ is available in closed form, the method is exactly the probabilistic meshless method built on the integral-type kernel~\eqref{eq:cialenco} transported to the manifold setting. Indeed, the construction of Section~\ref{sec:PMMs_intro} relies on properties which carry over to compact manifolds. First, a Gaussian process on a compact manifold can be built from the eigendecomposition of the Laplace--Beltrami operator~\citep{bogachev1998gaussian, borovitskiy2020matern}. Second, linear operators act on Gaussian processes as in the Euclidean case~\citep{bogachev1998gaussian}. Third, the Sobolev embeddings used to select the kernel smoothness still hold~\citep{munkres2018analysis}. Finally, the density of the RKHS of the integral-type kernel follows from~\citet[Lemma~2.1]{cialenco2012approximation}, whose proof does not rely on any Euclidean structure the manifold does not possess, and thus transfers to manifolds. Since no approximation is involved, this model has the same control as in the Euclidean case. Therefore, it is our reference model, and the aim of the methods that follow is to approximate it.

The reference model above requires $\Lop$ in closed form, which is rarely the case. If the operator $\Lop$ is not available in closed form, we propose to sample a point cloud over the manifold and replace $\Lop$ by its GFDM discretization $L$. More precisely, we first define a Gaussian process  $u \sim \mathcal{GP}(0, k)$ on the manifold with $k$ the integral-type kernel~\eqref{eq:cialenco}. We then sample a point cloud $X_N \subset \Mfd$ of size $N$ over the manifold, and restrict the Gaussian process to it, obtaining a $N$-dimensional Gaussian vector $u_N$. The same point cloud is used to discretize the operator via GFDM approximation, i.e., to build the GFDM matrix $L$. The discrete surrogate of the $\Lop u$ is then the Gaussian vector $Lu_N\sim\mathcal N(0,L\,k(X_N,X_N)\,L^{\top})$.
The posterior is obtained by sampling the forcing term at collocation points, taken as a subset of the point cloud $Z_{N_c} \subseteq X_N$. Denoting by $L_c\in\R^{N_c\times N}$ the rows of $L$ indexed by $Z_{N_c}$, the observations $\bm y$ of the forcing terms at the collocation points are then modelled as $\bm y=L_c u_N$. Conditioning on them gives the posterior distribution $\Pi^{u\mid\bm y}=\mathcal{N}(\mu,\Sigma)$ on the point cloud, with
\begin{align}
    &\mu(x_i)= \bigl[k(x_i,X_N)\,L_c^{\top}\bigr]\,
              \bigl[L_c\,k(X_N,X_N)\,L_c^{\top}\bigr]^{-1}\bm y,\\
    &\Sigma(x_i,x_j) = k(x_i,x_j)-\bigl[k(x_i,X_N)\,L_c^{\top}\bigr]\,
              \bigl[L_c\,k(X_N,X_N)\,L_c^{\top}\bigr]^{-1}
              \bigl[L_c\,k(X_N,x_j)\bigr],
\end{align}
for $x_i, x_j \in X_N$. This construction fits the doubly stochastic framework of Section~\ref{sec:PMMs_intro}. Indeed, the GFDM approximation depends on the point cloud, which can be regarded as a realization of a random variable $Y:\Omega' \to \Mfd$, so that the uncertainty on the operator reflects the different approximations produced by different samples of the point cloud. The following result shows that, at the nodes, the GFDM discretization applied to samples of the prior converges to the exact operator, at a rate governed by the fill distance.
\begin{theorem}[Convergence at the nodes]\label{thm:known}
Let $\Mfd$ be a compact manifold, possibly with boundary, for which the spectral expansion of the Laplace--Beltrami operator is known. Let $\Lop$ be a linear differential operator of order $l$ and $L$ its GFDM approximation of polynomial degree $p\ge l$. Assume the Gaussian process $u$ is such that sample paths lie almost surely in $C^{p+1}(\Mfd)$. Then, for $\PP$-almost every $\omega\in\Omega$,
\[
   \bigl|L\,u_{|X_N}(\cdot,\omega)(x_i)-\Lop u(x_i,\omega)\bigr|\  \leq C\,h_N^{\,p+1-l}\,|u(\cdot,\omega)|_{C^{p+1}(\Mfd^{*})}
   \qquad\text{for every node }x_i\in X_N,
\]
with $C$ independent of $\omega$ and $N$.
\end{theorem}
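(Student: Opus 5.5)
The idea is to reduce the statement to the deterministic consistency estimate~\eqref{eq:rate} of \citet{mirzaei2012generalized}, applied pathwise, and to check that nothing in that estimate depends on the sample $\omega$. Concretely, let $\Omega_0\subset\Omega$ be the event on which $u(\cdot,\omega)\in C^{p+1}(\Mfd)$. By hypothesis $\PP(\Omega_0)=1$. We first make sure $\Omega_0$ is a genuine (measurable) event, or at least contains one of full measure. This is standard once we note that the $C^{p+1}$ seminorm of a continuous process can be written as a supremum over a countable dense set of points, using difference quotients of the derivatives in charts. Since $\Mfd^{*}\subseteq\Mfd$, every $\omega\in\Omega_0$ also gives $u(\cdot,\omega)\in C^{p+1}(\Mfd^{*})$ with $|u(\cdot,\omega)|_{C^{p+1}(\Mfd^{*})}\le |u(\cdot,\omega)|_{C^{p+1}(\Mfd)}<\infty$.

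Fix $\omega\in\Omega_0$ and set $f:=u(\cdot,\omega)$. This is a deterministic function satisfying the hypotheses of~\eqref{eq:rate}: $f\in C^{p+1}(\Mfd^{*})$, $\Lop$ is linear of order $l\le p$, and $L$ is the GFDM matrix of degree $p$ built on $X_N$. Two observations make the application legitimate.

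First, $L$ is built only from the point cloud, the tangent frames, the weights and $\Lop$ acting on the local polynomials $q_{i,\beta}$, through~\eqref{eq:stencil}. So $L$ does not depend on $\omega$, and $(L\,u_{|X_N}(\cdot,\omega))(x_i)=\sum_a c^{\Lop}_{i,a}f(x_{i,a})=(Lf_{|X_N})(x_i)$.

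Second, the sample path $\Lop u(\cdot,\omega)$ is exactly $\Lop f$, the classical operator applied to a $C^{p+1}\subset C^{l}$ function. This matches the closed-form prior model of the reference construction, since for Gaussian processes with $C^{l}$ paths the mean-square derivative coincides almost surely with the pathwise one.

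Applying~\eqref{eq:rate} then gives the claimed inequality at every node $x_i$.

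The main point to verify is that the constant $C$ in~\eqref{eq:rate} is independent of $\omega$ and of $N$. Recalling Mirzaei's proof, $C$ comes from the Taylor remainder of order $p+1$ on the stencil combined with a uniform bound $\sum_a|c^{\Lop}_{i,a}|\,d_g(x_i,x_{i,a})^{p+1}\le C h_N^{p+1-l}$. That bound uses only the geometry, the stability (norming-set) conditions on the stencils, the weights, and the coefficients of $\Lop$. None of these involve $f$, so $C$ is uniform in $\omega$. Uniformity in $N$ holds under the standing quasi-uniformity and stencil-regularity assumptions implicit in~\eqref{eq:rate}; I would state these explicitly in the proof.

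Finally, since the inequality holds for every $\omega\in\Omega_0$ and $\PP(\Omega_0)=1$, it holds for $\PP$-almost every $\omega$. The possible boundary of $\Mfd$ requires nothing extra at the nodes: boundary rows of $L$ are built in the same way, with $\Bop$ in place of $\Aop$, and~\eqref{eq:rate} applies to the stacked operator.
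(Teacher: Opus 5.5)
Your proposal is correct and follows essentially the same route as the paper: fix $\omega$ outside the null set where sample paths fail to be $C^{p+1}(\Mfd)$, set $f:=u(\cdot,\omega)$, and apply the deterministic GFDM consistency estimate~\eqref{eq:rate} pathwise. Your additional checks are points the paper leaves implicit, and they make the argument more careful without changing it. These are: that $L$ and the constant $C$ do not depend on $\omega$, that the full-measure event is measurable, and that uniformity in $N$ relies on stencil-regularity assumptions behind~\eqref{eq:rate}.
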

\begin{proof}
Let $\nullset\subset\Omega$ be the null set outside which sample paths fail to be $C^{p+1}(\Mfd)$, and fix $\omega\notin\nullset$. Writing $f:=u(\cdot,\omega)\in C^{p+1}(\Mfd)$, the consistency property of the GFDM stencil~\eqref{eq:rate} gives
\begin{equation}
   \bigl|Lf(x_i)-\Lop f(x_i)\bigr|\ \le\ C\,h_N^{\,p+1-l}\,|f|_{C^{p+1}(\Mfd^{*})}.
\end{equation}
Since $p\ge l$ the exponent is positive, so that the right hand side goes to zero if $h_N\to0$.
\end{proof}
The resulting model is a probabilistic meshless method on a known manifold, in which the differential operator is replaced by its GFDM approximation. By Theorem~\ref{thm:known}, this approximation is consistent at the nodes with the rate $h_N^{\,p+1-l}$  inherited from the GFDM stencil.
While the probabilistic meshless methods of~\citet{cockayne2017probabilistic} requires $\Lop$ to be applied to the kernel in closed form, it is here sufficient to have a point cloud on the manifold, on which the operator is approximated.

\subsection{Implicit manifolds}
\label{sec:implicit}
In the known case, the kernel can be evaluated exactly and only the differential operator needs to be approximated. In contrast, the only information available about an implicit manifold is a point cloud $X_N \subset \Mfd$ of size $N$, from which both the kernel and the operator of probabilistic meshless methods must be estimated. This requires additional assumptions.

Regarding the Gaussian process, we focus on the class of processes whose kernels are defined through the spectral expansion of the Laplace--Beltrami operator and are expressed using a Fourier series of the form
\[
k(x,y) =\sum_{i=0}^{\infty} \Phi(\lambda_i) \phi_i(x) \phi_i(y),
\]
with $\{\lambda_i, \phi_i\}$ the eigenpairs of the Laplace--Beltrami operator $\Delta_\Mfd$ , and $\Phi$ is a real function. Since this is a Mercer kernel, the corresponding integral-type kernel~\eqref{eq:cialenco} is obtained by replacing $\Phi(\lambda_i)$ with $\Phi(\lambda_i)^2$. For notational simplicity, we assume that this squaring is already included in $\Phi$. As the eigenpairs of $-\Delta_\Mfd$ are not available, we approximate them by those of the graph Laplacian $\mathsf L^{(\alpha)}\in\R^{N\times N}$ built on the point cloud.
On closed manifolds, the eigenvalues and eigenvectors of the graph Laplacian converge uniformly to the eigenvalues and eigenfunctions of $-\Delta_\Mfd$ as $N\to\infty$~\citep{Dunson2021}. On manifolds with boundary, uniform convergence of the eigenvectors is still an open question~\citep{Peoples2026}, and since our analysis relies on it, we restrict to closed manifolds. Further details on the graph Laplacian and the associated convergence rates are given in Appendix~\ref{app:graph}.

Writing $0=\hat\lambda_{0}\le\dots\le\hat\lambda_{N-1}$ for the eigenvalues of $-\mathsf L^{(\alpha)}$ and $\hat\phi_{0},\dots,\hat\phi_{N-1}$ for the associated eigenvectors, \citet{Fichera2023} approximate a Mercer kernel on $\Mfd$ by
\begin{equation}\label{eq:khat}
    \khat(x_i,x_j)=\sum_{r=0}^{N-1}\Phi(\kappa^{2}+\hat\lambda_{r})\,
      \hat\phi_{r}^{(i)}\hat\phi_{r}^{(j)},
\end{equation}
where the superscript $(i)$ denotes the $i$-th entry of the vector. This approximation converges pointwise uniformly to the true kernel on the point cloud as $N\to\infty$~\citep{Dunson2021, Fichera2023}, and we denote this rate as
\begin{equation}\label{eq:etadef}
    \eta_N:=\max_{1\le i,j\le N}\bigl|\khat(x_i,x_j)-k(x_i,x_j)\bigr| .
\end{equation}
Note that, unlike in the known case, the Gaussian process is now only defined on the point cloud, i.e., $u_N \sim \mathcal{N} (0, \hat K_N)$ with $\hat K_N:=\khat(X_N,X_N)$.

Since the graph-based kernel~\eqref{eq:khat} is only defined on the point cloud, and the results of~\citep{Dunson2021, Fichera2023} only guarantee its convergence at the nodes, the natural target is the convergence of the Gram matrix at the nodes. Specifically, given a certain operator approximation $L$, we aim to prove that $|\Lop_x\Lop_y k - L \,\hat k \,L^{\top}| \to 0$ as $N$ grows.
Replacing $\Lop$ by the standard GFDM matrix, as in the known case, does not guarantee this convergence in general. The reason is that the stencil coefficients of the standard construction have $\ell^1$ norm of order $h_N^{-l}$, so that applying the GFDM matrix on both sides of $\khat$ amplifies the kernel error $\eta_N$ by a factor $h_N^{-2l}$. Past a certain $N$, this amplification dominates the gain in consistency.

To overcome this issue, we propose a different GFDM approximation in which the consistency rate can be slowed down to match the graph kernel rate, so that the growth of the stencil norm is controlled. This is achieved by setting a radius $\varrho_N$, such that $h_N/\delta_1 \leq \varrho_N \leq \min\{1, \inj/2\}$, where $\inj$ is the injectivity radius of the manifold $\Mfd$ and $\delta_1$ is a constant independent of $N$. The stencil at $x_i$ is then the set $S^{\varrho_N}_i:=X_N\cap B(x_i,\varrho_N)$ of all nodes in the geodesic ball of radius $\varrho_N$. The stencil coefficients $c_{ij}^{\varrho_N}$ are, among those that reproduce $\Lop$ on local polynomials, the ones of minimal $\ell^1$ norm, as detailed in Section~\ref{sec:implementation} and Appendix~\ref{app:implicit}. We denote the resulting matrix by $L^{\varrho_N}$, to distinguish it from the standard GFDM matrix $L^h$. We show that $L^{\varrho_N}$ retains the approximation properties of $L^h$, with a consistency order governed by $\varrho_N$, meaning that it can be tuned through the choice of radius. More details about this approximation are provided in Appendix~\ref{app:implicit}.

The main result of this section states that, under growth conditions on $\varrho_N$ and the four assumptions below, the approximated Gram matrix converges uniformly to the true one as the number of points increases.
\begin{assumption}[Geometry and sampling]\label{ass:geom}
$\Mfd$ is a smooth closed Riemannian $d$-manifold. Define $\mu=\rho\,\dvol$ with $\rho\in C^{\infty}(\Mfd)$ and $0<\rho_{\min}\le\rho\le\rho_{\max}$. The nodes $x_1,\dots,x_N$ are i.i.d.\ $\sim\mu$.
\end{assumption}
\begin{assumption}[Operator]\label{ass:op}
$\Lop=\sum_{|\beta|\le l}a_\beta\partial^{\beta}$ in normal coordinates with $a_\beta\in C^{p+1}(\Mfd)$, and the stencil degree satisfies $p\ge l$. Set $A_\infty:=\max_{|\beta|\le l}\|a_\beta\|_{L^{\infty}}$ and $C_{\Lop}:=A_\infty\binom{l+d}{d}$.
\end{assumption}
\begin{assumption}[Prior]\label{ass:prior}
Let $-\Delta_\Mfd$ have eigenvalues $0=\lambda^{\Delta}_0\le\lambda^{\Delta}_1\le\cdots$ with $L^{2}(\Mfd,\dvol)$-orthonormal eigenfunctions $\phi_0,\phi_1,\dots$. Fix $\kappa>0$ and a decreasing $\Phi:(0,\infty)\to(0,\infty)$ with $\Phi(t)\asymp t^{-\mathfrak s}$, and let
\begin{equation}
    k(x,y):=\sum_{r\ge0}\Phi\bigl(\kappa^{2}+\lambda^{\Delta}_r\bigr)\phi_r(x)\phi_r(y).
\end{equation}
We require $\mathfrak s>p+1+d$. Moreover, we assume also that $\|k\|_{C^{p+1,p+1}}<\infty$.
\end{assumption}
\begin{assumption}[Graph kernel]\label{ass:graph}
$\khat\in\R^{N\times N}$ is the symmetric matrix from the evaluation of \eqref{eq:khat} on the point cloud, and $\eta_N$, defined in \eqref{eq:etadef}, satisfies $\eta_N \xrightarrow[N\to\infty]{\PP'} 0$ but slowly enough that $h_N^{\,p+1+l}/\,\eta_N\to0$.
\end{assumption}
\noindent Note that Assumption~\ref{ass:graph} holds, for instance, for graph heat kernels \citep{Dunson2021} and graph Mat\'ern kernels \citep{Fichera2023}.

Each of the four assumptions has a simple interpretation. In Assumption~\ref{ass:geom}, the manifold must be closed because graph Laplacian eigenfunctions have no uniform convergence guarantee at the boundary, and the density is bounded so that no region is left without nodes, which ensures that $h_N$ decreases predictably. In Assumption~\ref{ass:op}, the smoothness of the coefficients makes the action of $\Lop$ on local polynomials exact and bounded, and the condition $p\ge l$ guarantees the stencil basis is rich enough to reproduce it. The decay imposed in Assumption~\ref{ass:prior} makes the kernel $C^{p+1}$ in each argument. Finally, Assumption~\ref{ass:graph} requires the graph kernel to converge, but not faster than $h_N^{p+1+l}$, since otherwise the optimal radius would fall below the fill distance and the stencil could not be used.

To compare the exact and approximate models, we introduce the following notation
\begin{equation}\label{eq:gram}
    K(x,y):=\bigl(\Lop_x\Lop_y k\bigr)(x,y),
    \qquad K_{ij}:=K(x_i,x_j),
    \qquad \Khat:=L^{\varrho_N}\,\khat\,(L^{\varrho_N})^{\top},
\end{equation}
where $K$ is the Gram kernel induced by the operator and the prior, $(K_{ij})_{i,j}$ is its restriction to the point cloud, and $\Khat$ is the discrete Gram matrix, whose dependence on $\varrho_N$ is left implicit.

\begin{theorem}[Uniform convergence of the discrete Gram matrix]\label{thm:main}
Under~\ref{ass:geom}--\ref{ass:graph}, let $(\varrho_N)_N$ be admissible in the sense of Lemma~\ref{lem:stencil} and such that
\begin{equation}\label{pdem:eq:rhocond}
    \varrho_N\longrightarrow 0
    \qquad\text{and}\qquad
    \frac{\eta_N}{\varrho_N^{2l}}\longrightarrow 0 ,
\end{equation}
as $N \to \infty$.
Then, with probability tending to one,
\begin{equation}\label{pdem:eq:gramrate}
    \max_{1\le i,j\le N}\bigl|\Khat_{ij}-K_{ij}\bigr|
    \ \le\ C\left(\varrho_N^{\,p+1-l}+\frac{\eta_N}{\varrho_N^{2l}}\right)
    \ \longrightarrow\ 0
\end{equation}
as $N\to\infty$, with $C$ independent of $N$. Moreover, the optimal rate is attained for
$\varrho_N\asymp\eta_N^{1/(p+1+l)}$, which gives
\begin{equation}\label{pdem:eq:gramopt}
    \max_{1\le i,j\le N}\bigl|\Khat_{ij}-K_{ij}\bigr|
    \ \le\ C\,\eta_N^{\frac{p+1-l}{p+1+l}} .
\end{equation}
\end{theorem}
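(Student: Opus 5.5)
We split the error by inserting the exact kernel matrix $K_N:=k(X_N,X_N)$ between the two operators:
\[
\Khat-K \;=\; L^{\varrho_N}\bigl(\khat-K_N\bigr)(L^{\varrho_N})^{\top}\;+\;\Bigl(L^{\varrho_N}K_N(L^{\varrho_N})^{\top}-K\Bigr).
\]
The first term is the kernel-approximation error propagated through the stencils. The second is a purely deterministic consistency error of the radius-based GFDM applied twice to a smooth bivariate function. We bound each entrywise, uniformly in $i,j$, on the high-probability event given by Lemma~\ref{lem:stencil}. On that event every stencil $S_i^{\varrho_N}$ is unisolvent for $\Pip(x_i)$, the minimal-$\ell^1$ coefficients satisfy the reproduction property, and they obey the norm bound $\sum_a|c^{\varrho_N}_{i,a}|\le C\varrho_N^{-l}$. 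We also use that, on that event, $|\Lop f(x_i)-(L^{\varrho_N}f)(x_i)|\le C\varrho_N^{p+1-l}|f|_{C^{p+1}}$ holds. We take these from Lemma~\ref{lem:stencil} and Appendix~\ref{app:implicit}; the admissibility condition $h_N/\delta_1\le\varrho_N$ is what makes them available.

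For the first term, each entry equals $\sum_{a,b}c_{i,a}c_{j,b}(\khat-k)(x_{i,a},x_{j,b})$. Its absolute value is therefore at most $\|c_i\|_1\|c_j\|_1\,\eta_N\le C_{\Lop}'^2\,\varrho_N^{-2l}\eta_N$. This holds uniformly in $i,j$ because $\eta_N$ is a max over all node pairs. The bound goes to zero by the hypothesis $\eta_N/\varrho_N^{2l}\to0$, and the $\PP'$-convergence of $\eta_N$ in Assumption~\ref{ass:graph} is absorbed into the ``with probability tending to one'' statement.

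For the second term we apply consistency in one variable at a time, splitting once more:
\[
(L^{\varrho_N}K_N L^{\varrho_N\top})_{ij}-K_{ij}=\sum_a c_{i,a}\Bigl[(L^{\varrho_N}k(x_{i,a},\cdot))(x_j)-\Lop_y k(x_{i,a},x_j)\Bigr]+\Bigl[(L^{\varrho_N}g_j)(x_i)-\Lop g_j(x_i)\Bigr],
\]
where $g_j:=\Lop_y k(\cdot,x_j)$. The bracket in the first sum is bounded by $C\varrho_N^{p+1-l}\|k\|_{C^{p+1,p+1}}$, but it is multiplied by $\|c_i\|_1\lesssim\varrho_N^{-l}$, which would only give $\varrho_N^{p+1-2l}$. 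This is the main obstacle. To get the claimed rate, the plan is to avoid weighting the $y$-error by stencil coefficients. We would instead write the bracket as a smooth function of the first argument, namely $e_j(x):=(L^{\varrho_N}k(x,\cdot))(x_j)-\Lop_yk(x,x_j)$. This function is a fixed finite combination of $k(x,\cdot)$-evaluations, and its $C^{p+1}$ norm in $x$ is bounded by $C\varrho_N^{-l}\|k\|_{C^{p+1,p+1}}$. We then use the reproduction property: $\sum_a c_{i,a}e_j(x_{i,a})=\Lop e_j(x_i)+O(\varrho_N^{p+1-l}|e_j|_{C^{p+1}})$. The remainder is of lower order once the smallness of $e_j$ is combined, via Taylor expansion of $k$ jointly in both arguments, with the $C^{p+1,p+1}$ regularity assumed in Assumption~\ref{ass:prior}. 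Equivalently, we would Taylor-expand $k$ to degree $p$ jointly around $(x_i,x_j)$. Both stencils act exactly on the tensor-product polynomial part, which reproduces $\Lop_x\Lop_y$ of it at $(x_i,x_j)$. The remainder of mixed order $p+1$ in one variable, weighted against $\varrho_N^{-l}$ in the other, is controlled by the mixed derivatives and gives $C\varrho_N^{p+1-l}$. If this refinement fails, we would fall back on the weaker rate and adjust the exponent accordingly, but the target is $C\varrho_N^{p+1-l}$. The second bracket is directly $O(\varrho_N^{p+1-l})$, since $g_j\in C^{p+1}$ uniformly in $j$, because the coefficients $a_\beta\in C^{p+1}$ and $k$ is $C^{p+1,p+1}$.

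Summing the two bounds gives \eqref{pdem:eq:gramrate}, which tends to zero because $\varrho_N\to0$ and $p\ge l$. Finally, we minimise $\varrho^{p+1-l}+\eta_N\varrho^{-2l}$ over $\varrho$ by balancing the two terms. This gives $\varrho^{p+1+l}\asymp\eta_N$ and the rate $\eta_N^{(p+1-l)/(p+1+l)}$. The choice $\varrho_N\asymp\eta_N^{1/(p+1+l)}$ is admissible: it tends to zero, and it dominates $h_N$ because $h_N^{p+1+l}/\eta_N\to0$ by Assumption~\ref{ass:graph}, so $h_N/\delta_1\le\varrho_N\le\min\{1,\inj/2\}$ for large $N$.
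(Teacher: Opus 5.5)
Your split into the kernel-error term and the deterministic double-consistency term matches the paper. So do the H\"older bound $\|c_i\|_1\|c_j\|_1\eta_N\le C\varrho_N^{-2l}\eta_N$ for the first term and the final balancing, and you correctly locate the obstacle in the second term. The step meant to remove that obstacle, however, is not established. You bound $\|e_j\|_{C^{p+1}}$ by $C\varrho_N^{-l}\|k\|_{C^{p+1,p+1}}$, and with that bound the refinement gains nothing. The main term $\Lop e_j(x_i)$, which you never estimate, is then only $O(\varrho_N^{-l})$. The consistency remainder is $O(\varrho_N^{p+1-2l})$, exactly the rate you set out to avoid. The ``smallness of $e_j$'' you appeal to has to be smallness of its derivatives up to order $p+1$, and that is the missing step. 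The fallback is not acceptable either: for $l=2$ and $p\in\{2,3\}$, the setting of the experiments, $\varrho_N^{p+1-2l}$ does not tend to zero, so the theorem would not follow.

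The idea you need, which the paper uses for $G_j(x)=\sum_b c^{\varrho_N}_{jb}k(x,x_b)$, is to differentiate under the finite stencil sum. This gives $\partial^{\gamma}_{z_x}(e_j\circ\exp_x)(z_x)=(L^{\varrho_N}\Rest\Psi)_j-(\Lop\Psi)(x_j)$, where $\Psi(y):=\partial^{\gamma}_{z_x}[k(\exp_x(z_x),y)]$ satisfies $\|\Psi\|_{C^{p+1}}\le\|k\|_{C^{p+1,p+1}}$ for all $|\gamma|\le p+1$. Lemma~\ref{lem:consistency}, rather than the $\ell^1$ bound, then yields $\|e_j\|_{C^{p+1}}\le C_2\varrho_N^{p+1-l}\|k\|_{C^{p+1,p+1}}$. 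Hence $|\Lop e_j(x_i)|\le C_{\Lop}C_2\varrho_N^{p+1-l}\|k\|_{C^{p+1,p+1}}$, and the outer remainder is of order $\varrho_N^{2(p+1-l)}$. Together with your second bracket this gives exactly the paper's constant $C_2(2C_{\Lop}+C_2)$.

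So repaired, your argument is a reorganization of the paper's. Since $G_j=g_j+e_j$, your two outer consistency errors sum to the paper's $E_1$, and your $\Lop e_j(x_i)$ is its $E_2$. The paper uses the same trick to show $\|G_j\|_{C^{p+1}}=O(1)$ instead of $\|e_j\|_{C^{p+1}}=O(\varrho_N^{p+1-l})$.

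Your tensor-product variant can also be made rigorous, but only with a finer split. Let $T_x,T_y$ be the degree-$p$ Taylor projections in the normal charts at $x_i$ and $x_j$, and write $k-T_xT_yk=(I-T_x)T_yk+T_x(I-T_y)k+(I-T_x)(I-T_y)k$. Use exactness of the $y$-stencil on the polynomial factor of the first cross term, and symmetrically for the second, so that a single $\varrho_N^{-l}$ is paid against $\varrho_N^{p+1}$. Only the doubly mixed remainder, of size $\varrho_N^{2(p+1)}$, may be weighted by both $\ell^1$ norms. Weighting a remainder that is of order $p+1$ in one variable only by both stencil norms again loses a factor $\varrho_N^{-l}$.
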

The proof is given in Appendix~\ref{app:implicit}.
The two terms in~\eqref{pdem:eq:gramrate} reflect the trade-off described above: the first is the consistency error of $L^{\varrho_N}$, which decreases with the radius, while the second is the kernel error amplified by the stencil coefficients, which increases as the radius shrinks. The optimal radius $\eta_N^{1/(p+1+l)}$ is admissible, that is, larger than a multiple of $h_N$, precisely when $h_N^{\,p+1+l}/\eta_N\to0$, which explains the condition in Assumption~\ref{ass:graph}. Finally, we note that convergence of the Gram matrix at the nodes does not imply weak convergence of the corresponding Gaussian processes as measures on a function space~\citep{bogachev1998gaussian}. This is, however, a standard limitation in the implicit manifold setting~\citep{Dunson2021, Fichera2023}.

\section{Error bounds for the posterior mean}\label{sec:contraction}

When a Gaussian process is used for function approximation, it is of interest to quantify how fast the posterior mean converges to the true solution as the number of collocation points increases. In this section, we bound the error of the posterior mean at the nodes, in a form which separates the error intrinsic to PMMs from the error introduced by the approximations of Section~\ref{sec:pmm}.

We define $v(x) \in \R^{N_c}$ and $A\in\R^{N_c\times N_c}$ with entries
\[ v_j(x) = \Lop_yk(x,y)|_{y=z_j}, \qquad
   A_{ij} =\Lop_x\Lop_yk(x,y)|_{x=z_i, y=z_j},
\]
so that $A_{ij}=K(z_i,z_j)$ in the notation of~\eqref{eq:gram}. With this notation, the posterior mean and covariance of the exact model read $\mu(x)=v(x)^{\top}A^{-1}\bm y$ and $\sigma(x)^{2}=k(x,x)-v(x)^{\top}A^{-1}v(x)$, respectively.
We denote by $\tilde\cdot$ the corresponding quantities computed from the approximations at hand. For instance, we write $\tilde k=k$ when the manifold is known, and $\tilde k=\khat$ in the implicit case, while $\tilde v$ and $\tilde A$ are obtained by replacing $\Lop$ with the GFDM matrix in both cases.
The Lebesgue constant is $\Lambda_{\mathrm{Leb}}(x):=\max\bigl\{\|A^{-1}v(x)\|_{1},\|\tilde{A}^{-1}\tilde{v}(x)\|_{1}\bigr\}$. The size of the perturbation is measured by
\begin{equation}\label{eq:epsdef}
   \epsilon_N:=\max\Bigl\{\ \max_{x\in X_N}\bigl|\tilde k(x,x)-k(x,x)\bigr|,\
   \max_{x\in X_N}\max_j\bigl|\tilde v_j(x)-v_j(x)\bigr|,\
   \max_{i,j}\bigl|\tilde A_{ij}-A_{ij}\bigr|\ \Bigr\}.
\end{equation}
Lemma~\ref{lem:epsilon} bounds $\epsilon_N$ in the known and in the implicit case, building on Theorems~\ref{thm:known} and~\ref{thm:main}, respectively.

The following theorem bounds the distance between the approximated posterior mean and the true solution at the nodes, separating the collocation error, which is intrinsic to PMMs, from the error introduced by the two approximations.
\begin{theorem}[Error bound for the posterior mean]\label{thm:contraction}
Assume that $k$ is such that $H_{k_G}(\Mfd)\hookrightarrow H_k(\Mfd)$, which in particular implies $u_0\in H_k(\Mfd)$. Let $\epsilon_N$ be as in Lemma~\ref{lem:epsilon}. For every node $x\in X_N$,
\begin{equation}\label{eq:contraction}
   \bigl|\tilde\mu(x)-u_0(x)\bigr|
   \ \le\ \Bigl(\sigma(x)^{2}+2\epsilon_N\bigl(1+\Lambda_{\mathrm{Leb}}(x)\bigr)^{2}\Bigr)^{1/2}
     \|u_0\|_{H_k},
\end{equation}
and consequently, using $\sqrt{a^{2}+b}\le a+\min\{\sqrt b,\;b/(2a)\}$,
\[
   \bigl|\tilde\mu(x)-u_0(x)\bigr|
   \ \le\ \Bigl[\sigma(x)
     +\min\Bigl\{\sqrt{2\epsilon_N}\,\bigl(1+\Lambda_{\mathrm{Leb}}(x)\bigr),\
       \tfrac{\epsilon_N(1+\Lambda_{\mathrm{Leb}}(x))^{2}}{\sigma(x)}\Bigr\}\Bigr]
     \|u_0\|_{H_k}.
\]
Moreover, under the assumptions of Lemma~\ref{lem:sigma}, $\sigma(x)=O\bigl(h_{N_c}^{\,\mathfrak s-l-d/2}\bigr)$, and thus the bound reads
\begin{equation}\label{eq:contraction_comparable}
   \bigl|\tilde\mu(x)-u_0(x)\bigr|
   =O\Bigl(h_{N_c}^{\,\mathfrak s-l-d/2}
     +\sqrt{\epsilon_N}\,\bigl(1+\Lambda_{\mathrm{Leb}}(x)\bigr)\Bigr)\|u_0\|_{H_k}.
\end{equation}
\end{theorem}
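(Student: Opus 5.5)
The plan is to work in the RKHS $H_k(\Mfd)$ and write both the exact and the approximate posterior means as linear functionals applied to $u_0$. Since $\bm y=\Lop u_0(Z_{N_c})$ and, by the reproducing property, $\Lop u_0(z_j)=\langle u_0,\Lop_y k(\cdot,z_j)\rangle_{H_k}$, we have
\[
\tilde\mu(x)-u_0(x)=\Bigl\langle u_0,\ \sum_j w_j(x)\,\Lop_y k(\cdot,z_j)-k(\cdot,x)\Bigr\rangle_{H_k},\qquad w(x):=\tilde A^{-1}\tilde v(x).
\]
Here we use that the data $\bm y$ are exact evaluations of the forcing term, so the only approximation sits in the weights. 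Cauchy--Schwarz then gives $|\tilde\mu(x)-u_0(x)|\le P_w(x)\,\|u_0\|_{H_k}$, where
\[
P_w(x)^2=k(x,x)-2w^\top v(x)+w^\top A w
\]
is the squared ``power function'' of the weight vector $w$, computed with the exact kernel. The hypothesis $H_{k_G}\hookrightarrow H_k$ is used only to guarantee $u_0\in H_k$.

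Next we compare $P_w(x)^2$ with $\sigma(x)^2=P_{w^*}(x)^2$, where $w^*=A^{-1}v(x)$ is the minimiser. Since $P_w^2$ is quadratic in $w$, we get exactly $P_w^2=\sigma^2+(w-w^*)^\top A(w-w^*)$. This identity is hard to control directly, because it would require a bound on $w-w^*$, which involves $A^{-1}$. Instead, we insert and subtract the approximate quadratic form
\[
\tilde P_w^2=\tilde k(x,x)-2w^\top\tilde v(x)+w^\top\tilde A w .
\]
By the definition of $w$ we have $\tilde A w=\tilde v$, so $\tilde P_w^2=\tilde k(x,x)-w^\top\tilde v$ is the approximate posterior variance. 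It is minimal among all weights for the tilde form, hence $\tilde P_w^2\le\tilde P_{w^*}^2$. This yields
\[
P_w^2\le \sigma^2+\bigl|P_w^2-\tilde P_w^2\bigr|+\bigl|\tilde P_{w^*}^2-P_{w^*}^2\bigr|.
\]
Each difference is bounded entrywise by $\epsilon_N$ using the definition \eqref{eq:epsdef}. Concretely,
\[
|P_w^2-\tilde P_w^2|\le\epsilon_N\bigl(1+2\|w\|_1+\|w\|_1^2\bigr)=\epsilon_N(1+\|w\|_1)^2,
\]
and the same holds for $w^*$. Since both $\|w\|_1$ and $\|w^*\|_1$ are bounded by $\Lambda_{\mathrm{Leb}}(x)$ by definition, the total perturbation is at most $2\epsilon_N(1+\Lambda_{\mathrm{Leb}})^2$, which is \eqref{eq:contraction}.

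The second display then follows from the elementary inequality $\sqrt{a^2+b}\le a+\min\{\sqrt b,\,b/(2a)\}$ with $a=\sigma$ and $b=2\epsilon_N(1+\Lambda_{\mathrm{Leb}})^2$. The last statement follows by substituting the bound on $\sigma(x)$ from Lemma~\ref{lem:sigma} and taking the $\sqrt b$ branch of the minimum.

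The main obstacle I expect is making the comparison step rigorous at the level of indices. The quantities $v,\tilde v$ must be evaluated at the node $x$, which is why the bound holds only for $x\in X_N$, since $\epsilon_N$ controls $\tilde k$ and $\tilde v$ only there. Moreover, in the known case $\tilde v(x)$ and $\tilde A$ must be interpreted as the GFDM-discretized quantities $L_c k(X_N,x)$ and $L_c k L_c^\top$. A second point to check is invertibility of $\tilde A$, which I would assume, as the statement implicitly does through the definition of $\Lambda_{\mathrm{Leb}}$.
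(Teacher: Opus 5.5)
Your proposal is correct and follows the paper's proof step by step. Writing $\tilde\mu(x)-u_0(x)$ as a dual functional with squared norm $Q_x(\tilde A^{-1}\tilde v(x))$ is Lemma~\ref{lem:onefunctional}, and your insert-and-subtract chain is exactly Lemma~\ref{lem:sandwich}: it combines the bound $|Q_x(u)-\tilde Q_x(u)|\le\epsilon_N(1+\|u\|_1)^2$ with the minimality of $\tilde A^{-1}\tilde v(x)$ for $\tilde Q_x$. That minimality needs $\tilde A$ positive semidefinite, not merely invertible, and this holds automatically because $\tilde A=L_c\,\tilde k(X_N,X_N)\,L_c^{\top}$ is a congruence of a Gram matrix.
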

The proof is given in Appendix~\ref{app:contraction}. The bound~\eqref{eq:contraction_comparable} separates the two sources of error. The first term is the collocation error intrinsic to PMMs, for which, setting $\epsilon_N=0$ recovers the estimate of~\citep[Proposition~6]{cockayne2017probabilistic}. The second term is the price of approximating the operator, in the known case, or both the kernel and the operator, in the implicit case, and it vanishes as $N\to\infty$ by Lemma~\ref{lem:epsilon}, provided that $\sqrt{\epsilon_N}\,\Lambda_{\mathrm{Leb}}(x)\to0$.

\section{Implementation}
\label{sec:implementation}
The main practical difficulty in implementing $L^{\varrho_N}$ is that the admissible range of stencil radii $\varrho_N$ depends on geometric quantities which are unknown on an implicit manifold.  We first introduce these quantities, and then describe how to estimate them. For $x\in\Mfd$, the exponential map $\exp_x:\mathcal T_x\Mfd\to\Mfd$ sends a tangent vector $z$ to the point reached at time one along the geodesic leaving $x$ with velocity $z$. It is a diffeomorphism onto its image on any ball of radius at most $\inj(x)$, and the injectivity radius $\inj:=\inf_{x\in\Mfd}\inj(x)$ is positive on a compact manifold. The inverse of the exponential map at a node,
\[
\xi_i:=\exp_{x_i}^{-1}:B(x_i,\inj)\longrightarrow\mathsf B(0,\inj)\subset\R^{d},
\]
defines the normal chart at $x_i$, where $B$ denotes a geodesic ball in $\Mfd$ and $\mathsf B$ a Euclidean ball in $\R^{d}$. The chart distortion constant is the smallest $C_\inj\ge1$ such that
\begin{equation}
C_\inj^{-1}d_g(y,y’)\ \le\ |\xi_i(y)-\xi_i(y’)|\ \le\ C_\inj d_g(y,y’),
\end{equation}
for all $i$ and all $y,y'\in B(x_i,\inj/2)$. It is finite on a compact manifold, depends only on $\Mfd$, and equals one if and only if $\Mfd$ is flat.

By Lemma~\ref{lem:stencil}, the stencil radius must satisfy
\begin{equation}
\frac{h_N}{\delta_1}\ \le\ \varrho_N\ \le\ \min\Bigl\{1,\ \frac{\inj}{2}\Bigr\},
\qquad \delta_1=\frac{c_p}{3C_\inj},
\end{equation}
where $c_p=3/(4(2+\sqrt3)p^{2})$ and $p$ is the polynomial degree of the approximation.
Since $\inj$, $C_\inj$ and $h_N$ are unknown on an implicit manifold, this admissible range must be estimated.
We start with the upper bound, $\varrho_N\le\inj/2$. For a smooth closed manifold, the reach $\tau_\Mfd$ is defined as
\begin{equation}\label{eq:reach}
\tau_{\Mfd}:=\inf_{w\in\Mfd}\ \inf_{w'\in\Mfd\setminus\{w\}}
\frac{\|w'-w\|^{2}}{2\,\dist(w'-w,\,\mathcal T_w\Mfd)}.
\end{equation}
The reach bounds the injectivity radius from below, $\pi\tau_{\Mfd}\le\inj$, and it can be estimated from the point cloud~\citep{aamari2019estimating}, which provides a computable upper bound for $\varrho_N$. Explicitly, the estimate is obtained by solving
\begin{equation}\label{eq:estreach}
    \hat \tau_{\Mfd} = \inf_{x \neq y \in X_N} \frac{\|y - x\|^2}{2 \dist(y-x, \mathcal T_x\Mfd)}.
\end{equation}
Since the infimum is taken over a finite sample, $\hat \tau_{\Mfd}$ may overestimate $\tau_{\Mfd}$, in which case the resulting upper bound is optimistic.

We turn to the lower bound $\varrho_N \ge\ \frac{h_N}{\delta_1}$, and start by estimating the chart distortion constant $C_{\inj}$. The sectional curvature $\mathrm{Sec}_{\Mfd}$ satisfies~\citep{aamari2019estimating}
\[
    -\frac{2}{\tau^2_{\Mfd}} \leq \mathrm{Sec}_{\Mfd} \leq \frac{1}{\tau^2_{\Mfd}},
\]
so that $|\mathrm{Sec}_{\Mfd}|\le\mathfrak K:=2\tau^{-2}_{\Mfd}$.
A standard application of Rauch's comparison theorem~\citep[Eq.~(4.2)]{Fefferman2020} then gives, for every pair of points $x,y$ lying in a geodesic ball of radius $r$ centered at $x_i$,
\[
   \frac{\sin\bigl(\sqrt{\mathfrak K}\,r\bigr)}{\sqrt{\mathfrak K}\,r}
   \ \le\ \frac{d_g(x,y)}{\bigl|\xi_i(x)-\xi_i(y)\bigr|}
   \ \le\ \frac{\sinh\bigl(\sqrt{\mathfrak K}\,r\bigr)}{\sqrt{\mathfrak K}\,r},
\]
whenever $r\le\frac12\min\{\pi\mathfrak K^{-1/2},\inj\}$. Comparing these inequalities with the definition of the chart distortion constant yields the two candidate values $\sqrt{\mathfrak K}\,r/\sin(\sqrt{\mathfrak K}\,r)$ and $\sinh(\sqrt{\mathfrak K}\,r)/(\sqrt{\mathfrak K}\,r)$. Since $t/\sin t\ge\sinh t/t$ for $t\in(0,\pi)$, the former is the largest, and therefore
\[
   C_{\inj}\ \le\ \frac{\sqrt{\mathfrak K}\,r}{\sin\bigl(\sqrt{\mathfrak K}\,r\bigr)} .
\]
It remains to determine the largest $r$ for which this bound holds.
Since $\mathfrak K=2\tau^{-2}_{\Mfd}$ gives $\pi\mathfrak K^{-1/2}=\pi\tau_{\Mfd}/\sqrt2$, and $\pi\tau_{\Mfd}\le\inj$,
\[
    \pi\mathfrak K^{-1/2}=\frac{\pi\tau_{\Mfd}}{\sqrt2}<\pi\tau_{\Mfd}\le\inj ,
\]
so that the minimum is attained by the curvature term, the injectivity radius plays no role, and the bound holds for $r\le\pi\tau_{\Mfd}/(2\sqrt2)$.
At $r=\pi\tau_{\Mfd}/(2\sqrt2)$, we have $\sqrt{\mathfrak K}\,r=\pi/2$, which gives
\[
   C_{\inj}\ \le\ \frac{\pi/2}{\sin(\pi/2)}\ =\ \frac{\pi}{2}.
\]
Moreover, $\pi\tau_{\Mfd}/(2\sqrt2)\le\inj/2$ follows from $\pi\tau_{\Mfd}\le\inj$. Collecting these requirements, the stencil radius must lie in the interval
\[
   \frac{3 h_N \pi}{2c_p}\ \le\ \varrho_N\ \le\ \min\Bigl\{1,\ \frac{\pi\,\hat\tau_{\Mfd}}{2\sqrt2}\Bigr\}.
\]
Several approaches are available to estimate $h_N$. Under uniform sampling, $h_N= O\bigl((\log N/N)^{1/d}\bigr)$, but the multiplicative constant is not accessible. A computable alternative is to build a nearest-neighbors graph and estimate $h_N$ from the distance between neighbouring nodes, with the drawback that this underestimates the fill distance.

It is important to highlight that an admissible radius requires $N$ to be significantly high. Consider for example the sphere, for which $\tau_{\Mfd}=1$. In this case, using the bound $C_{\inj}\le\pi/2$, the upper bound $\pi\tau_\Mfd/(2\sqrt2)$ exceeds one, and the admissible range reads
\[
    h_N 2 \pi(2 + \sqrt 3 )p^2\ \le\ \varrho_N\ \le\ 1 .
\]
Moreover, if the sampling is quasi-uniform, then $h_N\approx2N^{-1/2}$, which gives
\[
    N \geq 16\pi^2 \, (2 + \sqrt3\,)^{2}p^4
\]
where the right hand side is of the order of $10^4$ for $p=2$. This requirement is less restrictive than it appears, since $L^{\varrho_N}$ is needed precisely when the stencil norm of $L^{h}$ becomes large, which happens at large $N$.

Since the admissible range may be empty for moderate $N$, we use two fallbacks. The first fallback applies when the lower bound $3 h_N \pi/2c_p$ exceeds the upper bound $\pi\hat\tau_{\Mfd}/(2\sqrt2)$. In this case, we set $\varrho_N$ at the ceiling $\min\{1,\pi\hat\tau_{\Mfd}/(2\sqrt2)\}$, accepting that the density requirement of Lemma~\ref{lem:stencil} is violated. The second fallback applies when the upper bound is itself too small to carry a stencil. Since the space of polynomials of degree $p$ has dimension $\binom{p+d}{d}$, a ball containing fewer points cannot, in general, carry coefficients satisfying~\eqref{eq:exactness}. In this case, we exceed the upper bound and take the smallest radius for which $L^{\varrho_N}$ is well defined.

When the admissible range is nonempty, the radius must also satisfy $\eta_N/\varrho_N^{2l}\to0$ in Theorem~\ref{thm:main}. Since the optimal rate is obtained with a radius $\varrho_N^{\ast}\asymp\eta_N^{1/(p+1+l)}$, we estimate $\eta_N$ from the point cloud. Let $\khat'$ be the graph kernel~\eqref{eq:khat} built on the same nodes with bandwidth $2^{1/d}\zeta$ instead of $\zeta$ (Appendix~\ref{app:graph}), that is, the bandwidth that a rule $\zeta\propto N^{-1/d}$ assigns to $N/2$ nodes, and let $\eta'_N$ be its error. We set
\[
   \hat\eta_N:=\max_{1\le i,j\le N}\bigl|\khat(x_i,x_j)-\khat'(x_i,x_j)\bigr|,
   \qquad
   \varrho_N:=\Bigl(\frac{2l\,\hat\eta_N}{p+1-l}\Bigr)^{\frac{1}{p+1+l}},
\]
where $\varrho_N$ is the minimizer of the bound of Theorem~\ref{thm:main} with unit constants (Appendix~\ref{app:implicit}), projected onto the admissible range. By the triangle and the reverse triangle inequality we have
\[
   \bigl|\eta_N-\eta'_N\bigr|\ \le\ \hat\eta_N\ \le\ \eta_N+\eta'_N .
\]
The upper bound gives $\hat\eta_N\to0$ when both kernels converge, hence $\varrho_N\to0$. The lower bound gives $\hat\eta_N\ge\gamma\,\eta_N$ whenever passing from $\zeta$ to $2^{1/d}\zeta$ changes the error by a fixed fraction, that is, $|\eta'_N-\eta_N|\ge\gamma\,\eta_N$ for some $\gamma>0$, and then $\eta_N/\varrho_N^{2l}\le C\,\eta_N^{(p+1-l)/(p+1+l)}\to0$. Both conditions of Theorem~\ref{thm:main} are therefore met, and if moreover $\eta'_N\le C\eta_N$, then $\hat\eta_N\asymp\eta_N$ and the optimal rate~\eqref{pdem:eq:gramopt} is attained.
In practice, to speed up computations, the maximum is taken over a random subset of nodes, and the kernels are normalized so that $\max_{i,j}|\khat(x_i,x_j)|=1$.
Given the radius, the coefficients at $x_i$ are the ones of minimal $\ell^1$ norm among those supported in $S^{\varrho_N}_i$ that satisfy~\eqref{eq:exactness}, computed by linear programming.

\section{Experiments}
\label{sec:experiments}

We validate the proposed methods in two settings. The first is a manufactured advection--diffusion--reaction problem on the circle, the sphere and the flat torus. On these manifolds, the exact solution is available in closed form, so that the theoretical results of Section~\ref{sec:pmm} can be tested directly. The second setting is the smoothing of cortical thickness data on a cortical surface, where the manifold is only available through a triangulated surface and no closed-form solution exists, so that our methods are compared against established discretizations. In both settings, we report the accuracy against a reference solution and, for the probabilistic methods, the calibration of the credible intervals.

\subsection{Advection--diffusion--reaction on known, closed manifolds}

In this section, we consider the advection--diffusion--reaction equation
\begin{equation}\label{eq:advdiff}
    \Lop u\ :=\ \mathfrak r\,u\;+\;\mathfrak V\cdot\grad u\;-\;\mathfrak d\,\Delta u
    \ =\ f ,
\end{equation}
where $\mathfrak r>0$ is the reaction rate, $\mathfrak V\in\mathfrak X(\Mfd)$ is a smooth vector field describing the transport, and $\mathfrak d>0$. We consider three manifolds which are standard in the literature, namely the circle $\mathbb S^{1}\subset\R^2$, the unit sphere $\mathbb S^{2}\subset\R^{3}$ and the flat torus $\mathbb T^2 \subset \R^4$.
The experiment on the circle is designed to expose the stability of $L^{\varrho_N}$ compared to $L^h$ as $N$ increases, as predicted by our theoretical results. The experiments on the sphere and the torus instead assess the quality of the proposed models in a setting closer to a real application.

The problem is constructed in the same way on the three manifolds. We choose $\mathfrak V$ so that the advection term takes a simple form, and we expand in the eigenfunctions of the Laplace--Beltrami operator. The diffusion term scales each eigenfunction by its eigenvalue, while the advection term maps it to a combination of eigenfunctions with the same eigenvalue, so that the action of $\Lop$ on each basis element is available in closed form. Defining $u$ as a finite combination of basis elements therefore gives $f$ in closed form, and the problem is fully known.

On each manifold, we compare four probabilistic configurations, which differ in the information they use about the manifold. The two known configurations use the exact kernel with analytic tangent frames, combined either with the exact operator $\Lop$ or with $L^{h}$. The two implicit configurations use the graph kernel with tangent frames estimated by tangent PCA~\citep{tyagi2013tangent}, combined either with $L^{h}$ or with $L^{\varrho_N}$. In the implicit case, $L^{h}$ is rebuilt on the estimated frames, so that no configuration pairs an estimated kernel with the exact geometry. As a non-probabilistic baseline, we also report a direct GFDM solve with $L^{h}$ on the analytic frames.

Without further regularity assumptions,  a strong solution of~\eqref{eq:advdiff} lies in $C^{2}(\Mfd)$. We impose the smoothness directly on the integral-type kernel $k$, which we take to be of Mat\'ern class of smoothness $\nu$, so that $\mathfrak s=\nu+d/2$. Since sample paths of $\mathcal{GP}(0,k)$ lie almost surely in $C^{m}(\Mfd)$ whenever $\mathfrak s>m+d$, the $C^{2}$ requirement  translates into $\nu>2+d/2$, while Theorem~\ref{thm:known} requires sample paths in $C^{p+1}(\Mfd)$, hence $\nu>p+1+d/2$. We set $\nu=11/2$ throughout, which satisfies both requirements in all our experiments.
Kernel hyperparameters are estimated by maximum likelihood.

\paragraph{Circle:}
Instead of standard Euclidean coordinates, we parametrize points using arclength $\vartheta \in [0, 2\pi)$. The vector field is set as a rotation of the form $\mathfrak V=\mathfrak w \partial_{\vartheta}$. The eigenpairs of $-\Delta$ are the standard Fourier modes $C_{\mathfrak n}=\cos(\mathfrak n\vartheta)$ and $S_{\mathfrak n}=\sin(\mathfrak n\vartheta)$ with eigenvalue $\mathfrak n^{2}$. With our choice of transport field, differentiation in $\vartheta$ gives
\begin{equation*}
    \frac{\partial}{\partial\vartheta}C_{\mathfrak n}
      =-\,\mathfrak n\,S_{\mathfrak n},
    \qquad
    \frac{\partial}{\partial\vartheta}S_{\mathfrak n}
      =+\,\mathfrak n\,C_{\mathfrak n},
\end{equation*}
and combining with the eigenvalue relation gives the closed form
\begin{align*}
    \Lop C_{\mathfrak n}
      &=\bigl(\mathfrak r+\mathfrak d\,\mathfrak n^{2}\bigr)C_{\mathfrak n}
        \;-\;\mathfrak w\,\mathfrak n\,S_{\mathfrak n},\\
    \Lop S_{\mathfrak n}
      &=\bigl(\mathfrak r+\mathfrak d\,\mathfrak n^{2}\bigr)S_{\mathfrak n}
        \;+\;\mathfrak w\,\mathfrak n\,C_{\mathfrak n}.
\end{align*}
Fixing a finite set of integers $\mathcal S$, we define
\[
    u\ :=\ \sum_{\mathfrak n\in\mathcal S}
           \bigl(\mathfrak a_{\mathfrak n}\,C_{\mathfrak n}
                +\mathfrak b_{\mathfrak n}\,S_{\mathfrak n}\bigr),
\]
for coefficients $\mathfrak a_{\mathfrak n},\mathfrak b_{\mathfrak n}$.

\paragraph{Sphere:}
We take for $\mathfrak V$ the solid-body rotation of angular velocity $\mathfrak w\in\R$ about a fixed unit axis $\mathfrak e\in\R^{3}$,
\[
    \mathfrak V(x)\ :=\ \mathfrak w\,(\mathfrak e\times x),\qquad x\in\mathbb S^{2}.
\]
We introduce the spherical coordinates $(\chi,\upsilon)$ about $\mathfrak e$, so that $x=(\sin\chi\cos\upsilon,\ \sin\chi\sin\upsilon,\ \cos\chi)$. The associated orthonormal frame of $\mathcal T_x\mathbb S^{2}$ is $(\hat e_{\chi},\hat e_{\upsilon})$ with $\hat e_{\upsilon}=(-\sin\upsilon,\cos\upsilon,0)$. In this frame the gradient reads
\[
\grad=\hat e_{\chi}\,\partial_{\chi}+\frac{1}{\sin\chi}\,\hat e_{\upsilon}\,\partial_{\upsilon}.
\]
A direct computation gives $\mathfrak e \times x=\sin\chi\,\hat e_{\upsilon}$, so that the advection term becomes
\begin{equation*}
    \mathfrak V\cdot\grad\ =\ \mathfrak w\,\frac{\partial}{\partial\upsilon} .
\end{equation*}
The eigenfunctions of $-\Delta$ on $\mathbb S^{2}$ are the real spherical harmonics $Y_{\mathfrak n}^{\mathfrak m}$, indexed by a degree $\mathfrak n\ge0$ and an order $-\mathfrak n\le\mathfrak m\le\mathfrak n$. For $\mathfrak m\ge1$, these can be written as
\[
    Y_{\mathfrak n}^{+\mathfrak m}
      =\mathfrak c_{\mathfrak n\mathfrak m}\,
        P_{\mathfrak n}^{\mathfrak m}(\cos\chi)\,\cos(\mathfrak m\,\upsilon),
    \qquad
    Y_{\mathfrak n}^{-\mathfrak m}
      =\mathfrak c_{\mathfrak n\mathfrak m}\,
        P_{\mathfrak n}^{\mathfrak m}(\cos\chi)\,\sin(\mathfrak m\,\upsilon),
\]
with $Y_{\mathfrak n}^{0}=\mathfrak c_{\mathfrak n0}P_{\mathfrak n}^{0}(\cos\chi)$. Here, $P_{\mathfrak n}^{\mathfrak m}$ is the associated Legendre polynomial and $\mathfrak c_{\mathfrak n\mathfrak m}$ the $L^{2}(\mathbb S^{2})$ normalization constant. With our choice of transport field, the application of the advection term to spherical harmonics gives
\begin{equation*}
    \frac{\partial}{\partial\upsilon}Y_{\mathfrak n}^{+\mathfrak m}
      =-\,\mathfrak m\,Y_{\mathfrak n}^{-\mathfrak m},
    \qquad
    \frac{\partial}{\partial\upsilon}Y_{\mathfrak n}^{-\mathfrak m}
      =+\,\mathfrak m\,Y_{\mathfrak n}^{+\mathfrak m} .
\end{equation*}
Combining this with the eigenvalue relation $-\Delta Y_{\mathfrak n}^{\pm\mathfrak m}=\mathfrak n(\mathfrak n+1)\,Y_{\mathfrak n}^{\pm\mathfrak m}$, the action of $\Lop$ on $Y_{\mathfrak n}^{+\mathfrak m},Y_{\mathfrak n}^{-\mathfrak m}$ has the closed form
\begin{align*}
    \Lop Y_{\mathfrak n}^{+\mathfrak m}
      &=\bigl(\mathfrak r+\mathfrak d\,\mathfrak n(\mathfrak n+1)\bigr)
        Y_{\mathfrak n}^{+\mathfrak m}
        \;-\;\mathfrak w\,\mathfrak m\,Y_{\mathfrak n}^{-\mathfrak m},\\
    \Lop Y_{\mathfrak n}^{-\mathfrak m}
      &=\bigl(\mathfrak r+\mathfrak d\,\mathfrak n(\mathfrak n+1)\bigr)
        Y_{\mathfrak n}^{-\mathfrak m}
        \;+\;\mathfrak w\,\mathfrak m\,Y_{\mathfrak n}^{+\mathfrak m} .
\end{align*}
Given a finite index set $\mathcal S$ of pairs $(\mathfrak n,\mathfrak m)$, we define the manufactured solution as
\[
    u\ :=\ \sum_{(\mathfrak n,\mathfrak m)\in\mathcal S}
           \mathfrak a_{\mathfrak n\mathfrak m}\,Y_{\mathfrak n}^{\mathfrak m},
\]
for coefficients $\mathfrak a_{\mathfrak n\mathfrak m}$.

\paragraph{Flat torus:}
In the case of the flat torus, we use that $\mathbb T^2 \cong \mathbb S^1 \times \mathbb S^1$, so that the coordinate system is $(\vartheta, \varphi) \in [0,2\pi)\times [0, 2\pi)$. The induced metric is $d\vartheta^{2}+d\varphi^{2}$, so the tangent vectors
\[
\hat e_{\vartheta}=(-\sin\vartheta,\cos\vartheta,0,0),\qquad
\hat e_{\varphi}=(0,0,-\sin\varphi,\cos\varphi),
\]
form an orthonormal frame of $\mathcal T_x\mathbb T^{2}$ at every point and the gradient reads
\[
\grad=\hat e_{\vartheta}\,\frac{\partial}{\partial\vartheta}
+\hat e_{\varphi}\,\frac{\partial}{\partial\varphi}.
\]
We take for $\mathfrak V$ the field with constant components in this frame,
\[
\mathfrak V\ :=\ \mathfrak w_1\,\hat e_{\vartheta}+\mathfrak w_2\,\hat e_{\varphi},
\qquad (\mathfrak w_1,\mathfrak w_2)\in\R^{2}\setminus\{0\}.
\]
The advection term becomes
\[
\mathfrak V\cdot\grad\ =\ \mathfrak w_1\,\frac{\partial}{\partial\vartheta}
+\mathfrak w_2\,\frac{\partial}{\partial\varphi} .
\]
The eigenpairs of $-\Delta$ on the flat torus are the standard Fourier modes, indexed by $(\mathfrak m,\mathfrak n)\in\mathbb Z^{2}$. In real form, we have
\[
    C_{\mathfrak m\mathfrak n}
      =\mathfrak c_{\mathfrak m\mathfrak n}
        \cos(\mathfrak m\vartheta+\mathfrak n\varphi),
    \qquad
    S_{\mathfrak m\mathfrak n}
      =\mathfrak c_{\mathfrak m\mathfrak n}
        \sin(\mathfrak m\vartheta+\mathfrak n\varphi),
\]
both with eigenvalue $\mathfrak m^{2}+\mathfrak n^{2}$, where $\mathfrak c_{\mathfrak m\mathfrak n}$ is the $L^{2}(\mathbb T^{2})$ normalization constant. With our choice of transport field, the advection term acts on a mode pair by
\begin{equation*}
    (\mathfrak V\cdot\grad)\,C_{\mathfrak m\mathfrak n}
      =-\,(\mathfrak w_1\mathfrak m+\mathfrak w_2\mathfrak n)\,
        S_{\mathfrak m\mathfrak n},
    \qquad
    (\mathfrak V\cdot\grad)\,S_{\mathfrak m\mathfrak n}
      =\,(\mathfrak w_1\mathfrak m+\mathfrak w_2\mathfrak n)\,
        C_{\mathfrak m\mathfrak n}.
\end{equation*}
Combining this with the eigenvalue relation gives the closed form
\begin{align*}
    \Lop C_{\mathfrak m\mathfrak n}
      &=\bigl(\mathfrak r+\mathfrak d\,
              (\mathfrak m^{2}+\mathfrak n^{2})\bigr)
        C_{\mathfrak m\mathfrak n}
        \;-\;(\mathfrak w_1\mathfrak m+\mathfrak w_2\mathfrak n)\,
        S_{\mathfrak m\mathfrak n},\\
    \Lop S_{\mathfrak m\mathfrak n}
      &=\bigl(\mathfrak r+\mathfrak d\,
              (\mathfrak m^{2}+\mathfrak n^{2})\bigr)
        S_{\mathfrak m\mathfrak n}
        \;+\;(\mathfrak w_1\mathfrak m+\mathfrak w_2\mathfrak n)\,
        C_{\mathfrak m\mathfrak n}.
\end{align*}
Given a finite index set $\mathcal S$ of pairs $(\mathfrak m,\mathfrak n)$, we define the manufactured solution as
\[
    u\ :=\ \sum_{(\mathfrak m,\mathfrak n)\in\mathcal S}
           \bigl(\mathfrak a^{c}_{\mathfrak m\mathfrak n}
                 C_{\mathfrak m\mathfrak n}
               +\mathfrak a^{s}_{\mathfrak m\mathfrak n}
                 S_{\mathfrak m\mathfrak n}\bigr),
\]
for coefficients $\mathfrak a^{c},\mathfrak a^{s}$.

\paragraph{Results on the circle:}

On the circle, we let $N$ range from $500$ to $32\,000$, for polynomial degrees $p\in\{2,3\}$,  and retain $100$ graph modes. We set $\mathcal S=\{1,2,3,5\}$ with $\mathfrak a_1=1.0$, $\mathfrak b_2=0.7$, $\mathfrak a_3=0.5$, $\mathfrak b_5=0.3$ and all other coefficients to zero, and the problem parameters are $\mathfrak r=0.01$, $\mathfrak d=0.1$, $\mathfrak w=1$.
 The number of collocation points is $5\%$ of $N$, with a minimum of $150$. The nodes are equispaced in $\vartheta$ and then perturbed by independent uniform displacements of up to $\pm5\%$ of the spacing. The radius $\varrho_N$ is selected as in Section~\ref{sec:implementation}, with the maximum defining $\hat\eta_N$ taken over $400$ random nodes.

The lower row of Figure~\ref{fig:circleconv} and the right hand side of Table~\ref{tab:circle} report the relative Gram matrix error $\max_{i,j}|\Khat_{ij}-K_{ij}|/\max_{i,j}|K_{ij}|$, which is the quantity bounded by Theorem~\ref{thm:main}. The graph kernel with $L^{\varrho_N}$ (\lsRhogr{}) converges, while the error of $L^{h}$ (\lsHgr{}) grows with $N$, as predicted by the amplification of $\eta_N$ by the stencil norm.
On this problem $\eta_N$ can be measured against the exact kernel. In Table~\ref{tab:circle} we report $\hat\eta_N/\eta_N$, which stays close to one for large enough $N$.
The exact kernel with $L^{h}$ (\lsHex{}) converges, consistently with Theorem~\ref{thm:known}, since there $\eta_N$ plays no role.

The upper row of Figure~\ref{fig:circleconv} and the left-hand side of Table~\ref{tab:circle} report the relative $L^2$ error against the exact solution at the nodes, which follows the same pattern. The direct GFDM solve (\lsGFDM{}) converges at the expected order at both degrees, and PMM with the exact operator and basis (\lsPMMex{}) remains at machine precision throughout. With the exact Mat\'ern kernel, $L^{h}$ follows the GFDM rate, since for that configuration $\eta_N$ plays no role. With the graph kernel, in contrast, $L^{h}$ does not converge. It is competitive on the coarsest clouds, then its error increases by an order of magnitude and decreases only slowly afterwards, remaining above $10^{-1}$ up to $N=32\,000$. With the same kernel, the same estimated frames and the same collocation set, $L^{\varrho_N}$ instead decreases monotonically from $N=1\,000$, reaching $1.74{\times}10^{-2}$ at $p=2$ and $4.84{\times}10^{-3}$ at $p=3$.

\begin{table}[t]
  \centering
  \caption{Advection--diffusion--reaction on $\mathbb S^{1}$. \textbf{Bold} marks the lower of the two errors in each metric at each cloud size. A dagger marks the clouds on which the admissibility window is empty. The reported value is the estimated ceiling and $\varrho_N$ is set to $1$ by the first fallback.}
  \label{tab:circle}
  \small
  \begin{tabular}{@{}rrrrrrrr@{}}
    \toprule
    & & & & \multicolumn{2}{c}{rel.\ $L^2$} & \multicolumn{2}{c}{$\max_{ij}|\Khat_{ij}-K_{ij}|\,/\,\max_{ij}|K_{ij}|$} \\
    \cmidrule(lr){5-6}\cmidrule(l){7-8}
    $N$ & $N_c$ & $\varrho_N$ & $\hat\eta_N/\eta_N$ & $L^{h}$ & $L^{\varrho_N}$ & $L^{h}$ & $L^{\varrho_N}$ \\
    \midrule
    \multicolumn{8}{@{}l}{\emph{Degree $p=2$}} \\
    $500$      & $150$     & $1.1107^{\dagger}$ & $2.63$ & $\mathbf{1.43{\times}10^{-1}}$ & $3.61$                         & $\mathbf{4.32{\times}10^{-2}}$ & $6.27{\times}10^{-1}$ \\
    $1\,000$   & $150$     & $0.6113$           & $1.86$ & $\mathbf{4.14{\times}10^{-2}}$ & $9.70{\times}10^{-1}$          & $\mathbf{8.26{\times}10^{-2}}$ & $2.70{\times}10^{-1}$ \\
    $2\,000$   & $150$     & $0.3064$           & $1.43$ & $\mathbf{9.99{\times}10^{-2}}$ & $1.10{\times}10^{-1}$          & $1.59{\times}10^{-1}$          & $\mathbf{7.68{\times}10^{-2}}$ \\
    $4\,000$   & $200$     & $0.1843$           & $1.19$ & $8.68{\times}10^{-1}$          & $\mathbf{3.52{\times}10^{-2}}$ & $3.64{\times}10^{-1}$          & $\mathbf{1.59{\times}10^{-2}}$ \\
    $8\,000$   & $400$     & $0.1519$           & $1.00$ & $4.14{\times}10^{-1}$          & $\mathbf{2.34{\times}10^{-2}}$ & $6.13{\times}10^{-1}$          & $\mathbf{1.24{\times}10^{-2}}$ \\
    $16\,000$  & $800$     & $0.1362$           & $0.87$ & $3.41{\times}10^{-1}$          & $\mathbf{1.88{\times}10^{-2}}$ & $1.82$                         & $\mathbf{1.28{\times}10^{-2}}$ \\
    $32\,000$  & $1\,600$  & $0.1314$           & $0.82$ & $2.26{\times}10^{-1}$          & $\mathbf{1.74{\times}10^{-2}}$ & $4.30$                         & $\mathbf{4.53{\times}10^{-3}}$ \\
    \addlinespace
    \multicolumn{8}{@{}l}{\emph{Degree $p=3$}} \\
    $500$      & $150$     & $1.0775^{\dagger}$ & $2.63$ & $\mathbf{1.36{\times}10^{-1}}$ & $3.44{\times}10^{-1}$          & $\mathbf{4.77{\times}10^{-2}}$ & $6.03{\times}10^{-2}$ \\
    $1\,000$   & $150$     & $1.0714^{\dagger}$ & $1.86$ & $\mathbf{6.17{\times}10^{-2}}$ & $3.52{\times}10^{-1}$          & $\mathbf{8.49{\times}10^{-2}}$ & $9.33{\times}10^{-2}$ \\
    $2\,000$   & $150$     & $0.6895$           & $1.43$ & $\mathbf{7.71{\times}10^{-2}}$ & $9.11{\times}10^{-2}$          & $1.60{\times}10^{-1}$          & $\mathbf{2.74{\times}10^{-2}}$ \\
    $4\,000$   & $200$     & $0.3457$           & $1.19$ & $8.84{\times}10^{-1}$          & $\mathbf{2.12{\times}10^{-2}}$ & $3.71{\times}10^{-1}$          & $\mathbf{1.03{\times}10^{-3}}$ \\
    $8\,000$   & $400$     & $0.1853$           & $1.00$ & $4.38{\times}10^{-1}$          & $\mathbf{6.19{\times}10^{-3}}$ & $6.13{\times}10^{-1}$          & $\mathbf{4.17{\times}10^{-4}}$ \\
    $16\,000$  & $800$     & $0.1692$           & $0.87$ & $3.37{\times}10^{-1}$          & $\mathbf{5.13{\times}10^{-3}}$ & $1.83$                         & $\mathbf{3.78{\times}10^{-4}}$ \\
    $32\,000$  & $1\,600$  & $0.1642$           & $0.82$ & $2.24{\times}10^{-1}$          & $\mathbf{4.84{\times}10^{-3}}$ & $4.25$                         & $\mathbf{2.44{\times}10^{-4}}$ \\
    \bottomrule
  \end{tabular}
\end{table}

\begin{figure}[htbp]
    \centering
    \includegraphics[width=1\textwidth]{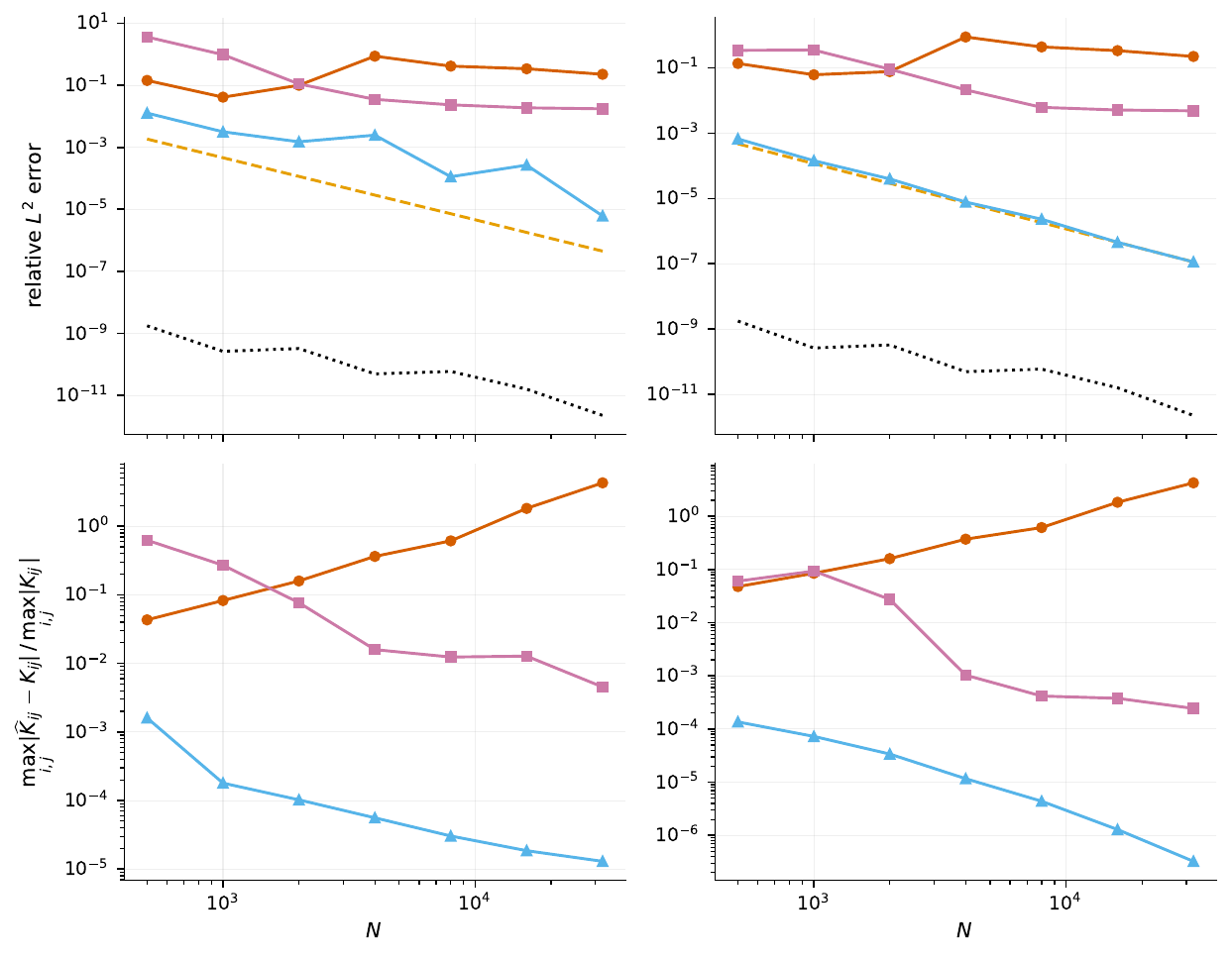}
    \caption{Advection--diffusion--reaction on $\mathbb S^{1}$. Left column $p=2$ and right column $p=3$. \emph{Top:} relative $L^2$ error against the exact solution. \emph{Bottom:} $\max_{i,j}|\Khat_{ij}-K_{ij}|$. The curves are the direct GFDM solve (\lsGFDM{}), PMM with exact operator and basis (\lsPMMex{}), PMM with $L^{h}$ and the exact Mat\'ern kernel (\lsHex{}), PMM with $L^{h}$ and the graph kernel (\lsHgr{}) and PMM with $L^{\varrho_N}$ and the graph kernel (\lsRhogr{}).}
    \label{fig:circleconv}
\end{figure}

\paragraph{Results on the sphere and torus:}

The problem parameters are set to $\mathfrak r=1$ and $\mathfrak d=0.1$. On the sphere, the transport is the rotation of angular velocity $\mathfrak w=1$ about the polar axis $\mathfrak e$, while on the flat torus $(\mathfrak w_1,\mathfrak w_2)=(1,\,0.35)$. In both cases, the point cloud $X_N$ is given in ambient coordinates, in $\R^{3}$ for the sphere and $\R^{4}$ for the flat torus, with $N=10^{4}$ and $N_c=250$ collocation points. Both discrete operators use polynomial degree $p=2$, and $L^{h}$ is built on $\mathsf K=25$ nearest neighbours. In the sphere case, we take $\mathcal S=\{(1,0),(2,1),(3,-2),(4,1)\}$ in the $(\mathfrak n,\mathfrak m)$ indexing, while for the flat torus $\mathcal S=\{(1,0),(1,1),(2,1),(2,2)\}$ with coefficients $\{1.0,0.7,0.5,0.3\}$.

Table~\ref{tab:advdiff-steady} reports, for both manifolds, the relative $L^2$ and $L^\infty$ error,  the empirical coverage of the credible intervals and the median posterior standard deviation. PMM with exact operator and kernel achieves by far the lowest error rates, and its credible intervals contain the true solution at every node at both nominal levels, while the posterior standard deviation remains small.
Replacing the exact Mat\'ern kernel by the graph kernel, at fixed operator and fixed collocation set, increases the error by roughly one order of magnitude.
The direct GFDM solve and $L^{\varrho_N}$ both perform poorly for different reasons. The former inverts $L^{h}$, whose condition number degrades as $N$ grows. For the latter, the cloud is too small for the admissible range of Section~\ref{sec:implementation} to be nonempty, so that $\varrho_N$ is set to $1$ by the first fallback, and the resulting credible intervals are strongly overconfident.
Figures~\ref{fig:spherefig} and~\ref{fig:torusfig} show the solution returned by each configuration in the top row, and the corresponding absolute error against the manufactured solution in the bottom row. The failure of $L^{\varrho_N}$ appears as a smooth, large-scale deviation rather than as local noise, while the direct GFDM solve degrades on the whole domain.

\begin{table}[t]
  \centering
  \caption{Steady advection--diffusion--reaction on $\mathbb S^2$ and $\mathbb{T}^2$. The first two columns are the relative $L^2$ and $L^\infty$ error against the ground truth. The remaining columns are the empirical coverage of the credible intervals at those nominal levels and the median posterior standard deviation, and exist only for the probabilistic methods. \textbf{Bold} marks the lowest error in each column within each block.}
  \label{tab:advdiff-steady}
  \resizebox{\linewidth}{!}{%
    \begin{tabular}{@{}lrrrrr@{}}
      \toprule
      \multicolumn{6}{c}{\textbf{Sphere $\mathbb S^2$}} \\
      Method & rel.\ $L^2$ & rel.\ $L^\infty$ & cov.\ $68.3\%$ (\%) & cov.\ $95\%$ (\%) & med.\ post.\ sd \\
      \midrule
      \multicolumn{6}{@{}l}{\emph{Known manifold, analytic tangent frames}} \\
      GFDM $L^h$, direct solve & $4.61{\times}10^{-1}$ & $2.17$ & -- & -- & -- \\
      PMM, exact operator and basis (ours) & $\mathbf{1.38{\times}10^{-8}}$ & $\mathbf{3.30{\times}10^{-8}}$ & $100.0$ & $100.0$ & $1.97{\times}10^{-7}$ \\
      PMM, $L^h$ + exact Mat\'ern (ours) & $5.48{\times}10^{-3}$ & $5.95{\times}10^{-3}$ & $51.2$ & $72.8$ & $3.84{\times}10^{-5}$ \\
      \addlinespace
      \multicolumn{6}{@{}l}{\emph{Implicit manifold, estimated tangent frames}} \\
      PMM, $L^h$ + graph Mat\'ern (ours) & $\mathbf{4.82{\times}10^{-2}}$ & $\mathbf{5.91{\times}10^{-2}}$ & $69.2$ & $91.2$ & $5.59{\times}10^{-4}$ \\
      PMM, $L^{\varrho_N}$ + graph Mat\'ern (ours) & $4.43{\times}10^{-1}$ & $4.60{\times}10^{-1}$ & $18.4$ & $33.9$ & $1.39{\times}10^{-3}$ \\
      \midrule
      \multicolumn{6}{c}{\textbf{Flat torus $\mathbb{T}^2$}} \\
      Method & rel.\ $L^2$ & rel.\ $L^\infty$ & cov.\ $68.3\%$ (\%) & cov.\ $95\%$ (\%) & med.\ post.\ sd \\
      \midrule
      \multicolumn{6}{@{}l}{\emph{Known manifold, analytic tangent frames}} \\
      GFDM $L^h$, direct solve & $7.60{\times}10^{-2}$ & $2.90{\times}10^{-1}$ & -- & -- & -- \\
      PMM, exact operator and basis (ours) & $\mathbf{1.96{\times}10^{-8}}$ & $\mathbf{4.05{\times}10^{-8}}$ & $100.0$ & $100.0$ & $4.28{\times}10^{-6}$ \\
      PMM, $L^h$ + exact Mat\'ern (ours) & $5.69{\times}10^{-3}$ & $8.27{\times}10^{-3}$ & $38.6$ & $59.1$ & $5.05{\times}10^{-4}$ \\
      \addlinespace
      \multicolumn{6}{@{}l}{\emph{Implicit manifold, estimated tangent frames}} \\
      PMM, $L^h$ + graph Mat\'ern (ours) & $\mathbf{3.49{\times}10^{-2}}$ & $\mathbf{5.92{\times}10^{-2}}$ & $79.5$ & $95.1$ & $8.41{\times}10^{-3}$ \\
      PMM, $L^{\varrho_N}$ + graph Mat\'ern (ours) & $2.80{\times}10^{-1}$ & $3.52{\times}10^{-1}$ & $27.0$ & $42.5$ & $1.65{\times}10^{-2}$ \\
      \bottomrule
    \end{tabular}%
  }
\end{table}

\begin{figure}[htbp]
    \centering
    \noindent\pcell{6}{(a)}\pcell{6}{(b)}\pcell{6}{(c)}\pcell{6}{(d)}\pcell{6}{(e)}\pcell{6}{(f)}\par\nointerlineskip
    \includegraphics[width=1\textwidth]{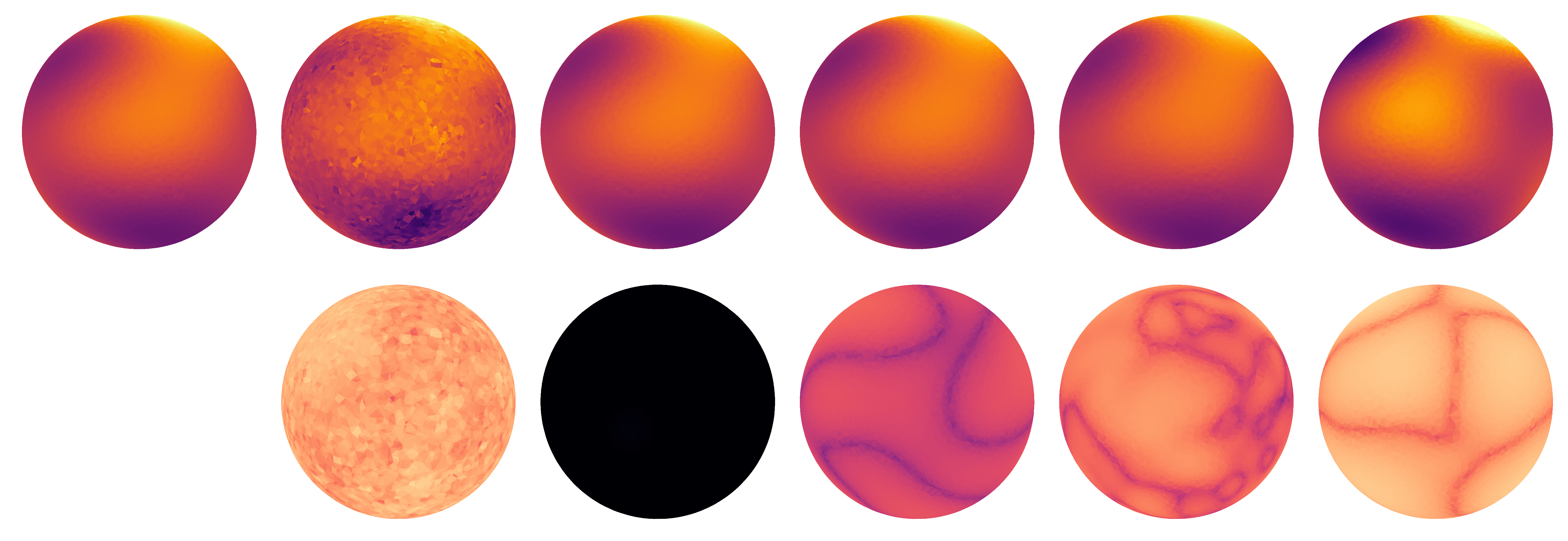}
    \caption{Steady advection--diffusion--reaction on $\mathbb S^{2}$. The top row reports the predictions for (a) the manufactured exact solution, (b) the direct GFDM solve with $L^{h}$, (c) PMM with the exact operator and basis, (d) PMM with $L^{h}$ and the exact Mat\'ern kernel, (e) PMM with $L^{h}$ and the graph kernel, and (f) PMM with $L^{\varrho_N}$ and the graph kernel. The bottom row reports the absolute error against the manufactured solution, that is $|u-\tilde\mu|$ at every node, of the panel directly above it; the cell under (a) is empty, the exact solution having no error against itself. Panels (a)--(d) use the analytic tangent frames, (e) and (f) the PCA estimates.}
    \label{fig:spherefig}
\end{figure}

\begin{figure}[htbp]
    \centering
    \noindent\pcell{6}{(a)}\pcell{6}{(b)}\pcell{6}{(c)}\pcell{6}{(d)}\pcell{6}{(e)}\pcell{6}{(f)}\par\nointerlineskip
    \includegraphics[width=1\textwidth]{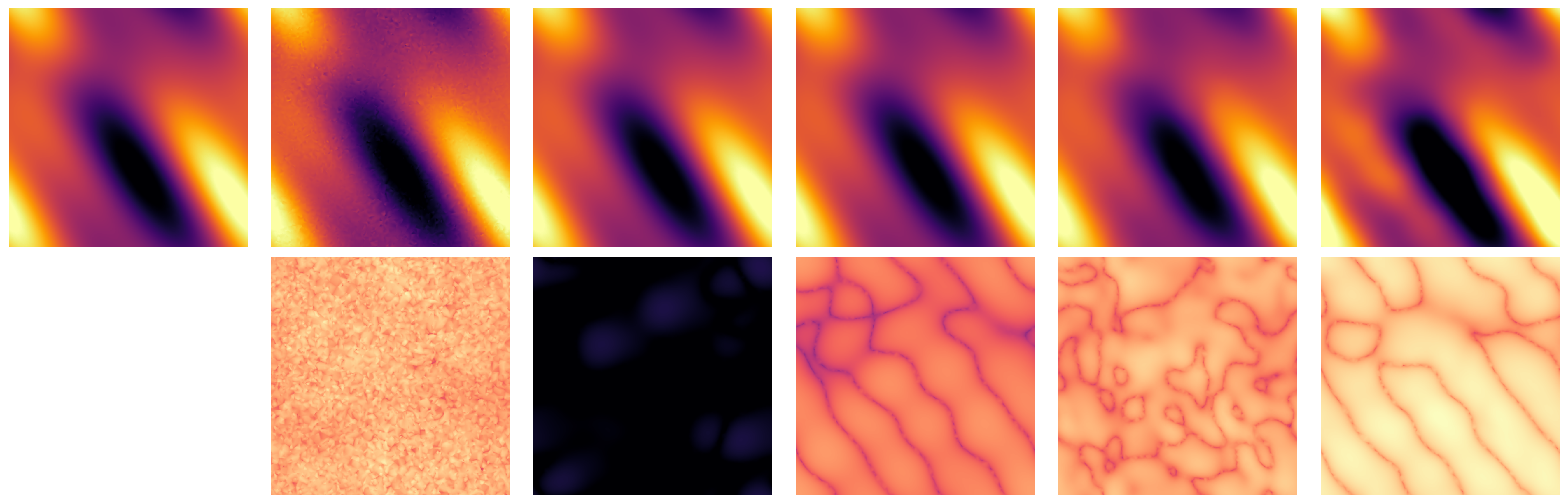}
    \caption{Steady advection--diffusion--reaction on $\mathbb{T}^{2}$, with the panels drawn in the fundamental domain $(\vartheta,\varphi)\in[0,2\pi)^{2}$. The top row reports the predictions for (a) the manufactured exact solution, (b) the direct GFDM solve with $L^{h}$, (c) PMM with the exact operator and basis, (d) PMM with $L^{h}$ and the exact Mat\'ern kernel, (e) PMM with $L^{h}$ and the graph kernel, and (f) PMM with $L^{\varrho_N}$ and the graph kernel. The bottom row reports the absolute error against the manufactured solution of the panel directly above it; the cell under (a) is empty, the exact solution having no error against itself. Panels (a)--(d) use the analytic tangent frames, (e) and (f) the PCA estimates.}
    \label{fig:torusfig}
\end{figure}
\subsection{Brain cortical thickness smoothing}

The cerebral cortex is a thin layer of neural tissue in the outermost part of the brain, where most of the neuronal activity takes place. From a geometric point of view, it is modeled as a closed surface embedded in $\R^3$. The thickness of this surface is usually referred to as cortical thickness and can be estimated from magnetic resonance images (MRI) through several steps of image processing, described for example in~\citet{chung2005}. This procedure introduces noise, which is particularly undesirable in group-level analysis, so that the measurements are usually smoothed. One of the most widely used approaches in this context is kernel smoothing, which convolves the measurements with a heat kernel whose time parameter determines the amount of smoothing. Denoting the cortical surface with $\Mfd$ and the noisy measurements by $g(x)$, heat kernel smoothing is equivalent to solving the heat equation~\citep{chung2005}
\[
    \left\{
    \begin{aligned}
        &\partial_tu - \Delta u=0 && \text{in }\Mfd \times (0,T],\\
        &u(\cdot, 0)=g && \text{in }\Mfd,
    \end{aligned}
    \right.
\]
whose solution at time $t$ is given by the heat semigroup, $u(\cdot, t)=e^{t\Delta}g$. Since the manifold is represented by a mesh, the problem must be discretized, and several approaches have been proposed in the literature. \citet{chung2005} propose multiple applications of the heat kernel convolution against initial data, where the kernel is locally approximated with the exponential of the distance between nodes. ~\citet{chung2004diffusion} instead solve the heat equation with finite elements in space and $n$ explicit Euler steps in time. More recently, ~\citet{seo2010} instead focuses on the estimation of the Laplacian eigendecomposition, so that the convolution can be represented via the spectral Fourier series. In practice, the eigenpairs are obtained using cotangent discretization~\citep{pinkall1993computing}, and the coefficients are estimated by least squares. This approach is exact in time and it has strong convergence guarantees in space. Nevertheless, it requires a triangulation of the domain and a considerable computational effort. Since small values of $t$ are of most interest in practice, to avoid over-smoothing the data, we propose a simpler alternative, namely solving a single backward Euler step for the heat equation
\begin{equation}\label{eq:onestep}
    (I-t\Delta)u = g,
\end{equation}
where $t$ plays the role of the smoothing parameter. Since~\eqref{eq:onestep} is a linear elliptic PDE on a closed manifold, it falls within our framework of PMMs on manifolds.

As a reference, we take the solution of~\citet{seo2010} with Laplace--Beltrami eigenpairs obtained by the cotangent discretization, retaining $1000$ modes. At each smoothing time $t$, we compare the two established discretizations, namely the iterated kernel smoothing of~\citet{chung2005} and the finite element diffusion of~\citet{chung2004diffusion} with $n$ time steps, and our approach, which solves~\eqref{eq:onestep} with PMMs using either $L^h$ or $L^{\varrho_N}$. We stress that both competing methods use the measurements at every vertex, while PMMs condition on a limited number of collocation points and return the smoothed field, together with an uncertainty estimate, at every vertex.

All compared methods are implemented in Python following the original papers and the MATLAB routines distributed by~\citet{chung2005}, namely \texttt{hk\_smooth} for the iterated kernel smoothing, \texttt{diffusion\_smooth} for the finite-element diffusion of~\citet{chung2004diffusion} and \texttt{lb\_smooth} for the eigenfunction expansion of~\citet{seo2010}. The surface is the inflated left hemisphere of the \texttt{fsaverage5} template, with $10\,242$ vertices.

\paragraph{Results.}

Table~\ref{tab:cortical-accuracy} reports the comparison at three smoothing times. While both competing methods use the measured thickness at all vertices, the PMMs impose~\eqref{eq:onestep} at $N_c=799$ collocation points only, that is, $7.8\%$ of the vertices, and use a single step instead of the $n$ steps of~\citet{chung2004diffusion}. Both PMMs use polynomial degree $p=3$. Under this budget, PMMs improve on the iterated kernel smoothing of~\citet{chung2005} at every $t$, and remain within a small factor of the finite-element diffusion of~\citet{chung2004diffusion}, both in relative $L^2$ and $L^\infty$. Our approach thus attains a comparable accuracy from less than a tenth of the observations, predicting the smoothed field at the remaining $92.2\%$ of the vertices and providing a credible interval at every vertex, which neither competing method does.

The empirical coverage of both PMMs is close to the nominal levels at $t=0.002$ and degrades as $t$ increases. The two stencils give nearly identical results on this surface. Indeed, the estimated reach is smaller than the fill distance, so that the admissible range is empty and $\varrho_N$ is set by the second fallback, at a value close to the $L^{h}$ stencil.
Figure~\ref{fig:cortical_pred} shows the smoothed field at three diffusion times, with the measured thickness and the reference in the first two columns, and Figure~\ref{fig:cortical_err} shows the absolute error against the reference. Since all panels share the same colour scale, the increasing smoothing of the field with $t$ is visible across rows.

\begin{table}[t]
  \centering
  \caption{Cortical thickness smoothing at three diffusion times $t$. Errors are relative $L^2$ and $L^\infty$ against the heat semigroup $e^{t\Delta}g$, computed as the eigenfunction expansion of~\citet{seo2010} in the cotangent basis truncated at $1\,000$ modes. The last three columns are the empirical coverage of the credible intervals at the two nominal levels and the median posterior standard deviation. \textbf{Bold} marks the lowest error in each column within each block.}
  \label{tab:cortical-accuracy}
    \resizebox{\linewidth}{!}{%
  \begin{tabular}{lrrrrr}
    \toprule
    Method & rel.\ $L^2$ & rel.\ $L^\infty$ & cov.\ $68.3\%$ (\%) & cov.\ $95\%$ (\%) & med.\ post.\ sd \\
    \midrule
    \multicolumn{6}{l}{\emph{$t = 0.002$}} \\
    PMM, $L^h$ + graph Mat\'ern (ours) & $3.50{\times}10^{-2}$ & $1.08{\times}10^{-1}$ & $64.5$ & $91.0$ & $0.067$ \\
    PMM, $L^{\varrho_N}$ + graph Mat\'ern (ours) & $3.62{\times}10^{-2}$ & $1.09{\times}10^{-1}$ & $64.0$ & $90.9$ & $0.069$ \\
    Iterated kernel smoothing \citep{chung2005} & $6.52{\times}10^{-2}$ & $4.16{\times}10^{-1}$ & -- & -- & -- \\
    Explicit FEM diffusion \citep{chung2004diffusion} & $\mathbf{2.21{\times}10^{-2}}$ & $\mathbf{7.84{\times}10^{-2}}$ & -- & -- & -- \\
    \addlinespace
    \multicolumn{6}{l}{\emph{$t = 0.01$}} \\
    PMM, $L^h$ + graph Mat\'ern (ours) & $3.38{\times}10^{-2}$ & $\mathbf{1.08{\times}10^{-1}}$ & $43.6$ & $69.0$ & $0.034$ \\
    PMM, $L^{\varrho_N}$ + graph Mat\'ern (ours) & $3.40{\times}10^{-2}$ & $1.10{\times}10^{-1}$ & $44.1$ & $70.0$ & $0.035$ \\
    Iterated kernel smoothing \citep{chung2005} & $5.41{\times}10^{-2}$ & $1.83{\times}10^{-1}$ & -- & -- & -- \\
    Explicit FEM diffusion \citep{chung2004diffusion} & $\mathbf{3.01{\times}10^{-2}}$ & $1.28{\times}10^{-1}$ & -- & -- & -- \\
    \addlinespace
    \multicolumn{6}{l}{\emph{$t = 0.05$}} \\
    PMM, $L^h$ + graph Mat\'ern (ours) & $3.82{\times}10^{-2}$ & $1.68{\times}10^{-1}$ & $26.9$ & $51.3$ & $0.018$ \\
    PMM, $L^{\varrho_N}$ + graph Mat\'ern (ours) & $3.81{\times}10^{-2}$ & $1.71{\times}10^{-1}$ & $29.8$ & $54.9$ & $0.018$ \\
    Iterated kernel smoothing \citep{chung2005} & $1.27{\times}10^{-1}$ & $4.42{\times}10^{-1}$ & -- & -- & -- \\
    Explicit FEM diffusion \citep{chung2004diffusion} & $\mathbf{2.39{\times}10^{-2}}$ & $\mathbf{6.85{\times}10^{-2}}$ & -- & -- & -- \\
    \bottomrule
  \end{tabular}
  }
\end{table}

\begin{figure}[htbp]
    \centering
    \sbox{\rowbox}{\includegraphics[width=0.96\textwidth]{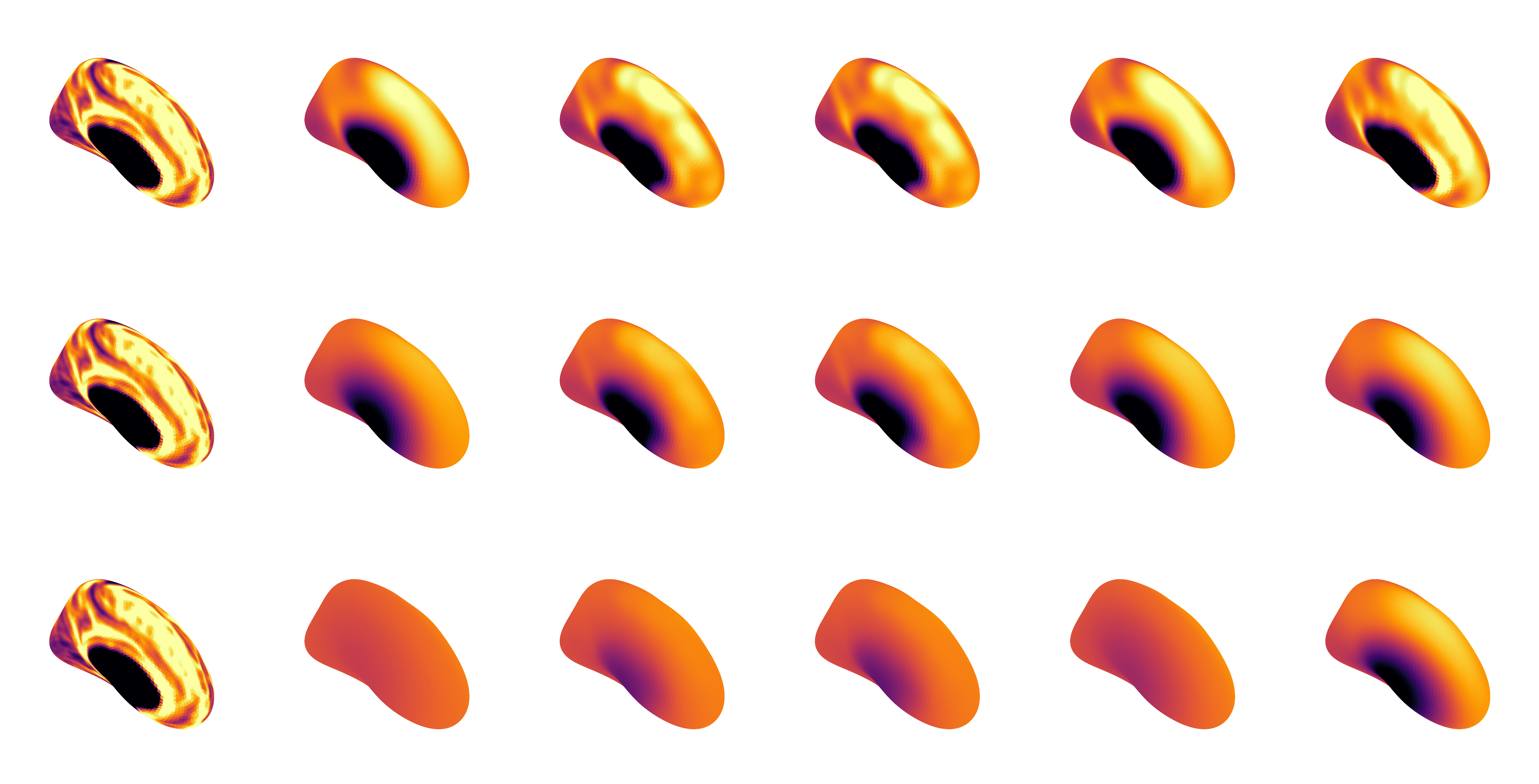}}%
    \settoheight{\rowht}{\usebox{\rowbox}}%
    \noindent
    \begin{minipage}[b]{0.03\textwidth}%
      \vbox to \rowht{%
        \vskip 0pt plus 1fil
        \hbox to \linewidth{\hfil\rotatebox{90}{\footnotesize $t=0.002$}\hfil}
        \vskip 0pt plus 2fil
        \hbox to \linewidth{\hfil\rotatebox{90}{\footnotesize $t=0.01$}\hfil}
        \vskip 0pt plus 2fil
        \hbox to \linewidth{\hfil\rotatebox{90}{\footnotesize $t=0.05$}\hfil}
        \vskip 0pt plus 1fil}%
    \end{minipage}%
    \begin{minipage}[b]{0.96\textwidth}%
      \noindent\pcell{6}{(a)}\pcell{6}{(b)}\pcell{6}{(c)}\pcell{6}{(d)}\pcell{6}{(e)}\pcell{6}{(f)}\par\nointerlineskip
      \usebox{\rowbox}%
    \end{minipage}
    \caption{Cortical thickness smoothing on the inflated left hemisphere of \texttt{fsaverage5}, $10\,242$ vertices drawn on the triangulation for (a) the measured thickness $g$, which is the datum being smoothed and is therefore identical in every row; (b) the reference, the eigenfunction expansion of~\citet{seo2010} in the cotangent basis at $1000$ modes; (c) PMM with $L^{h}$; (d) PMM with $L^{\varrho_N}$; (e) the finite-element diffusion of~\citet{chung2004diffusion}; (f) the iterated kernel smoothing of~\citet{chung2005}. Columns (c) and (d) condition on $N_c=799$ vertices, $7.8\%$ of the mesh, while (b), (e) and (f) read all of them.}
    \label{fig:cortical_pred}
\end{figure}

\begin{figure}[htbp]
    \centering
    \sbox{\rowbox}{\includegraphics[width=0.96\textwidth]{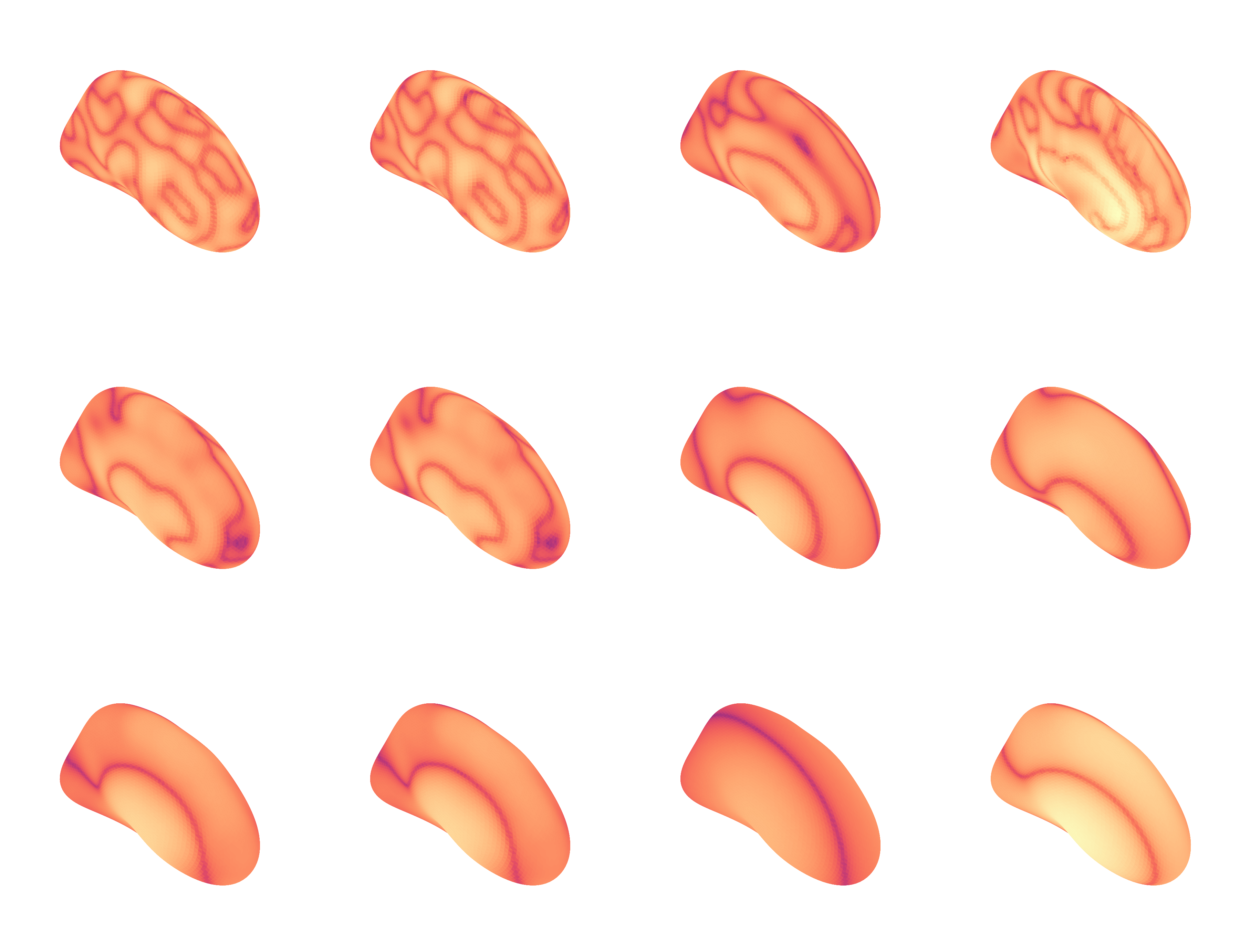}}%
    \settoheight{\rowht}{\usebox{\rowbox}}%
    \noindent
    \begin{minipage}[b]{0.03\textwidth}%
      \vbox to \rowht{%
        \vskip 0pt plus 1fil
        \hbox to \linewidth{\hfil\rotatebox{90}{\footnotesize $t=0.002$}\hfil}
        \vskip 0pt plus 2fil
        \hbox to \linewidth{\hfil\rotatebox{90}{\footnotesize $t=0.01$}\hfil}
        \vskip 0pt plus 2fil
        \hbox to \linewidth{\hfil\rotatebox{90}{\footnotesize $t=0.05$}\hfil}
        \vskip 0pt plus 1fil}%
    \end{minipage}%
    \begin{minipage}[b]{0.96\textwidth}%
      \noindent\pcell{4}{(a)}\pcell{4}{(b)}\pcell{4}{(c)}\pcell{4}{(d)}\par\nointerlineskip
      \usebox{\rowbox}%
    \end{minipage}
    \caption{Absolute error of each compared method against the reference of Figure~\ref{fig:cortical_pred}(b) for (a) PMM with $L^{h}$; (b) PMM with $L^{\varrho_N}$; (c) the finite-element diffusion of~\citet{chung2004diffusion}; (d) the iterated kernel smoothing of~\citet{chung2005}.}
    \label{fig:cortical_err}
\end{figure}

\FloatBarrier
\section{Conclusions}
This paper extended probabilistic meshless methods to manifold domains and quantified their convergence according to the information available about the manifold. When the manifold and the eigendecomposition of its Laplace--Beltrami operator are known, the construction carries over directly from the Euclidean setting. If, in addition, the differential operator is not available in closed form, it is replaced by a GFDM discretization, which is consistent at the nodes by Theorem~\ref{thm:known}. When the manifold is implicit, both the kernel and the operator are estimated from the point cloud, and we proposed a novel GFDM $L^{\varrho_N}$ which, by Theorem~\ref{thm:main}, yields uniform convergence of the Gram matrix. Finally, Theorem~\ref{thm:contraction} bounds the error of the posterior mean, separating the collocation error intrinsic to PMMs from the error due to the approximation of the operator and, in the implicit case, of the kernel. This bound reduces to the estimate of~\citet[Proposition~6]{cockayne2017probabilistic} when no approximation is made.

The experiments support our analysis. On the circle, the Gram matrix obtained with the standard GFDM stencil drifts away from its target as $N$ grows, while the one obtained by $L^{\varrho_N}$ converges, and the solution errors follow the same pattern. On the sphere and on the flat torus, the clouds are too coarse for the admissible range of the stencil radius to be nonempty and the standard stencil remains preferable. This is consistent with the theory, since $L^{\varrho_N}$ is designed for the asymptotic regime in which the amplification of the kernel error dominates. On cortical thickness data, our approach attains an accuracy comparable to that of established discretizations, while conditioning on fewer than a tenth of the vertices and providing a credible interval at every vertex.

Our approach has two main limitations. First, the graph kernel is only defined on the point cloud, so that, in the implicit case, the posterior is a distribution over the values at the nodes rather than a Gaussian process on the manifold, and convergence is only established at the nodes. Second, the admissible range of the stencil radius is nonempty only for large $N$, a regime in which computing both the graph Laplacian and the stencils becomes expensive.

\bibliography{ref}
\appendix

\section{The graph Laplacian and its spectral convergence}\label{app:graph}
Suppose $\Mfd$ is a $d$-dimensional closed manifold embedded in a higher dimensional Euclidean space. Consider a Gaussian kernel on the ambient space, $k_\zeta(x,x')=\exp\bigl(-\|x-x'\|_2^2/4\zeta^2\bigr)$ with $\zeta>0$, and denote with $\mathsf d_\zeta(x_i)=\sum_{j=1}^{N}k_\zeta(x_i,x_j)$ the associated degree. Define the affinity matrix as
\begin{equation}
    W^{(\alpha)}_{ij}:=\frac{k_\zeta(x_i,x_j)}
       {\mathsf d_\zeta(x_i)^{\alpha}\mathsf d_\zeta(x_j)^{\alpha}},
    \qquad \alpha \in [0,1],
\end{equation}
the degree matrix $D^{(\alpha)}_{ii}=\sum_j W^{(\alpha)}_{ij}$ and the transition matrix $\mathsf A^{(\alpha)}=\bigl(D^{(\alpha)}\bigr)^{-1}W^{(\alpha)}$. The graph Laplacian, which approximates $\Delta_\Mfd$, is defined as~\citep{Dunson2021}
\begin{equation}
    \mathsf L^{(\alpha)}=\frac{\mathsf A^{(\alpha)}-I}{\zeta^{2}} .
\end{equation}
For $\alpha=0$ this is the random walk graph Laplacian built on the unnormalized affinity matrix. Denote the spectrum of $-\mathsf L^{(\alpha)}$ by $0=\hat\lambda_{0}\le\hat\lambda_{1}\le\dots\le\hat\lambda_{N-1}\le 2\zeta^{-2}$, with eigenvectors $\hat\phi_{0},\dots,\hat\phi_{N-1}$ rescaled as in \citet[Definition~1]{Dunson2021}, so that they approximate the $L^{2}(\Mfd)$-normalized eigenfunctions. Assume that the eigenvalues of $-\Delta_\Mfd$ are simple, that the sampling density is smooth and non-vanishing, and that $\zeta=\zeta(N)$ is chosen properly. Then, for $N$ large and for a suitable sign $\varsigma_r\in\{-1,1\}$, \citet{Dunson2021} give, in the case $\alpha=1$,
\begin{equation}\label{eq:dww1}
    \bigl|\hat\lambda_{r}-\lambda^{\Delta}_r\bigr|\le C\Bigl(\tfrac{\log N}{N}\Bigr)^{\frac{3}{8d+26}},
    \qquad
    \max_{x_j\in X_N}\bigl|\varsigma_r\hat\phi_{r}^{(j)}-\phi_r(x_j)\bigr|
      \le C\Bigl(\tfrac{\log N}{N}\Bigr)^{\frac{1}{8d+16}},
\end{equation}
while for $\alpha=0$ under uniform sampling
\begin{equation}\label{eq:dww0}
    \bigl|\hat\lambda_{r}-\lambda^{\Delta}_r\bigr|\le C\Bigl(\tfrac{\log N}{N}\Bigr)^{\frac{1}{d+6}},
    \qquad
    \max_{x_j\in X_N}\bigl|\varsigma_r\hat\phi_{r}^{(j)}-\phi_r(x_j)\bigr|
      \le C\Bigl(\tfrac{\log N}{N}\Bigr)^{\frac{1}{d+4}} .
\end{equation}

\section{Convergence on implicit manifolds}\label{app:implicit}
Let $\Mfd$ be a smooth Riemannian $d$-manifold with geodesic distance $d_g(\cdot,\cdot)$ and volume $\dvol$. Throughout this appendix, $B$ denotes a geodesic ball in $\Mfd$ and $\mathsf B$ a Euclidean ball in $\R^{d}$. Write $\exp_x:\mathcal T_x\Mfd\to\Mfd$ for the exponential map of the Levi-Civita connection. Recalling the definitions of Section~\ref{sec:implementation}, the injectivity radius at $x$, $\inj(x)$, is the largest radius such that $\exp_x$ is a diffeomorphism from the ball of that radius onto its image, and it is easy to see that $\inj:=\inf_{x\in\Mfd}\inj(x)>0$ on a compact manifold. Fixing an orthonormal basis of $\mathcal T_{x_i}\Mfd$ and identifying $\mathcal T_{x_i}\Mfd\cong\R^{d}$, we write
\[
   \xi_i:=\exp_{x_i}^{-1}\;:\;B(x_i,\inj)\longrightarrow \mathsf B(0,\inj)\subset\R^{d}
\]
for the normal chart at $x_i$, so that $\xi_i(x_i)=0$ and
\begin{equation}\label{eq:radial}
   |\xi_i(y)|=d_g(x_i,y),
\end{equation}
for all $y\in B(x_i,\inj)$. Moreover, as in Section~\ref{sec:implementation}, the normal chart distorts geodesic distances by a bounded factor
\begin{equation}\label{eq:chartdistortion}
C_\inj^{-1}d_g(y,y’)\ \le\ |\xi_i(y)-\xi_i(y’)|\ \le\ C_\inj d_g(y,y’),
\end{equation}
for all $i$ and all $y,y'\in B(x_i,\inj/2)$. It is finite on a compact manifold and depends only on $\Mfd$, and $C_\inj=1$ exactly when $\Mfd$ is flat.
It is convenient to regard functions on $\Mfd$ as differentiated through this atlas. In other words, $u$ locally takes the form $u\circ\exp_{x}$ and it is a function of $z\in\R^{d}$. A linear differential operator of order $l$ acts as
\begin{equation}\label{eq:Linchart}
   (\Lop u)(x)=\sum_{|\beta|\le l}a_\beta(x)\,
     \partial^{\beta}_{z}\bigl(u\circ\exp_{x}\bigr)(0),
\end{equation}
with $A_\infty$ and $C_{\Lop}$ as in Assumption~\ref{ass:op}. We write
\[
   \|w\|_{C^{s}(\Mfd)}:=\sup_{x\in\Mfd}\ \sup_{|z| \leq \inj/2} \max_{|\gamma|\le s}\
   \bigl|\partial^{\gamma}_{z}\bigl(w\circ\exp_x\bigr)(z)\bigr| ,
\]
and for a kernel the mixed norm
\begin{equation}\label{eq:mixednorm}
   \|k\|_{C^{p+1,p+1}}
   :=\max_{|\gamma|\le p+1}\max_{|\delta|\le p+1}\ \sup_{x,y\in\Mfd} \sup_{|z_x|, |z_y| \leq \inj/2}
     \bigl|\partial^{\gamma}_{z_x}\partial^{\delta}_{z_y}
 \bigl[k\bigl(\exp_x(z_x),\exp_y(z_y)\bigr)\bigr]\bigr| .
\end{equation}
For multi-indices $\beta\in\N_0^{d}$ we use the notation $|\beta|=\sum_s\beta_s$, $\beta!=\prod_s\beta_s!$, $z^{\beta}=\prod_s z_s^{\beta_s}$, and $\Pip=\spn\{z^{\beta}:|\beta|\le p\}$, of dimension $Q=\binom{p+d}{d}$. Finally, $\Rest:C(\Mfd)\to\R^{N}$ is the restriction operator which acts as $(\Rest f)_i=f(x_i)$ for all $x_i \in X_N$. We also define norms for finite dimensional vectors as $\|v\|_{\ell^\infty}=\max_i|v_i|$ and $\|v\|_{\ell^1}=\sum_i|v_i|$.

We start by reporting two results of \citet{Wendland_2004} of fundamental relevance for the argument.
\begin{theorem}[{\citealp[Thm.~3.4]{Wendland_2004}}]\label{thm:W34}
Let $V$ be a finite-dimensional normed linear space and $\mathcal Z=\{\zeta_1,\dots,\zeta_{\mathsf M}\}\subseteq V^{*}$ a \emph{norming set} for $V$, i.e.\ the sampling operator $T:v\mapsto(\zeta_a(v))_a\in\R^{\mathsf M}$ is injective. Equip $\R^{\mathsf M}$ with $\|\cdot\|_{\infty}$, so the dual norm is $\|\cdot\|_{1}$, and set $\|T^{-1}\|=\sup_{v\ne0}\|v\|_V/\|Tv\|_{\infty}$. Then for every $\psi\in V^{*}$ there is $u\in\R^{\mathsf M}$ with $\psi(v)=\sum_a u_a \zeta_a(v)$ for all $v\in V$ and $\|u\|_{1}\le\|\psi\|_{V^{*}}\|T^{-1}\|$.
\end{theorem}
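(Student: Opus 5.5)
The plan is a Hahn--Banach extension argument: transport $\psi$ to the range of the sampling operator, extend it to all of $\R^{\mathsf M}$ without increasing its norm, and read off the coefficients $u$ from the extension.

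First, since $\mathcal Z$ is a norming set, $T:V\to\R^{\mathsf M}$ is injective. It is therefore a linear bijection onto its range $W:=T(V)$, which is a subspace of $\R^{\mathsf M}$. We equip $W$ with the restriction of $\|\cdot\|_\infty$. Because $V$ is finite dimensional, the inverse $T^{-1}:W\to V$ is bounded, and its operator norm is exactly the quantity $\|T^{-1}\|=\sup_{v\ne0}\|v\|_V/\|Tv\|_\infty$ of the statement.

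Second, given $\psi\in V^*$, define a linear functional on $W$ by
\[
   \varphi(w):=\psi\bigl(T^{-1}w\bigr),\qquad w\in W .
\]
For every $w\in W$,
\[
   |\varphi(w)|\le\|\psi\|_{V^*}\,\|T^{-1}w\|_V\le\|\psi\|_{V^*}\,\|T^{-1}\|\,\|w\|_\infty ,
\]
so $\|\varphi\|_{W^*}\le\|\psi\|_{V^*}\|T^{-1}\|$.

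Third, by the Hahn--Banach theorem, $\varphi$ extends to a linear functional $\tilde\varphi$ on $(\R^{\mathsf M},\|\cdot\|_\infty)$ with the same norm. The space is finite dimensional, so this step needs no choice principle; one can extend one dimension at a time. Every linear functional on $\R^{\mathsf M}$ has the form $\tilde\varphi(w)=\sum_a u_a w_a$ for some $u\in\R^{\mathsf M}$. The dual of $\|\cdot\|_\infty$ is $\|\cdot\|_1$, so $\|\tilde\varphi\|=\|u\|_1$, which gives $\|u\|_1\le\|\psi\|_{V^*}\|T^{-1}\|$.

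Finally, for every $v\in V$, using that $\tilde\varphi$ extends $\varphi$,
\[
   \psi(v)=\varphi(Tv)=\tilde\varphi(Tv)=\sum_a u_a\,\zeta_a(v),
\]
which is the claimed representation.

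There is no real obstacle in this argument. The only points requiring care are the identification of the dual norm of $\|\cdot\|_\infty$ with $\|\cdot\|_1$ (pick $w=\mathrm{sign}(u)$ to attain it) and checking that the norm of $T^{-1}$ on the range coincides with the supremum in the statement.
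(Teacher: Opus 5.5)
Your proposal is correct, and it is essentially the standard argument. The paper gives no proof of its own and simply cites \citet[Thm.~3.4]{Wendland_2004}, whose proof is the same: extend $\psi\circ T^{-1}$ from $T(V)$ to $(\R^{\mathsf M},\|\cdot\|_\infty)$ by Hahn--Banach without increasing its norm, then read off $u$ through the $\ell^\infty$--$\ell^1$ duality.
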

\begin{theorem}[{\citealp[Cor.~3.11]{Wendland_2004}}]\label{thm:W311}
Put $c_p:=\dfrac{3}{4(2+\sqrt3)p^{2}}$. If $Y\subseteq \mathsf B=\mathsf B(x_0,\delta)\subset\R^{d}$ satisfies $h_{Y,\mathsf B}\le c_p\delta$, then $\{\delta_y:y\in Y\}$ is a norming set for $\Pip|_{\mathsf B}$ with norming constant $\|T^{-1}\|\le2$.
\end{theorem}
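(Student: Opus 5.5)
We would prove the statement directly, via a Markov-type inequality for univariate polynomials rather than through the general cone-condition machinery. Injectivity of $T$ follows from the norming estimate itself: if $v\in\Pip$ vanishes on $Y$ and the estimate holds, then $\|v\|_{L^\infty(\mathsf B)}=0$. It therefore suffices to show that every $v\in\Pip$ with $\|v\|_{L^\infty(\mathsf B)}=1$ has some $y\in Y$ with $|v(y)|\ge 1/2$, which gives $\|T^{-1}\|\le2$.

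\textbf{Univariate tool and geometry.} We use the classical Markov inequality: a polynomial $q$ of degree at most $p$ on an interval $I$ of length $\ell$ satisfies $\|q'\|_{L^\infty(I)}\le \frac{2p^{2}}{\ell}\|q\|_{L^\infty(I)}$. Fix $v$ as above and a point $x^{*}\in\overline{\mathsf B}$ with $|v(x^{*})|=1$. Let $e$ be the unit vector pointing from $x^{*}$ towards $x_0$; if $x^{*}=x_0$, take any unit vector. Put $w:=x^{*}+t e$ for a parameter $t\in(0,\delta]$ to be chosen, so that $w\in\mathsf B$. Since $h_{Y,\mathsf B}\le c_p\delta=:h$, there is $y\in Y$ with $|y-w|\le h$. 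The direction $e'$ of the segment $[x^{*},y]$ then makes an angle $\varphi$ with $e$ satisfying $\sin\varphi\le h/t$.

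\textbf{Key step: long chord.} This is the main obstacle. We must show that the chord of $\mathsf B$ through $x^{*}$ in direction $e'$ has length $\ell\ge \delta\cos\varphi$, with the bound $2\delta\cos\varphi$ holding when $x^{*}$ lies on the boundary. For boundary points this is elementary geometry of a ball: the chord through $x^{*}$ at angle $\varphi$ from the inward normal has length exactly $2\delta\cos\varphi$. Interior points need a short separate argument, moving along $e'$ in both directions. Restricting $v$ to this chord gives a univariate polynomial of degree at most $p$ bounded by $1$. The Markov inequality and the mean value theorem then yield
\[
|v(x^{*})-v(y)|\le |x^{*}-y|\,\frac{2p^{2}}{\ell}\le \frac{(t+h)\,2p^{2}}{\delta\cos\varphi}.
\]

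\textbf{Choice of parameters.} Take $t=\kappa h$ for a constant $\kappa>1$, so that $\cos\varphi\ge\sqrt{1-\kappa^{-2}}$. The required inequality $|v(x^{*})-v(y)|\le 1/2$ then reduces to
\[
h\le \frac{\sqrt{1-\kappa^{-2}}}{4(1+\kappa)p^{2}}\,\delta,
\]
together with the constraint $t\le\delta$. We would optimise over $\kappa$, possibly sharpening the chord bound with the $2\delta\cos\varphi$ estimate, and check that $c_p=3/(4(2+\sqrt3)p^{2})$ lies within the admissible range. The exact constant is the delicate bookkeeping part. If the crude chord estimate does not reproduce $c_p$, we would instead follow Wendland's route: the ball satisfies an interior cone condition with radius $\delta$ and angle $\theta$, and the constant $3/(4(2+\sqrt3))$ is what that cone-based Markov estimate gives after optimising $\theta$. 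Under either route, $|v(y)|\ge 1-\tfrac12=\tfrac12$ follows, which concludes the argument.
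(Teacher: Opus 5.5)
\textbf{Gap: the stated constant is not reached.} Your reduction is sound: injectivity follows from the norming inequality, and it then suffices to find, for each $v\in\Pip$ with $\|v\|_{L^\infty(\mathsf B)}=1$, some $y\in Y$ with $|v(y)|\ge 1/2$, using Markov and the mean value theorem along a chord through the maximiser $x^{*}$. This is also a more direct route than the paper's, which gives no proof and imports the result from Wendland's cone-condition machinery. The problem is the step you defer, the constant, and it does not close with your estimates. With $|x^{*}-y|\le t+h$, $\sin\varphi\le h/t$ and $\ell\ge\delta\cos\varphi$, the admissible threshold is $\max_{\kappa>1}\frac{\sqrt{1-\kappa^{-2}}}{4(1+\kappa)}\,\frac{\delta}{p^{2}}\approx 0.075\,\delta/p^{2}$, attained at the golden ratio. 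Even with $\ell\ge 2\delta\cos\varphi$ it is only about $0.150\,\delta/p^{2}$, whereas $c_p\delta\approx 0.201\,\delta/p^{2}$.

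The fallback is asserted, not derived, and the natural computation contradicts it. The cone estimate you describe (a ball of radius $h$ at distance $h/\sin\theta$ from the apex, then Markov on a segment of length $r$) gives the threshold $\frac{r\sin\theta}{4(1+\sin\theta)p^{2}}$. For a ball of radius $\delta$ the cone radius satisfies $r\le\min\{\delta,2\delta\cos\theta\}$, so the best choice is $\theta=\pi/3$. That gives $\frac{\sqrt3}{4(2+\sqrt3)}\frac{\delta}{p^{2}}$, a factor $\sqrt3$ below $c_p\delta$. As written, the proposal does not prove the statement.

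\textbf{Why the bookkeeping loses, and how to repair it.} You bound $|x^{*}-y|$ and $\cos\varphi$ separately, but they are never worst at the same time. At the threshold $h=R$ the choice below has $t=h$, i.e.\ $\kappa=1$, which is exactly where your bound $\cos\varphi\ge\sqrt{1-\kappa^{-2}}$ degenerates. Two observations fix the argument.

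First, the chord bound holds for every $x^{*}\in\overline{\mathsf B}$, not only boundary points. Let $\rho=|x^{*}-x_0|$. The distance from $x_0$ to the line through $x^{*}$ in direction $e'$ is $\rho\sin\varphi$, so the full chord has length $2\sqrt{\delta^{2}-\rho^{2}\sin^{2}\varphi}\ge 2\delta\cos\varphi$. Hence $|v(x^{*})-v(y)|\le|x^{*}-y|\,p^{2}/(\delta\cos\varphi)$, and it suffices that $|x^{*}-y|\le 2R\cos\varphi$ with $R:=\delta/(4p^{2})$.

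Second, this admissible set is itself a ball. Since $|y-x^{*}-Re|\le R$ is equivalent to $|y-x^{*}|^{2}\le 2R\,(y-x^{*})\cdot e$, the set is exactly the closed ball of radius $R$ centred at $w:=x^{*}+Re$. This centre lies in $\mathsf B$ because $|w-x_0|=|\rho-R|<\delta$ when $R\le\delta/4$. Taking $t=R$, the condition $h_{Y,\mathsf B}\le R$ provides a node $y$ with $|y-w|\le R$, hence $|v(y)|\ge 1/2$.

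This proves the norming property under $h_{Y,\mathsf B}\le\delta/(4p^{2})$. That hypothesis is implied by the stated one, because $c_p=\frac{3(2-\sqrt3)}{4p^{2}}<\frac{1}{4p^{2}}$. With this fix your route is self-contained and yields a better constant than the quoted one.
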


The next lemma asserts the existence of stencils for the matrix $L^{\varrho_N}$ and guarantees exactness properties over intrinsic polynomials, as well as a bound over the stencil norm.

\begin{lemma}\label{lem:stencil}
Let $p\ge l$ and put
\[
   c_p:=\frac{3}{4(2+\sqrt3)p^{2}},
   \qquad
   \delta_1:=\frac{c_p}{3\,C_\inj},
\]
with $C_\inj$ as in \eqref{eq:chartdistortion}. If
\[
   \varrho_N\le\min\{1,\inj/2\}
   \qquad\text{and}\qquad
   h_N/\varrho_N\le\delta_1 ,
\]
then at every node $x_i$ there \emph{exist} weights $(c^{\varrho_N}_{ij})_{j\in S^{\varrho_N}_i}$, where $S^{\varrho_N}_i:=X_N\cap B(x_i,\varrho_N)$, such that
\begin{align}
   \sum_{j\in S^{\varrho_N}_i}c^{\varrho_N}_{ij}\,q(x_j)&=(\Lop q)(x_i)
      \qquad\text{for all }q\in\Pip,\label{eq:exactness}\\
   \|c^{\varrho_N}_i\|_{1}&\le C_1\varrho_N^{-l},
      \qquad C_1:=2\,\gamma_{p,d,l}\,A_\infty,\label{eq:normbound}
\end{align}
with $\gamma_{p,d,l}$ depending only on $p,d,l$. In particular $C_1$ depends neither on $N$ nor on $|S^{\varrho_N}_i|$.
\end{lemma}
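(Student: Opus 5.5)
We apply Theorem~\ref{thm:W34} with the norming set supplied by Theorem~\ref{thm:W311}, working in the normal chart $\xi_i$ at each node. Here $\Pip$ is understood as the intrinsic polynomials $q=\tilde q\circ\xi_i$ with $\tilde q\in\Pip(\R^d)$. Fix a node $x_i$ and set $\delta:=\varrho_N$. Take $V=\Pip|_{\mathsf B(0,\delta)}$ with the sup norm on $\mathsf B(0,\delta)$, and the functionals $\zeta_j(\tilde q)=\tilde q(\xi_i(x_j))$ for $x_j\in S^{\varrho_N}_i$. Since $\varrho_N\le\inj/2$, the chart is a diffeomorphism on $B(x_i,\varrho_N)$, and by \eqref{eq:radial} it maps $B(x_i,\varrho_N)$ onto $\mathsf B(0,\varrho_N)$. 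Hence $Y:=\xi_i(S^{\varrho_N}_i)\subset\mathsf B(0,\delta)$.

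The first step is to check the fill-distance hypothesis $h_{Y,\mathsf B}\le c_p\delta$ of Theorem~\ref{thm:W311}. Take $z\in\mathsf B(0,\delta)$ and let $y=\exp_{x_i}(z)\in B(x_i,\varrho_N)$. There is a node $x_j$ with $d_g(y,x_j)\le h_N\le\delta_1\varrho_N$. This node need not lie in the ball, which is the main technical obstacle. To handle it, first pull $z$ radially inward to $z'=(1-t)z$, where $t$ is chosen so that $|z-z'|\le 2C_\inj\delta_1\varrho_N$ and $|z'|\le\varrho_N-\delta_1\varrho_N$; this is possible when $|z|$ is near the boundary, and otherwise we take $z'=z$. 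Any node within geodesic distance $h_N$ of $\exp_{x_i}(z')$ then lies in $B(x_i,\varrho_N)$ by the triangle inequality. Using the chart distortion \eqref{eq:chartdistortion}, valid on $B(x_i,\inj/2)$, its image lies within $C_\inj h_N\le C_\inj\delta_1\varrho_N$ of $z'$. Altogether,
\[
|z-\xi_i(x_j)|\le 3C_\inj\delta_1\varrho_N=c_p\delta .
\]
This is exactly where the factor $3C_\inj$ in $\delta_1$ is used. Some care is needed to keep the radial shift within $C_\inj$ factors, and I expect the constants to work out as stated. Theorem~\ref{thm:W311} then shows that $\{\zeta_j\}$ is norming with $\|T^{-1}\|\le2$; in particular $S^{\varrho_N}_i$ is unisolvent.

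Next, define $\psi(\tilde q):=(\Lop(\tilde q\circ\xi_i))(x_i)=\sum_{|\beta|\le l}a_\beta(x_i)\partial^\beta\tilde q(0)$ by \eqref{eq:Linchart}, which is linear on $V$. Theorem~\ref{thm:W34} gives weights $c_{ij}$ satisfying $\psi(\tilde q)=\sum_j c_{ij}\tilde q(\xi_i(x_j))$, which is \eqref{eq:exactness}, together with $\|c_i\|_1\le 2\|\psi\|_{V^*}$.

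It remains to bound $\|\psi\|_{V^*}$. For this, use a Markov-type inequality on the ball: for $\tilde q\in\Pip$, $|\partial^\beta\tilde q(0)|\le \gamma'_{p,d,|\beta|}\,\delta^{-|\beta|}\|\tilde q\|_{L^\infty(\mathsf B(0,\delta))}$. To obtain it, rescale to the unit ball, where all norms on the finite-dimensional space $\Pip$ are equivalent, which gives the constant $\gamma'$; scaling back produces the factor $\delta^{-|\beta|}$. Since $\delta=\varrho_N\le1$, we have $\delta^{-|\beta|}\le\delta^{-l}$. Summing over $|\beta|\le l$ gives $\|\psi\|_{V^*}\le \gamma_{p,d,l}A_\infty\varrho_N^{-l}$, with $\gamma_{p,d,l}=\sum_{|\beta|\le l}\gamma'_{p,d,|\beta|}$. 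This yields \eqref{eq:normbound} with $C_1=2\gamma_{p,d,l}A_\infty$, and the constant is independent of $N$ and of $|S^{\varrho_N}_i|$.
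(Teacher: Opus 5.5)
Your proposal is correct and follows essentially the same route as the paper: a radial retraction near the boundary of $\mathsf B(0,\varrho_N)$ to verify $h_{Y,\mathsf B}\le 3C_\inj\delta_1\varrho_N=c_p\varrho_N$ for Theorem~\ref{thm:W311}, a dilation (Markov-type) bound $\|\psi\|_{V^*}\le\gamma_{p,d,l}A_\infty\varrho_N^{-l}$, and then Theorem~\ref{thm:W34}. One small fix is needed: retracting only to $|z'|=(1-\delta_1)\varrho_N$ gives $d_g(x_i,x_j)\le\varrho_N$, not strict membership in the open ball $B(x_i,\varrho_N)$, so retract to $|z'|=\varrho_N-2\delta_1\varrho_N$ instead (the paper uses $\varrho_N-2h_N$); this still fits your shift budget $2C_\inj\delta_1\varrho_N$ and gives the same constant.
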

\begin{proof}
Given $\varrho_N\le\inj/2$, the chart $\xi_i$ is defined on $B(x_i,\inj)\supseteq B(x_i,\varrho_N)$. Put
\[
   \mathsf B:= \mathsf B(0,\varrho_N)\subset\R^{d},
   \qquad
   Y:=\bigl\{y_j:=\xi_i(x_j)\ :\ x_j\in S^{\varrho_N}_i\bigr\},
   \qquad
   V:=\Pip\big|_{\mathsf B}.
\]
Define $\mathcal Z:=\{\delta_{y_j}:x_j\in S^{\varrho_N}_i\}$ with sampling operator $T$, where $\R^{|Y|}$ is equipped with $\|\cdot\|_{\infty}$. Note $Y\subseteq \mathsf B$, since $|\xi_i(x_j)|=d_g(x_i,x_j)<\varrho_N$, and $Y\ne\emptyset$ because $x_i\in S^{\varrho_N}_i$.
Given $p\ge1$ and $C_\inj\ge1$,
\[
   \delta_1\ \le\ \frac{c_p}{3}\ =\ \frac{1}{4(2+\sqrt3)p^{2}}\ <\ \frac14 .
\]
The standing hypothesis $h_N\le\delta_1\varrho_N$ therefore gives $h_N<\varrho_N/4$ and, in particular,
\begin{equation}\label{eq:rhominus}
   \varrho_N-2h_N\ >\ \varrho_N-\frac{\varrho_N}{2}\ =\ \frac{\varrho_N}{2}\ >\ 0 ,
\end{equation}
which is needed for the retraction to be well defined.

\medskip\noindent\emph{Step 1: $\mathcal Z$ is a norming set with constant $2$.}
It is clear then that the objective is to verify the hypothesis $h_{Y,\mathsf B}\le c_p\varrho_N$ of Theorem~\ref{thm:W311}. Fix $z\in\mathsf B$ and distinguish two cases. If $|z|\le\varrho_N-2h_N$, let $y:=\exp_{x_i}(z)$, so $d_g(x_i,y)=|z|$ by \eqref{eq:radial}. By definition of the fill distance there is a node $x_j$ with $d_g(y,x_j)\le h_N$, and it lies inside the ball,
\[
   d_g(x_i,x_j)\le d_g(x_i,y)+d_g(y,x_j)\le(\varrho_N-2h_N)+h_N=\varrho_N-h_N<\varrho_N ,
\]
so $x_j\in S^{\varrho_N}_i$ and $\xi_i(x_j)\in Y$. By \eqref{eq:chartdistortion} and $\xi_i(y)=z$,
\begin{equation}\label{eq:case1}
   \dist(z,Y)\le|\xi_i(x_j)-\xi_i(y)|\le C_\inj\,d_g(x_j,y)\le C_\inj h_N .
\end{equation}
If instead $\varrho_N-2h_N<|z|<\varrho_N$, retract $z$ radially to $z':=\frac{\varrho_N-2h_N}{|z|}z$, which satisfies $|z'|=\varrho_N-2h_N$, so the previous case applies to $z'$. Moreover
\[
   |z-z'|=\Bigl|z\Bigl(1-\tfrac{\varrho_N-2h_N}{|z|}\Bigr)\Bigr|
   =|z|-(\varrho_N-2h_N)<\varrho_N-(\varrho_N-2h_N)=2h_N ,
\]
using $|z|>\varrho_N-2h_N$ and $|z|<\varrho_N$ for the second inequality. Hence, by the triangle inequality,
\begin{equation}\label{eq:case2}
   \dist(z,Y)\le|z-z'|+\dist(z',Y)\le 2h_N+C_\inj h_N=(2+C_\inj)h_N .
\end{equation}
Since $C_\inj\ge1$, we have $2+C_\inj\le3C_\inj$, and if we now combine the two cases we obtain $\dist(z,Y)\le3C_\inj h_N$. Taking the supremum over $z\in \mathsf B$,
\[
   h_{Y,\mathsf B}\ \le\ 3C_\inj h_N\ \le\ 3C_\inj\,\delta_1\varrho_N
   \ =\ 3C_\inj\frac{c_p}{3C_\inj}\,\varrho_N\ =\ c_p\varrho_N .
\]
Theorem~\ref{thm:W311} now applies with $\delta=\varrho_N$.

\medskip\noindent\emph{Step 2: the functional has norm $O(\varrho_N^{-l})$.}
Define $\psi\in V^{*}$ by $\psi(q):=(\Lop q)(x_i)$, which in our convention writes $\psi(q)=\sum_{|\beta|\le l}a_\beta(x_i)\partial^{\beta}q(0)$. The dilation $q\mapsto\tilde q$, $\tilde q(z):=q(\varrho_N z)$, is an isometry of $(\Pip|_{\mathsf B(0,\varrho_N)},\|\cdot\|_{L^{\infty}})$ onto $(\Pip|_{\mathsf B(0,1)},\|\cdot\|_{L^{\infty}})$, and by the chain rule $\partial^{\beta}q(0)=\varrho_N^{-|\beta|}\partial^{\beta}\tilde q(0)$. Let $\gamma_\beta$ be the norm of the functional $\tilde q\mapsto\partial^{\beta}\tilde q(0)$, a constant depending only on $p,d,\beta$ which is finite since linear functionals are bounded on finite dimensional spaces. Then
\[
   |\partial^{\beta}q(0)|=\varrho_N^{-|\beta|}|\partial^{\beta}\tilde q(0)|
   \le\gamma_\beta\,\varrho_N^{-|\beta|}\|q\|_{L^{\infty}(\mathsf B)} ,
\]
and since $\varrho_N\le1$ gives $\varrho_N^{-|\beta|}\le\varrho_N^{-l}$ for $|\beta|\le l$,
\[
   \|\psi\|_{V^{*}}\le A_\infty\Bigl(\sum_{|\beta|\le l}\gamma_\beta\Bigr)\varrho_N^{-l}
   =\gamma_{p,d,l}A_\infty\varrho_N^{-l} .
\]
\medskip\noindent\emph{Step 3: conclusion.}
Theorem~\ref{thm:W34}, applied to $\mathcal Z$ and $\psi$, gives $u\in\R^{|Y|}$ with $\psi(q)=\sum_{j}u_j\delta_{y_j}(q)$ for all $q\in V$ and $\|u\|_{1}\le\|\psi\|_{V^{*}}\|T^{-1}\|$. Setting $c^{\varrho_N}_{ij}:=u_j$ and reading the identity back through the chart gives \eqref{eq:exactness}, while Steps 1--2 give $\|c^{\varrho_N}_i\|_1\le2\gamma_{p,d,l}A_\infty\varrho_N^{-l}$.
\end{proof}

We record here that the results of this appendix depend on the stencil coefficients only through their support in $S^{\varrho_N}_i$, through~\eqref{eq:exactness} and through~\eqref{eq:normbound}. They therefore hold for the coefficients computed in Section~\ref{sec:implementation}, namely the ones of minimal $\ell^1$ norm among those supported in $S^{\varrho_N}_i$ that satisfy~\eqref{eq:exactness}, which satisfy~\eqref{eq:normbound} with the same constant $C_1$, since the coefficients given by Lemma~\ref{lem:stencil} are feasible.

The following result provides a proof that $L^{\varrho_N}$ approximates the operator as in the standard case, but with convergence speed controlled by $\varrho_N \geq h_N/\delta_1$.
\begin{lemma}\label{lem:consistency}
Under the hypotheses of Lemma~\ref{lem:stencil}, for every $w\in C^{p+1}(\Mfd)$,
\[
   \bigl\|L^{\varrho_N}\Rest w-\Rest\Lop w\bigr\|_{\ell^{\infty}}
   \ \le\ C_2\,\varrho_N^{\,p+1-l}\,\|w\|_{C^{p+1}(\Mfd)},
\]
where $C_2:=C_1C_{\mathrm{Tay}}$ and $C_{\mathrm{Tay}}:=\frac{d^{\,p+1}}{(p+1)!}$.
\end{lemma}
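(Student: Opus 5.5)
The plan is the standard polynomial-reproduction argument. We fix a node $x_i$ and work in the normal chart $\xi_i$. We Taylor-expand $w$ about $x_i$ up to degree $p$, and use the exactness~\eqref{eq:exactness} to cancel the polynomial part. The remainder is then controlled through the $\ell^1$ bound~\eqref{eq:normbound} of Lemma~\ref{lem:stencil}.

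\emph{Taylor polynomial.} Set $W:=w\circ\exp_{x_i}$, defined on $\mathsf B(0,\inj)$. Let
\[
q(z):=\sum_{|\beta|\le p}\frac{\partial^\beta W(0)}{\beta!}z^\beta\in\Pip ,
\]
read back on $B(x_i,\inj)$ through $\xi_i$, i.e.\ as $q\circ\xi_i$. By~\eqref{eq:Linchart}, $(\Lop w)(x_i)$ depends only on $\partial^\beta W(0)$ with $|\beta|\le l\le p$, and these coincide with $\partial^\beta q(0)$. Hence $(\Lop w)(x_i)=(\Lop q)(x_i)$.

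\emph{Splitting the error.} By~\eqref{eq:exactness},
\[
(L^{\varrho_N}\Rest w)_i-(\Rest\Lop w)_i=\sum_{j\in S^{\varrho_N}_i}c^{\varrho_N}_{ij}\bigl(w(x_j)-q(\xi_i(x_j))\bigr).
\]
Therefore the error is at most $\|c^{\varrho_N}_i\|_1\max_{j}|W(y_j)-q(y_j)|$, where $y_j=\xi_i(x_j)$ satisfies $|y_j|=d_g(x_i,x_j)<\varrho_N\le\inj/2$ by~\eqref{eq:radial}.

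\emph{Remainder bound.} Apply the Lagrange form of the remainder along the segment from $0$ to $y_j$. This segment stays in $\mathsf B(0,\inj/2)$, where the derivatives of $W$ are bounded by $\|w\|_{C^{p+1}(\Mfd)}$ (the norm is defined with $x=x_i$, $|z|\le\inj/2$). This gives
\[
|W(y_j)-q(y_j)|\le\sum_{|\beta|=p+1}\frac{|y_j^\beta|}{\beta!}\|w\|_{C^{p+1}}\le\frac{\|y_j\|_1^{p+1}}{(p+1)!}\|w\|_{C^{p+1}},
\]
using the multinomial identity $\sum_{|\beta|=p+1}|y^\beta|/\beta!=\|y\|_1^{p+1}/(p+1)!$. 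Since $\|y\|_1\le\sqrt d\,|y|\le d\,|y|$, we get $\|y_j\|_1^{p+1}\le d^{p+1}\varrho_N^{p+1}$, which yields the constant $C_{\mathrm{Tay}}$.

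\emph{Conclusion.} Combining with $\|c^{\varrho_N}_i\|_1\le C_1\varrho_N^{-l}$ gives
\[
|(L^{\varrho_N}\Rest w)_i-(\Rest\Lop w)_i|\le C_1C_{\mathrm{Tay}}\,\varrho_N^{p+1-l}\,\|w\|_{C^{p+1}(\Mfd)}.
\]
This bound is uniform in $i$, so taking the maximum over nodes gives the claim.

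\emph{Expected obstacle.} The main point of care is bookkeeping rather than substance. One has to check that the polynomial space in~\eqref{eq:exactness} is $\Pip$ composed with the normal chart $\xi_i$, consistent with the convention~\eqref{eq:Linchart}, so that $\Lop q$ at $x_i$ only sees the Taylor coefficients of order at most $l$ of $W$ at $0$. One also has to check that the remainder is taken entirely inside the chart domain where the $C^{p+1}$ norm is defined.
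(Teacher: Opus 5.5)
Your proposal is correct and follows essentially the same route as the paper. You Taylor-expand $w\circ\exp_{x_i}$ to degree $p$, cancel the polynomial part via exactness~\eqref{eq:exactness} together with the fact that $\Lop$ at $x_i$ only sees derivatives of order at most $l\le p$, and bound the remainder at the stencil nodes against $\|c^{\varrho_N}_i\|_1\le C_1\varrho_N^{-l}$. The differences are cosmetic: you use the Lagrange rather than the integral form of the remainder, and the multinomial identity with $\|y\|_1\le\sqrt d\,|y|$ in place of the paper's $|z^\gamma|\le|z|^{p+1}$ and $\sum_{|\gamma|=p+1}1/\gamma!=d^{p+1}/(p+1)!$; your route actually yields the slightly sharper factor $d^{(p+1)/2}$ before you relax it to $C_{\mathrm{Tay}}$.
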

\begin{proof}
Fix $i$ and work in the chart $\xi_i$, writing $f:=w\circ\exp_{x_i}$. Note that every stencil node has $|\xi_i(x_j)|=d_g(x_i,x_j)<\varrho_N$, so all evaluations below take place in $\mathsf B(0,\varrho_N)$. Let $T_i=\sum_{|\gamma|\le p}\frac{\partial^{\gamma}f(0)}{\gamma!}\,z^{\gamma}\in\Pip$ be the degree-$p$ Taylor polynomial of $f$ at $0$ and let the remainder be $r_i:=f-T_i$. Define now $g(t):=f(tz)$, $t\in[0,1]$, and expanding $g^{(m)}(t)=(z\cdot\nabla)^{m}f(tz)=\sum_{|\gamma|=m}\frac{m!}{\gamma!}z^{\gamma}\partial^{\gamma}f(tz)$, we notice that these are the numerators of the sum defining $T_i$ at $t=0$. The one-dimensional Taylor formula with integral remainder, applied to $g$ at $t=0$ and evaluated at $t=1$, therefore reads
\[
   f(z)=g(1)=\sum_{m=0}^{p}\frac{g^{(m)}(0)}{m!}
        +\frac{1}{p!}\int_{0}^{1}(1-t)^{p}g^{(p+1)}(t)\,dt
        =T_i(z)+\frac{1}{p!}\int_{0}^{1}(1-t)^{p}g^{(p+1)}(t)\,dt .
\]
Substituting the expansion of $g^{(p+1)}$ and using $\frac{(p+1)!}{p!}=p+1$,
\begin{equation}\label{eq:remainder}
   r_i(z)=(p+1)\sum_{|\gamma|=p+1}\frac{z^{\gamma}}{\gamma!}
          \int_{0}^{1}(1-t)^{p}\,\partial^{\gamma}f(tz)\,dt .
\end{equation}
Bounding $|z^{\gamma}|\le|z|^{p+1}\le\varrho_N^{p+1}$, $\int_0^1(1-t)^{p}dt=\frac{1}{p+1}$, and $|\partial^{\gamma}f(tz)|\le\|w\|_{C^{p+1}(\Mfd)}$ for $t\in[0,1]$. This gives, for $|z|\le\varrho_N$,
\[
   |r_i(z)|\le\varrho_N^{p+1}\|w\|_{C^{p+1}(\Mfd)}\sum_{|\gamma|=p+1}\frac{1}{\gamma!}
   =\frac{d^{\,p+1}}{(p+1)!}\varrho_N^{p+1}\|w\|_{C^{p+1}(\Mfd)},
\]
using $\sum_{|\gamma|=n}\tfrac{n!}{\gamma!}=d^{\,n}$. Moreover $\partial^{\gamma}_zr_i(0)=0$ for $|\gamma|\le p$, since $\partial^{\gamma}z^{\delta}|_{z=0}=\gamma!\,\delta_{\gamma\delta}$ shows that $T_i$ reproduces every derivative of $f$ at $0$ up to that order, and since $p\ge l$ this gives $(\Lop r_i)(x_i)=\sum_{|\beta|\le l}a_\beta(x_i)\partial^{\beta}r_i(0)=0$. Exactness \eqref{eq:exactness} reproduces $(\Lop T_i)(x_i)$, so that the two contributions of $T_i$ cancel and
\[
  \bigl(L^{\varrho_N}\Rest w\bigr)_i-(\Lop w)(x_i)
  =\underbrace{\textstyle\sum_jc^{\varrho_N}_{ij}T_i(x_j)-(\Lop T_i)(x_i)}_{=0}
   +\textstyle\sum_jc^{\varrho_N}_{ij}r_i(x_j)-\underbrace{(\Lop r_i)(x_i)}_{=0},
\]
thus the error is at most $\|c^{\varrho_N}_i\|_1\max_j|r_i(x_j)|\le C_1\varrho_N^{-l}C_{\mathrm{Tay}}\varrho_N^{p+1}\|w\|_{C^{p+1}}$.
\end{proof}

We are now ready to focus on the object of interest, which is the Gram matrix error. For convenience, split
\begin{equation}\label{eq:split}
   \Khat_{ij}-K_{ij}
   =\underbrace{\sum_{a,b}c^{\varrho_N}_{ia}c^{\varrho_N}_{jb}
       \bigl[\khat_{ab}-k(x_a,x_b)\bigr]}_{(\mathrm{A})_{ij}}
   +\underbrace{\sum_{a,b}c^{\varrho_N}_{ia}c^{\varrho_N}_{jb}\,k(x_a,x_b)-K_{ij}}_{(\mathrm{B})_{ij}} .
\end{equation}
First, we focus on term $\mathrm{B}$. It is clear that for this term we are interested in consistency properties similar to those of Lemma~\ref{lem:consistency}, but for a kernel. This is exactly the content of the following result.

\begin{lemma}\label{lem:double}
Under the hypotheses of Lemmas~\ref{lem:stencil}--\ref{lem:consistency} and Assumption~\ref{ass:prior},
\[
   \max_{i,j}\bigl|(\mathrm{B})_{ij}\bigr|
   \ \le\ C_3\,\varrho_N^{\,p+1-l}\,\|k\|_{C^{p+1,p+1}},
   \qquad C_3:=C_2\bigl(2C_{\Lop}+C_2\bigr).
\]
\end{lemma}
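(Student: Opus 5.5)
The plan is to reduce the double discretization error (B) to two applications of the single-argument consistency estimate of Lemma~\ref{lem:consistency}, one in each argument of the kernel. We introduce the intermediate quantity obtained by discretizing only the second argument and applying the exact operator in the first. Fix $i,j$ and define, for $x\in\Mfd$, the function $w_j(x):=\sum_b c^{\varrho_N}_{jb}\,k(x,x_b)$. Also define the exact partial image $v_j(x):=(\Lop_y k)(x,x_j)$. Then
\[
   (\mathrm{B})_{ij}
   =\Bigl[\sum_a c^{\varrho_N}_{ia}w_j(x_a)-(\Lop w_j)(x_i)\Bigr]
   +\Bigl[(\Lop w_j)(x_i)-(\Lop v_j)(x_i)\Bigr],
\]
since $(\Lop v_j)(x_i)=K_{ij}$.

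For the first bracket we invoke Lemma~\ref{lem:consistency} with $w=w_j$. This requires a bound $\|w_j\|_{C^{p+1}(\Mfd)}\le\|c^{\varrho_N}_j\|_1\|k\|_{C^{p+1,p+1}}$, which follows by differentiating under the finite sum in the first argument. Combining this with~\eqref{eq:normbound} gives a bound $C_2\varrho_N^{p+1-l}\,C_1\varrho_N^{-l}\|k\|$. This naive bound loses a factor $\varrho_N^{-l}$, so we must refine it.

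The refinement is to split $w_j=v_j+(w_j-v_j)$ before applying consistency. The first bracket then becomes $[L\Rest v_j-\Rest\Lop v_j]_i+[L\Rest e_j-\Rest\Lop e_j]_i$, where $e_j:=w_j-v_j$. Lemma~\ref{lem:consistency} bounds the first piece by $C_2\varrho_N^{p+1-l}\|v_j\|_{C^{p+1}}$. We have $\|v_j\|_{C^{p+1}}\le C_{\Lop}\|k\|_{C^{p+1,p+1}}$: in the mixed norm the $y$-derivatives of order $\le l\le p+1$ are bounded by $\|k\|$, the coefficients by $A_\infty$, and there are $\binom{l+d}{d}$ multi-indices. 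For the second piece we use $\|\cdot\|_{\ell^\infty}\le(\|c_i\|_1+C_{\Lop})\|e_j\|_{\infty\text{-type}}$. The key observation is that $e_j(x)$ is exactly the single-argument consistency error in the $y$ variable for the function $y\mapsto k(x,y)$, and similarly for its $x$-derivatives.

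The bracket $(\Lop w_j)(x_i)-(\Lop v_j)(x_i)=\Lop_x e_j(x_i)$ is handled the same way. We apply the proof of Lemma~\ref{lem:consistency} (the Taylor remainder argument) in $y$ to $y\mapsto\partial_x^\beta k(x_i,y)$ for $|\beta|\le l$. This gives $|\Lop_x e_j(x_i)|\le C_{\Lop}C_2\varrho_N^{p+1-l}\|k\|_{C^{p+1,p+1}}$, the $x$-derivatives being controlled because the mixed norm includes them. Collecting terms, we get one $C_2C_{\Lop}$ contribution from $v_j$ and one from $\Lop_x e_j$, which together give $2C_2C_{\Lop}$. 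There remains the cross term $\sum_a c_{ia}e_j(x_a)$ minus $\Lop e_j(x_i)$, i.e. the consistency error applied to $e_j$ itself. We expect to bound it by $C_2\varrho_N^{p+1-l}$ times an appropriate norm of $e_j$ of size $C_2\|k\|$, giving the $C_2^2$ term.

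The main obstacle is precisely this cross term. The difficulty is that $e_j$ must be shown to have $C^{p+1}$-type size $C_2\|k\|$ without losing powers of $\varrho_N$. The most natural route gives $\|e_j\|_{C^{p+1}}$ of order $(C_1+C_{\Lop})\varrho_N^{-l}\|k\|$, which is too large. To avoid this, we would bound the cross term directly: we expand $e_j(x_a)$ around $x_i$ via the Taylor remainder in the $x$-variable inside the $y$-consistency error. This is a tensor-product remainder estimate using only $\|k\|_{C^{p+1,p+1}}$, applied through both stencils simultaneously. In this way exactness~\eqref{eq:exactness} kills the polynomial parts in each variable, and each surviving remainder contributes $C_2\varrho_N^{p+1-l}$.
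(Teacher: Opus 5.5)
Your decomposition is the paper's. With $w_j$ equal to the paper's $G_j$, your first bracket is its $E_1$ and your second bracket $(\Lop w_j)(x_i)-K_{ij}$ is its $E_2$. Your treatment of the second bracket and of the $v_j$ piece, including $\|v_j\|_{C^{p+1}}\le C_{\Lop}\|k\|_{C^{p+1,p+1}}$, is correct.

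The gap is the cross term, which you leave open, and your diagnosis of it is mistaken. The bound $\|e_j\|_{C^{p+1}}=O(\varrho_N^{-l})$ that you attribute to the natural route comes from the triangle inequality $\|w_j\|+\|v_j\|$ combined with the $\ell^1$ bound on $c^{\varrho_N}_j$. That is a lossy choice, not a necessity. The estimate by $(\|c^{\varrho_N}_i\|_1+C_{\Lop})$ times a sup-type norm of $e_j$ would genuinely fail: $\|c^{\varrho_N}_i\|_1\le C_1\varrho_N^{-l}$ times values of $e_j$ of size $O(\varrho_N^{p+1-l})$ only yields $O(\varrho_N^{p+1-2l})$. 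The tensor-product Taylor argument can be made to work, since exactness in each variable leaves only the double stencil application to the mixed remainder. As written, though, it is only a sketch, and it is unnecessary.

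The missing step is the paper's Step~1, and it follows from an observation you already state. For $x\in\Mfd$, $|z_x|\le\inj/2$ and $|\gamma|\le p+1$, set $\Psi(y):=\partial^{\gamma}_{z_x}\bigl[k(\exp_x(z_x),y)\bigr]$. Differentiate under the finite sum and commute $\partial^{\gamma}_{z_x}$ with $\Lop_y$:
\[
   \partial^{\gamma}_{z_x}\bigl(e_j\circ\exp_x\bigr)(z_x)=\bigl(L^{\varrho_N}\Rest\Psi-\Rest\Lop\Psi\bigr)_j .
\]
By the definition of the mixed norm, $\|\Psi\|_{C^{p+1}}\le\|k\|_{C^{p+1,p+1}}$. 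Lemma~\ref{lem:consistency} then gives $\|e_j\|_{C^{p+1}(\Mfd)}\le C_2\varrho_N^{\,p+1-l}\|k\|_{C^{p+1,p+1}}\le C_2\|k\|_{C^{p+1,p+1}}$, uniformly in $j$ and $N$. This is the mechanism you already use for $\Lop_x e_j(x_i)$, run for all derivatives of order at most $p+1$ at every base point rather than order at most $l$ at $x_i$.

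A further application of Lemma~\ref{lem:consistency} to $e_j$ bounds the cross term by $C_2^{2}\varrho_N^{\,p+1-l}\|k\|_{C^{p+1,p+1}}$. The three pieces then sum to $C_2(2C_{\Lop}+C_2)\varrho_N^{\,p+1-l}\|k\|_{C^{p+1,p+1}}=C_3\varrho_N^{\,p+1-l}\|k\|_{C^{p+1,p+1}}$. If you do not split $w_j$ and keep the term $\Lop\Psi$, the same identity gives the paper's bound $\|G_j\|_{C^{p+1}}\le(C_{\Lop}+C_2)\|k\|_{C^{p+1,p+1}}$, after which your argument coincides with the paper's.
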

\begin{proof}
Define the function
\[
  G_j(x):=\sum_b c^{\varrho_N}_{jb}\,k(x,x_b),
\]
in which the stencil has been applied in the second entry. It is a finite linear combination of the smooth functions $k(\cdot,x_b)$ with constant coefficients, hence $G_j\in C^{\infty}(\Mfd)$, and by construction
\begin{equation}\label{eq:Gidentity}
   \bigl(L^{\varrho_N}\Rest G_j\bigr)_i=\sum_a c^{\varrho_N}_{ia}G_j(x_a)
   =\sum_{a,b}c^{\varrho_N}_{ia}c^{\varrho_N}_{jb}k(x_a,x_b).
\end{equation}
\medskip\noindent\emph{Step 1: $\|G_j\|_{C^{p+1}}$ is bounded uniformly in $j,N$.}
Fix $x\in\Mfd$, $|z_x|\le\inj/2$ and $|\gamma|\le p+1$. Differentiating under the finite sum, $\partial^{\gamma}_{z_x}\bigl(G_j\circ\exp_x\bigr)(z_x)=\sum_b c^{\varrho_N}_{jb}\,\partial^{\gamma}_{z_x}\bigl[k(\exp_x(z_x),x_b)\bigr]$. Set now $\Psi_{x,z_x,\gamma}(y):=\partial^{\gamma}_{z_x}\bigl[k(\exp_x(z_x),y)\bigr]$, then
\[
   \partial^{\gamma}_{z_x}\bigl(G_j\circ\exp_x\bigr)(z_x)
   =\sum_b c^{\varrho_N}_{jb}\Psi_{x,z_x,\gamma}(x_b)
   =\bigl(L^{\varrho_N}\Rest\Psi_{x,z_x,\gamma}\bigr)_j .
\]
Unfolding \eqref{eq:mixednorm}, it is clear that
\begin{equation}\label{eq:psibound}
   \|\Psi_{x,z_x,\gamma}\|_{C^{p+1}}\le\|k\|_{C^{p+1,p+1}}
\end{equation}
since the latter ranges over a wider set. Lemma~\ref{lem:consistency} applied to $\Psi_{x,z_x,\gamma}$, read at component $j$, then gives
\[
   \bigl|\partial^{\gamma}_{z_x}\bigl(G_j\circ\exp_x\bigr)(z_x)\bigr|
   \le\bigl|(\Lop\Psi_{x,z_x,\gamma})(x_j)\bigr|
     +C_2\varrho_N^{\,p+1-l}\|\Psi_{x,z_x,\gamma}\|_{C^{p+1}} .
\]
For the first term, $|(\Lop\Psi_{x,z_x,\gamma})(x_j)|\le A_\infty\binom{l+d}{d}\|\Psi_{x,z_x,\gamma}\|_{C^{p+1}}=C_{\Lop}\|\Psi_{x,z_x,\gamma}\|_{C^{p+1}}$. For the second, $\varrho_N\le1$ and $p+1-l\ge1$ give $\varrho_N^{\,p+1-l}\le1$. With \eqref{eq:psibound}, and taking the supremum over $x$, over $|z_x|\le\inj/2$ and over $|\gamma|\le p+1$, we obtain
\begin{equation}\label{eq:Gbound}
   \|G_j\|_{C^{p+1}(\Mfd)}\ \le \bigl(C_{\Lop}+C_2\bigr)\|k\|_{C^{p+1,p+1}} =:C_G.
\end{equation}

\medskip\noindent\emph{Step 2: outer application.}
Apply Lemma~\ref{lem:consistency} to $G_j$ and rearrange terms so that we can state
\begin{equation}\label{eq:E1}
   \bigl(L^{\varrho_N}\Rest G_j\bigr)_i=(\Lop G_j)(x_i)+E_1,
\end{equation}
where $E_1$ is the approximation error, which is bounded by $|E_1|\le C_2C_G\varrho_N^{\,p+1-l}$.

\medskip\noindent\emph{Step 3: exchange.}
By linearity of the operator and finiteness of the sum, $(\Lop G_j)(x_i)=\sum_b c^{\varrho_N}_{jb}H_i(x_b)$ with $H_i(y):=(\Lop_xk)(x_i,y)$. Since $|\beta|\le l\le p+1$, each $\partial^{\beta}_x\partial^{\delta}_yk$ appearing in $\partial^{\delta}_yH_i$ is controlled by the kernel norm, so
\begin{equation}\label{eq:Hbound}
   \|H_i\|_{C^{p+1}(\Mfd)}\le C_{\Lop}\|k\|_{C^{p+1,p+1}},
\end{equation}
uniformly in $i$ and $N$.

\medskip\noindent\emph{Step 4: inner application.}
Similarly to Step 2, apply Lemma~\ref{lem:consistency} to $H_i$ at component $j$,
\begin{equation}\label{eq:E2}
   \sum_b c^{\varrho_N}_{jb}H_i(x_b)=(\Lop H_i)(x_j)+E_2,
\end{equation}
where the error is bounded by $|E_2|\le C_2C_{\Lop}\|k\|_{C^{p+1,p+1}}\varrho_N^{\,p+1-l}$ and $(\Lop H_i)(x_j)=(\Lop_y\Lop_xk)(x_i,x_j)=K_{ij}$.

\medskip\noindent\emph{Step 5: conclusion.}
Combining \eqref{eq:Gidentity}, \eqref{eq:E1}, Step 3 and \eqref{eq:E2} gives $(\mathrm{B})_{ij}=E_1+E_2$, and \eqref{eq:Gbound}--\eqref{eq:E2} give the bound, uniformly in $i,j$.
\end{proof}

It remains to bound the term $\mathrm A$.

\begin{lemma}[Kernel-error term]\label{lem:A}
$\displaystyle\max_{i,j}\bigl|(\mathrm{A})_{ij}\bigr|
 \le\|c^{\varrho_N}_i\|_1\|c^{\varrho_N}_j\|_1\,\eta_N\le C_1^{2}\varrho_N^{-2l}\eta_N$.
Likewise, for a single application,
$\max_{i,a}\bigl|\sum_b c^{\varrho_N}_{ab}[\khat_{ib}-k(x_i,x_b)]\bigr|\le C_1\varrho_N^{-l}\eta_N$.
\end{lemma}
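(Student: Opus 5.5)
The plan is to bound term (A) of \eqref{eq:split} entrywise by Hölder's inequality, that is, the duality between $\ell^1$ and $\ell^\infty$, applied twice, once for each stencil. Then Lemma~\ref{lem:stencil} controls the stencil norms. No consistency argument is needed, since (A) involves only the kernel error and not the operator.

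Fix $i,j$ and write $E_{ab}:=\khat_{ab}-k(x_a,x_b)$. By the definition \eqref{eq:etadef}, $|E_{ab}|\le\eta_N$ for all nodes $a,b$.

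\textbf{Inner sum.} For each fixed $a$, set $s_a:=\sum_b c^{\varrho_N}_{jb}E_{ab}$. Then
\[
  |s_a|\le\sum_b|c^{\varrho_N}_{jb}|\,|E_{ab}|\le\|c^{\varrho_N}_j\|_1\,\eta_N .
\]

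\textbf{Outer sum.} Next,
\[
  |(\mathrm A)_{ij}|=\Bigl|\sum_a c^{\varrho_N}_{ia}s_a\Bigr|\le\|c^{\varrho_N}_i\|_1\max_a|s_a|\le\|c^{\varrho_N}_i\|_1\|c^{\varrho_N}_j\|_1\eta_N .
\]
This is the first inequality. The sums are finite because each stencil is supported in $S^{\varrho_N}_i$, so the exchange of the two sums is trivial.

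\textbf{Second inequality.} This follows by inserting \eqref{eq:normbound}, $\|c^{\varrho_N}_i\|_1\le C_1\varrho_N^{-l}$, for both $i$ and $j$. The bound holds uniformly in $i,j$, so taking the maximum gives $C_1^2\varrho_N^{-2l}\eta_N$. As remarked after Lemma~\ref{lem:stencil}, the norm bound holds for the minimal $\ell^1$ coefficients actually used, because the coefficients built in Lemma~\ref{lem:stencil} are feasible for that problem.

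\textbf{Single-application statement.} This is just the inner step above, with indices relabelled: $\bigl|\sum_b c^{\varrho_N}_{ab}E_{ib}\bigr|\le\|c^{\varrho_N}_a\|_1\max_b|E_{ib}|\le C_1\varrho_N^{-l}\eta_N$, uniformly in $i$ and $a$.

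There is no real obstacle. The only points to check are that $\eta_N$ is a uniform bound over all node pairs, which holds because both stencils index only nodes of $X_N$, and that the hypotheses of Lemma~\ref{lem:stencil} are in force, so that \eqref{eq:normbound} applies at every node.
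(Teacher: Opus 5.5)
Your proposal is correct and follows the paper's proof: a Hölder ($\ell^1$--$\ell^\infty$) bound gives $\|c^{\varrho_N}_i\|_1\|c^{\varrho_N}_j\|_1\eta_N$, the stencil norm bound \eqref{eq:normbound} then gives $C_1^2\varrho_N^{-2l}\eta_N$, and the single-application estimate follows by the same argument. Your inner-then-outer presentation only spells out the double Hölder step that the paper writes in one line.
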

\begin{proof}
H\"older's inequality gives
\[
|\sum_{a,b}c^{\varrho_N}_{ia}c^{\varrho_N}_{jb}(\khat_{ab}-k(x_a,x_b))|\le(\max_{a,b}|\khat_{ab}-k(x_a,x_b)|)
\|c^{\varrho_N}_i\|_1\|c^{\varrho_N}_j\|_1.\]
Applying the bound in Lemma~\ref{lem:stencil}, the first estimate holds. The second estimate follows by the same argument related to a single application.
\end{proof}

\begin{proof}[Proof of Theorem~\ref{thm:main}]
By Lemmas~\ref{lem:double} and \ref{lem:A},
\[
   \max_{i,j}\bigl|\Khat_{ij}-K_{ij}\bigr|
   \le C_3\|k\|_{C^{p+1,p+1}}\varrho_N^{\,p+1-l}+C_1^{2}\varrho_N^{-2l}\eta_N .
\]
Define $C=C_3\|k\|_{C^{p+1,p+1}}+C_1^{2}$, which is finite by Assumption~\ref{ass:prior}. Since none of the constants depends on $N$, this gives
\[
    \max_{1\le i,j\le N}\bigl|\Khat_{ij}-K_{ij}\bigr|
    \ \le\ C\left(\varrho_N^{\,p+1-l}+\frac{\eta_N}{\varrho_N^{2l}}\right).
\]

For the last claim, set the right hand side of~\eqref{pdem:eq:gramrate} as $C\,f(\varrho_N)$ with $f(\varrho_N):=\varrho_N^{p+1-l}+\eta_N\varrho_N^{-2l}$, and note that $p+1-l \ge 1$ and $2l\ge 1$. Then
\[
    f'(\varrho_N) =\varrho_N^{-2l-1}\bigl((p+1-l)\varrho_N^{p+1+l}-2l\,\eta_N\bigr),
\]
and, since $\varrho_N$ is positive, the bracket is strictly increasing in $\varrho_N$ with a single zero. Setting $f'(\varrho_N)=0$, we find its unique minimizer is
\[
    \varrho_N^{\ast}=\Bigl(\frac{2l\,\eta_N}{p+1-l}\Bigr)^{\frac{1}{p+1+l}} .
\]
The minimiser is admissible, since $\varrho_N^{\ast}\asymp\eta_N^{1/(p+1+l)}\to 0$ and
$\eta_N(\varrho_N^{\ast})^{-2l}\asymp\eta_N^{(p+1-l)/(p+1+l)}\to 0$.
\end{proof}

\section{Error bounds for the posterior mean}\label{app:contraction}

\noindent Before starting with the discussion to prove the results regarding posterior contraction, we report here some useful notation already introduced in the main text. Define $v(x) \in \R^{N_c}$ with entries
\[ v_j(x):= \Lop_yk(x,y)|_{y=z_j},\]
and similarly $A\in\R^{N_c\times N_c}$ with entries
\[
   A_{ij}:=\Lop_x\Lop_yk(x,y)|_{x=z_i, y=z_j}.
\]
Henceforth, we denote with $\tilde \cdot$ all quantities where we adopted the desired approximation. For example, in the case of the kernel we write $\tilde k=k$ when the manifold is known and the spectral expansion of $\Delta_\Mfd$ is available, and $\tilde k=\khat$ in the implicit case. The perturbation size is defined as
\begin{equation*}
   \epsilon_N:=\max\Bigl\{\ \max_{x\in X_N}\bigl|\tilde k(x,x)-k(x,x)\bigr|,\
   \max_{x\in X_N}\max_j\bigl|\tilde v_j(x)-v_j(x)\bigr|,\
   \max_{i,j}\bigl|\tilde A_{ij}-A_{ij}\bigr|\ \Bigr\}.
\end{equation*}
For $u\in\R^{N_c}$, we introduce the quadratic form
\[
   Q_x(u):=k(x,x)-2u^{\top}v(x)+u^{\top}Au.
\]

\begin{lemma}\label{lem:epsilon}
For $\epsilon_N$ as in the previous definition, we have
\begin{enumerate}
\item \emph{(known kernel, $\tilde k=k$)} with the stencil radius taken at its smallest admissible value $\varrho_N= h_N/\delta_1$,
      $\ \epsilon_N\le C\,h_N^{\,p+1-l}\,\|k\|_{C^{p+1,p+1}}$;
\item \emph{(graph kernel, $\tilde k=\khat$)} with $\varrho_N$ as in Theorem~\ref{thm:main},
      $\epsilon_N\le C\,\eta_N^{\frac{p+1-l}{p+1+l}}$.
\end{enumerate}
\end{lemma}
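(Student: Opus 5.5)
We bound the three quantities in the definition of $\epsilon_N$ one at a time, in each of the two cases. Throughout, $\tilde v_j(x)=\sum_b \tilde c_{jb}\,\tilde k(x,x_b)$ and $\tilde A_{ij}=\sum_{a,b}\tilde c_{ia}\tilde c_{jb}\tilde k(x_a,x_b)$. Here $\tilde c$ denotes the rows of the GFDM matrix attached to the collocation nodes, so $\tilde A$ is the submatrix of $\Khat$ (respectively of $L\,k(X_N,X_N)L^\top$) indexed by $Z_{N_c}$. Since $Z_{N_c}\subseteq X_N$, any bound uniform over all $i,j\le N$ transfers directly to the collocation indices.

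\emph{Known kernel.} The diagonal term vanishes identically because $\tilde k=k$. For $\tilde v_j(x)-v_j(x)$ with $x\in X_N$, apply Lemma~\ref{lem:consistency} to the function $w=k(x,\cdot)$. Its $C^{p+1}$ norm is bounded by $\|k\|_{C^{p+1,p+1}}$, since this is the case $\gamma=0$ in~\eqref{eq:mixednorm}. This gives $|\tilde v_j(x)-v_j(x)|\le C_2\varrho_N^{p+1-l}\|k\|_{C^{p+1,p+1}}$ uniformly in $x,j$. For $\tilde A$, term (A) in~\eqref{eq:split} vanishes, and Lemma~\ref{lem:double} bounds term (B) by $C_3\varrho_N^{p+1-l}\|k\|_{C^{p+1,p+1}}$. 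Setting $\varrho_N=h_N/\delta_1$, which satisfies the hypotheses of Lemma~\ref{lem:stencil} once $h_N$ is small enough that $h_N/\delta_1\le\min\{1,\inj/2\}$, turns $\varrho_N^{p+1-l}$ into $\delta_1^{-(p+1-l)}h_N^{p+1-l}$. The constant $\delta_1^{-(p+1-l)}$ is absorbed into $C$. An equivalent route is to use the standard stencil and invoke Theorem~\ref{thm:known} together with~\eqref{eq:rate} applied to $k(x,\cdot)$. However, the double application behind $\tilde A$ then needs the $\ell^1$ growth of the standard stencil, so I would stay with $L^{\varrho_N}$ and the lemmas of Appendix~\ref{app:implicit}.

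\emph{Graph kernel.} The diagonal term is bounded directly by $\eta_N$ through~\eqref{eq:etadef}. For $\tilde v$, I split
\[
\tilde v_j(x)-v_j(x)=\sum_b c_{jb}\bigl[\khat(x,x_b)-k(x,x_b)\bigr]+\Bigl(\sum_b c_{jb}k(x,x_b)-(\Lop_y k)(x,z_j)\Bigr).
\]
The second estimate of Lemma~\ref{lem:A} bounds the first term by $C_1\varrho_N^{-l}\eta_N$. Lemma~\ref{lem:consistency} bounds the second term by $C_2\varrho_N^{p+1-l}\|k\|_{C^{p+1,p+1}}$, exactly as in the known case. For $\tilde A$, Theorem~\ref{thm:main} gives directly $C(\varrho_N^{p+1-l}+\eta_N\varrho_N^{-2l})$. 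Putting the three together, $\epsilon_N\le C\bigl(\varrho_N^{p+1-l}+\eta_N\varrho_N^{-2l}+\eta_N\varrho_N^{-l}+\eta_N\bigr)$. Because $\varrho_N\le1$, we have $\eta_N\le\eta_N\varrho_N^{-l}\le\eta_N\varrho_N^{-2l}$, so the last two terms are dominated by the second. Choosing $\varrho_N\asymp\eta_N^{1/(p+1+l)}$ as in~\eqref{pdem:eq:gramopt} then gives $\epsilon_N\le C\eta_N^{(p+1-l)/(p+1+l)}$.

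\emph{Main obstacle.} The delicate point is admissibility. Two things must hold: the prescribed radius must lie in the range required by Lemma~\ref{lem:stencil}, and the constants must not depend on $N$. In the graph case, admissibility of $\eta_N^{1/(p+1+l)}$ follows from $h_N^{p+1+l}/\eta_N\to0$ in Assumption~\ref{ass:graph}. It holds only with probability tending to one, so the bound inherits the same qualifier as Theorem~\ref{thm:main}. A secondary point is that the $x$-argument of $v$ ranges over the nodes, whereas $k(x,\cdot)$ must be treated as a fixed smooth function. This works because Lemma~\ref{lem:consistency} is uniform over $w$ with bounded $C^{p+1}$ norm.
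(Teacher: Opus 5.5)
Your proposal is correct and follows the paper's proof essentially step by step. It uses the same term-by-term treatment of $\epsilon_N$, with Lemma~\ref{lem:consistency} applied to $k(x,\cdot)$ and Lemma~\ref{lem:double} at $\varrho_N=h_N/\delta_1$ in the known case, and the same split of $\tilde v_j-v_j$ via the single-application estimate of Lemma~\ref{lem:A}, together with Theorem~\ref{thm:main} for $\tilde A$, in the graph case. The only difference is cosmetic: you absorb the $\eta_N$ and $\eta_N\varrho_N^{-l}$ terms using $\varrho_N\le1$ before fixing the radius, whereas the paper first substitutes $\varrho_N\asymp\eta_N^{1/(p+1+l)}$ and then compares exponents using $\eta_N\le1$.
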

\begin{proof}
For the known kernel case, the first entry vanishes while the third one follows from Lemma~\ref{lem:double} with $\varrho_N=h_N/\delta_1$. Finally, the second term follows from Lemma~\ref{lem:consistency} applied to $k(x, \cdot)$, whose $C^{p+1}$ norm is bounded by $\|k\|_{C^{p+1,p+1}}$ uniformly in $x$, again with $\varrho_N=h_N/\delta_1$. In the case of implicit kernel, the first term is at most $\eta_N$ by Assumption~\ref{ass:graph} and the third term is the object of Theorem~\ref{thm:main}. For the second term, we split into
\[
     \bigl|L^{\varrho_N}\khat(x,z_j)-\Lop_y k(x, z_j)\bigr| \leq \bigl|L^{\varrho_N}\khat(x,z_j)-L^{\varrho_N} k(x, z_j)\bigr| + \bigl|L^{\varrho_N}k(x,z_j)-\Lop_y k(x, z_j)\bigr| .
\]
The second part follows from the previous case, while the first follows from the single-application statement of Lemma~\ref{lem:A}, which gives $C_1\varrho_N^{-l}\eta_N=C_1\eta_N^{\frac{p+1}{p+1+l}}$; since $\eta_N\le1$ and $\frac{p+1}{p+1+l}>\frac{p+1-l}{p+1+l}$, this is dominated by $\varrho_N^{\,p+1-l}=\eta_N^{\frac{p+1-l}{p+1+l}}$.
\end{proof}
\begin{lemma}\label{lem:onefunctional}
For every node $x\in X_N$,
\[
   \tilde\mu(x)-u_0(x)=\tilde R_x\,u_0,
\]
where we defined $\tilde R_x:=\sum_{j=1}^{N_c}\bigl(\tilde A^{-1}\tilde v(x)\bigr)_j\,(\delta_{z_j} \circ \Lop)-\delta_x\ \in\ H_k^{*}$. Moreover, $\ \bigl\|\tilde R_x\bigr\|^{2}_{H_k^{*}}=Q_x\bigl(\tilde A^{-1}\tilde v(x)\bigr)$.
\end{lemma}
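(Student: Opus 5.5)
Fix a node $x\in X_N$ and write $w:=\tilde A^{-1}\tilde v(x)\in\R^{N_c}$. The argument has two parts: an identity for the error, and a computation of the dual norm of the error functional through the reproducing property of $k$.

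For the identity, I would start from the definition of the approximate posterior mean at the nodes, $\tilde\mu(x)=\tilde v(x)^\top\tilde A^{-1}\bm y=\sum_j w_j y_j$. The data are the forcing term at the collocation points, so $y_j=(\Lop u_0)(z_j)=(\delta_{z_j}\circ\Lop)u_0$. This is the step where $u_0$, rather than a prior sample, enters: $u_0$ solves the PDE exactly. Subtracting $u_0(x)=\delta_x u_0$ gives $\tilde\mu(x)-u_0(x)=\tilde R_x u_0$, and this is purely linear algebra.

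Next I would check that $\tilde R_x\in H_k^*$. Assumption~\ref{ass:prior}, or the standing requirement $H_k\subset C^{l}$, makes $k$ smooth enough, and then point evaluations $\delta_x$ and the functionals $\delta_{z_j}\circ\Lop$ are bounded on $H_k$. Their Riesz representers are
\[
\delta_x \leftrightarrow k(\cdot,x),\qquad \delta_{z_j}\circ\Lop \leftrightarrow \Lop_y k(\cdot,y)|_{y=z_j}.
\]
The second follows by differentiating the reproducing property $f(y)=\langle f,k(\cdot,y)\rangle_{H_k}$ under the inner product. This is the standard derivative-reproducing argument, for instance Zhou's theorem, and it works in normal charts since $k\in C^{p+1,p+1}$ with $p\ge l$.

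The representer of $\tilde R_x$ is then $r_x=\sum_j w_j\,\Lop_yk(\cdot,z_j)-k(\cdot,x)$. Expanding $\|r_x\|_{H_k}^2=\langle r_x,r_x\rangle$ with the reproducing property gives three terms:
\begin{itemize}
\item $\langle k(\cdot,x),k(\cdot,x)\rangle=k(x,x)$;
\item the cross terms $\langle \Lop_yk(\cdot,z_j),k(\cdot,x)\rangle=\Lop_yk(x,y)|_{y=z_j}=v_j(x)$, using the symmetry of $k$;
\item $\langle\Lop_yk(\cdot,z_i),\Lop_yk(\cdot,z_j)\rangle=\Lop_x\Lop_yk(z_i,z_j)=A_{ij}$.
\end{itemize}
Summing, $\|\tilde R_x\|^2_{H_k^*}=k(x,x)-2w^\top v(x)+w^\top Aw=Q_x(w)$. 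Note that $Q_x$ uses the exact $v$ and $A$, not the approximate ones, because the norm is taken in the exact RKHS while only the coefficient vector $w$ comes from the approximation.

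The main obstacle is the middle step: justifying that the derivative functionals are bounded with the stated representers on a manifold. The concern is that $\Lop$ is defined in normal coordinates that vary with the base point. I would handle this pointwise. For fixed $z_j$, $\Lop$ acts through the single chart $\exp_{z_j}$, so differentiating $k(\exp_{z_j}(\cdot),\cdot)$ in one chart suffices. Symmetry of mixed derivatives of $k$ then gives $\langle\Lop_yk(\cdot,z_i),\Lop_yk(\cdot,z_j)\rangle=A_{ij}$. The remaining steps are routine.
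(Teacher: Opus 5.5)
Your proposal is correct and follows the same route as the paper. The identity comes from writing $y_j=(\delta_{z_j}\circ\Lop)u_0$, and the norm comes from expanding the representer $\sum_j w_jR_j-k(\cdot,x)$ using $\langle R_i,R_j\rangle_{H_k}=A_{ij}$, $\langle R_j,k(\cdot,x)\rangle_{H_k}=v_j(x)$ and $\langle k(\cdot,x),k(\cdot,x)\rangle_{H_k}=k(x,x)$. The only difference is that you explicitly justify, via the derivative reproducing property, that $\delta_{z_j}\circ\Lop$ is bounded on $H_k$ with representer $\Lop_yk(\cdot,y)|_{y=z_j}$, which the paper takes for granted.
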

\begin{proof}
We can write data points as $y_j=(\delta_{z_j} \circ \Lop) u_0$, so $\tilde\mu(x)=[\tilde A^{-1} \tilde v(x)]^{\top}\bm y=\sum_j\bigl(\tilde A^{-1} \tilde v(x)\bigr)_j(\delta_{z_j} \circ \Lop)u_0$ and subtracting $u_0(x)=\delta_xu_0$ gives the identity. The Riesz representer of $\delta_{z_j} \circ \Lop$ in $H_k$ is $R_j=(\delta_{z_j} \circ \Lop_y)k(\cdot,y)$ and that of $\delta_x$ is $k(\cdot,x)$, so $\tilde R_x$ is represented by $\sum_j \bigl(\tilde A^{-1} \tilde v(x)\bigr)_jR_j-k(\cdot,x)$. Noting now that $\langle R_i,R_j\rangle_{H_k}=\Lop_x\Lop_yk(x,y)|_{x=z_i, y=z_j}=A_{ij}$, $\langle R_j,k(\cdot,x)\rangle_{H_k}=R_j(x)=\Lop_yk(x,y)|_{y=z_j}=v_j(x)$ and $\langle k(\cdot,x),k(\cdot,x)\rangle_{H_k}=k(x,x)$ gives
\begin{align*}
    \langle \tilde R_x,\tilde R_x\rangle_{H_k}
    &= \Bigl\langle \sum_i \bigl(\tilde A^{-1} \tilde v(x)\bigr)_iR_i-k(\cdot,x), \ \sum_j \bigl(\tilde A^{-1} \tilde v(x)\bigr)_jR_j-k(\cdot,x) \Bigr\rangle_{H_k} \\
    &= \Bigl\langle \sum_i \bigl(\tilde A^{-1} \tilde v(x)\bigr)_iR_i, \sum_j \bigl(\tilde A^{-1} \tilde v(x)\bigr)_jR_j \Bigr\rangle_{H_k}
       - 2 \Bigl\langle \sum_i \bigl(\tilde A^{-1} \tilde v(x)\bigr)_iR_i, k(\cdot, x)\Bigr\rangle_{H_k} \\
    &\qquad + \langle k(\cdot, x), k(\cdot, x)\rangle_{H_k} \\
    &=Q_x\bigl(\tilde A^{-1} \tilde v(x)\bigr).
\end{align*}
\end{proof}

We notice that, as a byproduct of this proof carried out with a general $u$ in place of $\tilde A^{-1}\tilde v(x)$, we get that $Q_x(u)$ is the squared dual norm
\[
   Q_x(u)=\Bigl\|\delta_x-\sum_{j=1}^{N_c}u_j\,(\delta_{z_j}\circ\Lop)\Bigr\|^{2}_{H_k^{*}} .
\]
In particular $Q_x\ge0$ and it is easy to see it admits a unique minimizer $A^{-1}v(x)$, with minimum value
\[
   \min_{u\in\R^{N_c}}Q_x(u)=k(x,x)-v(x)^{\top}A^{-1}v(x)=\sigma(x)^{2}.
\]
The same argument stays true for $\tilde Q_x$, $\tilde A^{-1}\tilde v(x)$ and $\tilde\sigma(x)^{2}$.

\begin{lemma}\label{lem:sandwich}
For every node $x\in X_N$,
\[
   Q_x\bigl(\tilde A^{-1} \tilde v(x)\bigr)\ \le\ \sigma(x)^{2}
     +2\,\epsilon_N\bigl(1+\Lambda_{\mathrm{Leb}}(x)\bigr)^{2}.
\]
\end{lemma}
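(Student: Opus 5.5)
The plan is to compare the exact quadratic form $Q_x$ with its perturbed counterpart
\[
   \tilde Q_x(u):=\tilde k(x,x)-2u^{\top}\tilde v(x)+u^{\top}\tilde A u,
\]
and to transfer the minimality of $\tilde A^{-1}\tilde v(x)$ for $\tilde Q_x$ into a bound on $Q_x$. Write $\tilde u:=\tilde A^{-1}\tilde v(x)$ and $u^{\ast}:=A^{-1}v(x)$. By the definition of $\Lambda_{\mathrm{Leb}}(x)$, both satisfy $\|\tilde u\|_1\le\Lambda_{\mathrm{Leb}}(x)$ and $\|u^{\ast}\|_1\le\Lambda_{\mathrm{Leb}}(x)$.

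\emph{Step 1: a uniform perturbation estimate.} For any $u\in\R^{N_c}$ we have
\[
   |Q_x(u)-\tilde Q_x(u)|\le|k(x,x)-\tilde k(x,x)|+2|u^{\top}(v(x)-\tilde v(x))|+|u^{\top}(A-\tilde A)u|.
\]
H\"older's inequality ($\ell^1$ against $\ell^\infty$, applied twice for the bilinear term) bounds each piece by $\epsilon_N$, $2\epsilon_N\|u\|_1$ and $\epsilon_N\|u\|_1^2$ respectively, using the three entries in the definition~\eqref{eq:epsdef} of $\epsilon_N$. Summing gives
\[
   |Q_x(u)-\tilde Q_x(u)|\le\epsilon_N(1+\|u\|_1)^2 .
\]

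\emph{Step 2: a three-link chain.} We estimate
\[
   Q_x(\tilde u)\le\tilde Q_x(\tilde u)+\epsilon_N(1+\Lambda_{\mathrm{Leb}}(x))^2
   \le\tilde Q_x(u^{\ast})+\epsilon_N(1+\Lambda_{\mathrm{Leb}}(x))^2
   \le Q_x(u^{\ast})+2\epsilon_N(1+\Lambda_{\mathrm{Leb}}(x))^2 .
\]
The first and third inequalities are Step 1 applied at $\tilde u$ and at $u^{\ast}$. The middle inequality uses that $\tilde u$ minimizes $\tilde Q_x$. Finally, $Q_x(u^{\ast})=\sigma(x)^2$ by the remark following Lemma~\ref{lem:onefunctional}, which closes the argument.

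\emph{Main obstacle: minimality of $\tilde u$.} The point needing care is that $\tilde u$ actually minimizes $\tilde Q_x$. This quadratic has gradient $2(\tilde A u-\tilde v(x))$, which vanishes at $\tilde u$. The stationary point is a global minimizer provided $\tilde A$ is symmetric positive semidefinite, and invertibility of $\tilde A$ is already implicit in the definition of $\tilde\mu$. I would argue positive semidefiniteness as follows. We have $\tilde A=L_c\tilde K_N L_c^{\top}$ up to restriction to the collocation rows, with $\tilde K_N=k(X_N,X_N)$ in the known case and $\tilde K_N=\Khat_N$ in the implicit case. Both are positive semidefinite: the first is a Gram matrix of a Mercer kernel, and the second is $\sum_r\Phi(\kappa^2+\hat\lambda_r)\hat\phi_r\hat\phi_r^{\top}$ with $\Phi>0$. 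Congruence by $L_c$ preserves positive semidefiniteness. Alternatively, one can verify directly that $\tilde Q_x(u)-\tilde Q_x(\tilde u)=(u-\tilde u)^{\top}\tilde A(u-\tilde u)\ge0$.
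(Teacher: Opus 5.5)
Your proposal is correct and follows the paper's proof essentially line for line. It uses the same perturbation bound $|Q_x(u)-\tilde Q_x(u)|\le\epsilon_N(1+\|u\|_1)^2$ and the same three-link chain through the minimality step $\tilde Q_x(\tilde u)\le\tilde Q_x(u^{\ast})$. Your explicit check that $\tilde A=L_c\tilde K_N L_c^{\top}$ is symmetric positive semidefinite, which guarantees that $\tilde A^{-1}\tilde v(x)$ really minimizes $\tilde Q_x$, is a useful addition, since the paper only asserts that the minimization argument carries over to $\tilde Q_x$.
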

\begin{proof}
First notice that, for any $u\in\R^{N_c}$,
\begin{align}\label{eq:QQtilde}
   \bigl|Q_x(u)-\tilde Q_x(u)\bigr|
   &\le\bigl|k(x,x)-\tilde k(x,x)\bigr|+2\|u\|_{1}\|v(x)-\tilde v(x)\|_{\infty}
     +\|u\|_{1}^{2}\max_{i,j}\bigl|A_{ij}-\tilde A_{ij}\bigr| \nonumber
   \\ &\le\epsilon_N\bigl(1+\|u\|_{1}\bigr)^{2}.
\end{align}
Both $Q_x$ and $\tilde Q_x$ are minimized over the same space $\R^{N_c}$, at $A^{-1}v(x)$ and $\tilde A^{-1}\tilde v(x)$ respectively, with minima $\sigma(x)^{2}$ and $\tilde\sigma(x)^{2}$. We now get
\begin{align*}
   Q_x\bigl(\tilde A^{-1} \tilde v(x)\bigr) &\le\ \tilde Q_x\bigl(\tilde A^{-1} \tilde v(x)\bigr)+\epsilon_N\bigl(1+\|\tilde A^{-1}\tilde v(x)\|_{1}\bigr)^{2} \\
    &\le\ \tilde Q_x\bigl(A^{-1}v(x)\bigr)+\epsilon_N\bigl(1+\|\tilde A^{-1}\tilde v(x)\|_{1}\bigr)^{2} \\
    &\le\ Q_x\bigl(A^{-1}v(x)\bigr)+\epsilon_N\bigl(1+\|\tilde A^{-1}\tilde v(x)\|_{1}\bigr)^{2}+\epsilon_N\bigl(1+\|A^{-1}v(x)\|_{1}\bigr)^{2} \\
    &=\ \sigma(x)^{2}+\epsilon_N\bigl(1+\|\tilde A^{-1}\tilde v(x)\|_{1}\bigr)^{2}+\epsilon_N\bigl(1+\|A^{-1}v(x)\|_{1}\bigr)^{2},
\end{align*}
where the second inequality follows since $\tilde A^{-1}\tilde v(x)$ minimizes $\tilde Q_x$. Both $\|A^{-1}v(x)\|_{1}$ and $\|\tilde A^{-1}\tilde v(x)\|_{1}$ are at most $\Lambda_{\mathrm{Leb}}(x)$ by the definition of $\Lambda_{\mathrm{Leb}}$, which gives the claim.
\end{proof}

\begin{proof}[Proof of Theorem~\ref{thm:contraction}]
Fix a node $x\in X_N$. By Lemma~\ref{lem:onefunctional},
\[
   \tilde\mu(x)-u_0(x)=\tilde R_x\,u_0,
   \qquad \tilde R_x\in H_k^{*},
\]
and since $u_0\in H_k$ by hypothesis,
\[
   \bigl|\tilde\mu(x)-u_0(x)\bigr| \le\ \bigl\|\tilde R_x\bigr\|_{H_k^{*}} \, \|u_0\|_{H_k} .
\]
The same lemma also gives $\bigl\|\tilde R_x\bigr\|^{2}_{H_k^{*}}=Q_x\bigl(\tilde A^{-1}\tilde v(x)\bigr)$, and Lemma~\ref{lem:sandwich} bounds this by $\sigma(x)^{2}+2\epsilon_N\bigl(1+\Lambda_{\mathrm{Leb}}(x)\bigr)^{2}$. Taking the square root of this estimate and combining with the previous one, the bound becomes
\[
   \bigl|\tilde\mu(x)-u_0(x)\bigr|
   \ \le\ \Bigl(\sigma(x)^{2}+2\epsilon_N\bigl(1+\Lambda_{\mathrm{Leb}}(x)\bigr)^{2}\Bigr)^{1/2}
     \|u_0\|_{H_k} .
\]
\end{proof}

This final part is devoted to proving the rate at which the posterior variance reduces as we increase the number of collocation points, which is used to obtain the formula~\eqref{eq:contraction_comparable}.

\begin{lemma}\label{lem:sigma}
Let $\Mfd\subset\R^{n}$ be a smooth closed $d$-dimensional manifold, let $\Lop$ be a differential operator of order $l$ with smooth coefficients, and let $\mathfrak s-l>d/2+1$, where $\mathfrak s$ is the exponent of Assumption~\ref{ass:prior}. Assume that there exists $C_{\mathrm{st}}>0$ such that
\[
   \|f\|_{L^{\infty}(\Mfd)}\le C_{\mathrm{st}}\|\Lop f\|_{L^{\infty}(\Mfd)}\qquad\text{for all }f\in H_k(\Mfd).
\]
Then $\sigma(x)=O\bigl(h_{N_c}^{\,\mathfrak s-l-d/2}\bigr)$ uniformly in $x\in\Mfd$, where $h_{N_c}$ is the geodesic fill distance of the collocation set $Z_{N_c}$.
\end{lemma}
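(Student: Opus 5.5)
We will use the variational characterisation obtained right after Lemma~\ref{lem:onefunctional}. It gives
\[
\sigma(x)^2=\min_{u\in\R^{N_c}}\Bigl\|\delta_x-\sum_{j}u_j(\delta_{z_j}\circ\Lop)\Bigr\|^2_{H_k^*}
=\sup_{\|f\|_{H_k}\le1}\bigl|f(x)-(\mathrm{interp}\,f)(x)\bigr|^2 .
\]
Here the supremum is attained by the worst-case error of the optimal recovery. Concretely, $\sigma(x)=\sup\{|f(x)|:\|f\|_{H_k}\le1,\ \Lop f(z_j)=0\ \forall j\}$. The plan is therefore to bound $|f(x)|$ uniformly for $f$ in the unit ball of $H_k$ whose image $\Lop f$ vanishes on $Z_{N_c}$.

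The proof then has three steps.

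\emph{Step 1.} We identify $H_k(\Mfd)$ with the Sobolev space $H^{\mathfrak s}(\Mfd)$. This follows from Assumption~\ref{ass:prior}: $\Phi(\kappa^2+\lambda_r)\asymp(1+\lambda_r)^{-\mathfrak s}$, so the norms are equivalent via the spectral characterisation of Sobolev spaces on closed manifolds. Consequently $g:=\Lop f\in H^{\mathfrak s-l}(\Mfd)$ with $\|g\|_{H^{\mathfrak s-l}}\le C\|f\|_{H_k}$, since $\Lop$ has smooth coefficients.

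\emph{Step 2.} We apply a sampling inequality (zeros lemma) on manifolds to $g$, which vanishes on $Z_{N_c}$. Because $\mathfrak s-l>d/2$, such an inequality, as in Wendland's local polynomial reproduction argument transported through the normal charts $\xi_i$ exactly as in Lemma~\ref{lem:stencil}, gives
\[
\|g\|_{L^\infty(\Mfd)}\le C\,h_{N_c}^{\,\mathfrak s-l-d/2}\,\|g\|_{H^{\mathfrak s-l}(\Mfd)}
\]
for $h_{N_c}$ below a threshold. To prove it we cover $\Mfd$ by geodesic balls of radius comparable to $h_{N_c}/c_p$ and pull back to $\R^d$ with bounded distortion $C_\inj$. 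On each ball, Theorem~\ref{thm:W311} provides norming sets and hence local polynomial reproduction with bounded Lebesgue constant. The local Sobolev--Bramble--Hilbert estimate then yields the local version, and we sum over a finite-overlap cover.

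\emph{Step 3.} The stability hypothesis gives $|f(x)|\le C_{\mathrm{st}}\|g\|_{L^\infty}\le C h_{N_c}^{\mathfrak s-l-d/2}\|f\|_{H_k}$. Taking the supremum over the unit ball gives the claim, uniformly in $x$. The stronger requirement $\mathfrak s-l>d/2+1$ leaves room for the standard sampling inequality with nonintegral smoothness and keeps the exponent positive.

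The main obstacle is Step 2: a clean manifold sampling inequality with the sharp exponent $\mathfrak s-l-d/2$. The delicate points are the fractional-order Sobolev norms in charts and the uniformity of the constants. I would first try to cite the manifold sampling inequalities of Fuselier--Wright (or Hangelbroek et al.) directly. Failing that, I would derive them locally as sketched above, using the uniform chart bounds already established in Appendix~\ref{app:implicit}.
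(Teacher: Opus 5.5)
Your proposal is correct and follows the paper's proof essentially step for step. You reduce $\sigma(x)$ to bounding $|f(x)|$ over the unit ball of $H_k$ with $\Lop f=0$ on $Z_{N_c}$, identify $H_k\cong H^{\mathfrak s}$, apply a sampling inequality to $g=\Lop f$, and conclude with the stability hypothesis; the paper makes the first reduction explicit through the residual $r_x=k(\cdot,x)-\sum_j(A^{-1}v(x))_jR_j$, which satisfies $r_x(x)=\|r_x\|_{H_k}^2=\sigma(x)^2$ and $(\Lop r_x)(z_j)=0$ and is exactly the extremal function in your optimal-recovery characterization, and for Step 2 it uses a fixed finite atlas from Wendland's Lemma~17.20 (with a radial retraction showing the chart images of $Z_{N_c}$ have fill distance at most $3h_{N_c}/m$) and applies Narcowich--Ward--Wendland's Theorem~2.12 on each chart ball, rather than working on $h_{N_c}$-scale normal-chart balls or citing manifold sampling inequalities directly.
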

\begin{proof}
Fix $x\in\Mfd$ and let $R_j$ be the Riesz representer of $\delta_{z_j}\circ\Lop$ in $H_k$, as in the proof of Lemma~\ref{lem:onefunctional}. Define
\[
   r_x:=k(\cdot,x)-\sum_{j=1}^{N_c}\bigl(A^{-1}v(x)\bigr)_j\,R_j\ \in H_k(\Mfd).
\]
Using $\langle R_i,R_j\rangle_{H_k}=A_{ij}$, $\langle R_j,k(\cdot,x)\rangle_{H_k}=v_j(x)$ and the symmetry of $A$, we get $(\Lop r_x)(z_i)=\langle r_x,R_i\rangle_{H_k}=v_i(x)-\bigl(AA^{-1}v(x)\bigr)_i=0$ for every $i$, and $r_x(x)=\langle r_x,k(\cdot,x)\rangle_{H_k}=k(x,x)-v(x)^{\top}A^{-1}v(x)=\sigma(x)^{2}$.
Moreover, since $r_x$ is orthogonal to every $R_j$, $\|r_x\|_{H_k}^{2}=\langle r_x,k(\cdot,x)\rangle_{H_k}=\sigma(x)^{2}$. It therefore suffices to show that every $f\in H_k(\Mfd)$ with $(\Lop f)(z_j)=0$ for all $j$ satisfies
\begin{equation}\label{eq:sigmagoal}
   \|f\|_{L^{\infty}(\Mfd)}\le C\,h_{N_c}^{\,\mathfrak s-l-d/2}\,\|f\|_{H_k},
\end{equation}
with $C$ independent of $f$, $x$ and $Z_{N_c}$. Indeed, applying~\eqref{eq:sigmagoal} to $f=r_x$ gives $\sigma(x)^{2}=r_x(x)\le C\,h_{N_c}^{\,\mathfrak s-l-d/2}\,\sigma(x)$, which is the claim, the case $\sigma(x)=0$ being trivial.

By \citet[Lemma~17.20]{Wendland_2004} and compactness, $\Mfd$ is covered by finitely many charts $(U_\ell,\varphi_\ell)$, $\ell=1,\dots,J$, such that each $\varphi_\ell$ extends to a homeomorphism from $\overline{U_\ell}$ onto the closed ball $\overline{\mathsf B}(o_\ell,b_\ell)\subset\R^{d}$, with $\varphi_\ell(U_\ell)=\mathsf B(o_\ell,b_\ell)$, and $m\,\|\varphi_\ell(y)-\varphi_\ell(v)\|_2\le d_g(y,v)$ for all $y,v\in\overline{U_\ell}$, with $m>0$ independent of $\ell$.
Fix $\ell$, assume $h_{N_c}<m\,b_\ell/2$, and let $w\in\varphi_\ell(U_\ell)$. Retract $w$ radially to the point $w'$ of the segment from $o_\ell$ to $w$ with $|w'-o_\ell|=\max\{0,|w-o_\ell|-2h_{N_c}/m\}$, so that $|w-w'|\le2h_{N_c}/m$ and $\mathsf B(w',2h_{N_c}/m)\subset\varphi_\ell(U_\ell)$, and put $y':=\varphi_\ell^{-1}(w')$. Every point at geodesic distance at most $h_{N_c}$ from $y'$ lies in $U_\ell$, since otherwise a minimizing geodesic from $y'$ would cross $\partial U_\ell$ at a point $v_0$ with $d_g(y',v_0)\le h_{N_c}$, hence $\|\varphi_\ell(v_0)-w'\|_2\le h_{N_c}/m$, while $\varphi_\ell(v_0)$ lies on the boundary sphere of $\mathsf B(o_\ell,b_\ell)$, at distance at least $2h_{N_c}/m$ from $w'$.
By definition of the fill distance there is $z_j\in Z_{N_c}$ with $d_g(y',z_j)\le h_{N_c}$, hence $z_j\in U_\ell$ and $\|\varphi_\ell(z_j)-w'\|_2\le h_{N_c}/m$. Therefore $\|\varphi_\ell(z_j)-w\|_2\le3h_{N_c}/m$, that is, the fill distance of $\varphi_\ell(Z_{N_c}\cap U_\ell)$ in $\varphi_\ell(U_\ell)$ is at most $3h_{N_c}/m$.

Let $f\in H_k(\Mfd)$ with $(\Lop f)(z_j)=0$ for all $j$, and set $g:=\Lop f$. By Assumption~\ref{ass:prior}, $H_k(\Mfd)\cong H^{\mathfrak s}(\Mfd)$, and since $\Lop$ has order $l$ and smooth coefficients, $g\in H^{\mathfrak s-l}(\Mfd)\subset C(\Mfd)$ with $\|g\|_{H^{\mathfrak s-l}(\Mfd)}\le C\|f\|_{H_k}$. For each $\ell$, the function $g\circ\varphi_\ell^{-1}\in H^{\mathfrak s-l}(\varphi_\ell(U_\ell))$ vanishes on $\varphi_\ell(Z_{N_c}\cap U_\ell)$, whose fill distance in the ball $\varphi_\ell(U_\ell)$ is at most $3h_{N_c}/m$. Writing $\mathfrak s-l=k'+s'$ with $k'\in\N$ and $s'\in(0,1]$, we have $k'\ge\mathfrak s-l-1>d/2$, so that \citet[Theorem~2.12]{narcowich2005sobolev}, in their notation with $p=2$, $q=\infty$ and $\alpha=0$, applies on the ball $\varphi_\ell(U_\ell)$ and gives, for $h_{N_c}$ small enough,
\[
   \|g\|_{L^{\infty}(U_\ell)}\le C\,h_{N_c}^{\,\mathfrak s-l-d/2}\,\|g\circ\varphi_\ell^{-1}\|_{H^{\mathfrak s-l}(\varphi_\ell(U_\ell))}\le C\,h_{N_c}^{\,\mathfrak s-l-d/2}\,\|f\|_{H_k}.
\]
Taking the maximum over the finitely many charts gives $\|\Lop f\|_{L^{\infty}(\Mfd)}\le C\,h_{N_c}^{\,\mathfrak s-l-d/2}\|f\|_{H_k}$, and the stability estimate gives~\eqref{eq:sigmagoal} with the constant multiplied by $C_{\mathrm{st}}$.
\end{proof}

\end{document}